\definecolor{DarkRed}{rgb}{0.75,0,0}
\definecolor{DarkGreen}{rgb}{0,0.5,0}
\definecolor{DarkBlue}{rgb}{0,0,0.5}
\definecolor{DarkPurple}{rgb}{0.5,0,0.5}
\newcommand{\indices}{\mathcal{I}}
\newcommand{\clusa}{\mathcal{A}}
\newcommand{\krn}[2]{\kappa\!\left(#1, #2\right)}
\newcommand{\krnmat}[2]{\kappa^{#1 #2}}
\newcommand{\ind}[1]{\mathbbm{1}\!\left[#1\right]}
\newcommand{\dtk}{ {{\Delta t}_{kt}} }
\newcommand{\dtkn}{ {{\Delta t}_{k(t+1)}} }
\newcommand{\tr}[1]{ \mathrm{tr}\left[#1\right] }
\DeclareMathOperator*{\argmin}{argmin}
\DeclareMathOperator*{\argmax}{argmax}
\newtheorem{theorem}{Theorem}
\newtheorem{lemma}{Lemma}
\newenvironment{proofof}[1]{\par
  \pushQED{\qed}%
  \normalfont \topsep6\p@\@plus6\p@\relax
  \trivlist
  \item[\hskip\labelsep
        \bfseries
    Proof of #1\@addpunct{.}]\ignorespaces
}{%
  \popQED\endtrivlist\@endpefalse
}
\begin{document}

\title{Dynamic Clustering Algorithms via Small-Variance Analysis of Markov Chain Mixture Models}

\author{Trevor Campbell\\
       Computer Science and Artificial Intelligence Laboratory\\
       Massachusetts Institute of Technology\\
       Cambridge, MA, USA\\
       \vspace{.3cm}
        \texttt{tdjc@mit.edu} \\
       Brian Kulis\\
       Department of Electrical and Computer Engineering\\
       Department of Computer Science\\
       Boston University\\
       Boston, MA, USA\\
      \vspace{.3cm}
        \texttt{bkulis@bu.edu}\\    
      Jonathan P.~How\\
       Laboratory for Information and Decision Systems\\
       Massachusetts Institute of Technology\\
       Cambridge, MA, USA\\
        \texttt{jhow@mit.edu} 
       }


\maketitle

\begin{abstract}
Bayesian nonparametrics are a class of probabilistic models in which the model
size is inferred from data. A
recently developed methodology in this field is \emph{small-variance
asymptotic analysis}, a mathematical technique for deriving 
learning algorithms that capture much of the flexibility of Bayesian nonparametric inference 
algorithms, but are simpler to implement and less computationally expensive. 
Past work on small-variance analysis of Bayesian
nonparametric inference algorithms has exclusively considered
batch models trained on a single, static dataset, which are incapable of
capturing time evolution in the latent structure of the data. This work presents
a small-variance analysis of the  
maximum a posteriori filtering problem for a temporally varying mixture model with a Markov dependence structure, 
which captures temporally evolving clusters
within a dataset. Two clustering algorithms
result from the analysis: D-Means, an iterative clustering algorithm
for linearly separable, spherical clusters; and SD-Means, a spectral 
clustering algorithm derived from a kernelized, relaxed version of the clustering
problem. Empirical results from experiments demonstrate the advantages of 
using D-Means and SD-Means over contemporary clustering 
algorithms, in terms of both computational cost and clustering accuracy.
\end{abstract}


\ifdefined\ieeetpamistyle
\IEEEraisesectionheading{\section{Introduction}\label{sec:introduction}}
\IEEEPARstart{B}{ayesian}
\else
\section{Introduction}
Bayesian
\fi
nonparametrics (BNPs) are a class of priors
which, when used in a probabilistic model, capture uncertainty in the number of
parameters comprising the model. Examples of BNP priors are
the Dirichlet process (DP)~\citep{Ferguson73_ANSTATS,Teh10_EML}, which is often
used in mixture models with an uncertain number of clusters, and the Beta
process (BP)~\citep{Hjort90_AS,Thibaux07_AISTATS}, which is often used 
in latent feature models with an uncertain number of features.
These models are powerful in their capability
to capture complex structures in data without requiring explicit model
selection; however,
they often require complicated and computationally expensive algorithms
for inference~\citep{Neal00_JCG, DoshiVelez09_ICML, Carvalho10_BA, Blei06_BA,
Hoffman12_AX}. This has hindered the wide-scale adoption of BNPs in
many contexts, such as mobile robotics, where performing inference in real-time on large volumes of
streaming data is crucial for timely decision-making.

A recent development in the BNP literature is the mathematical technique of \emph{small-variance asymptotic
analysis}. This technique typically involves examining the limiting behavior of
a BNP inference algorithm as the variance in some part of the model 
is reduced to zero. This yields a ``hard'' learning algorithm (e.g.~one in which cluster or latent
feature assignments are deterministic) that is simple and computationally
tractable, while retaining key characteristics of the original inference
procedure. Such small-variance
asymptotic analyses have been presented for the Gibbs sampling algorithm
for DP mixtures of Gaussians~\citep{Kulis12_ICML} and DP mixtures of exponential
families~\citep{Jiang12_NIPS}. It is also possible to use a similar
mathematical approach to analyze the BNP model
joint probability density itself, typically yielding a cost function
that is related to maximum a posteriori (MAP) estimation for the model 
in the small-variance limit. This approach has been 
applied to the DP Gaussian mixture and BP Gaussian
latent feature models~\citep{Broderick13_ICML}, and infinite-state hidden Markov
models~\citep{Roychowdhury13_NIPS}. 

Despite these advances, past small-variance analyses have focused solely on \emph{static} models, 
i.e.~models for which inference is performed on a single dataset, and whose
parameters are assumed to be fixed for all datapoints in that dataset.
The algorithms derived from previous small-variance asymptotic analyses
thus are not applicable when the data contains a time-evolving latent structure.
Furthermore, because these algorithms must be run on a single dataset, they become intractable as the amount
of collected data increases; they cannot break large datasets into smaller
subsets and process each incrementally, as they have no mechanism for storing
a reduced representation of each subset or transferring information between
them. Thus, they cannot be used on a large volume of streaming, evolving
data collected over a long duration,
such as that obtained by autonomous robotic systems making observations of
complex, dynamic environments~\citep{Endres05_RSS, Luber04_RSS, Wang08_RSS}, or
obtained by monitoring evolution of interactions in social
networks~\citep{Leskovec05_KDD}.

Thus, this work addresses the problem of clustering 
temporally evolving datastreams:
  Given a batch-sequential stream of vectors (i.e.~a sequence of
  vector datasets $\{y_{it}\}_{i=1}^{N_t}$, $t=1, 2, \dots$), discover
  underlying latent cluster centers $\{\theta_{kt}\}_{k=1}^{K_t}$, $t=1, 2, \dots$ and
  their temporal evolution in a manner that remains computationally tractable
  as the total amount of data $N = \sum_{\tau=1}^t N_\tau$ increases over time.
In order to tackle this problem, we employ a small-variance analysis of
the MAP filtering problem for a mixture model with a
covariate-dependent stochastic process prior.
While there are many stochastic processes that capture covariate-dependence and can serve as a prior over
temporally evolving mixtures~\citep{MacEachern99_ASA,Ahmed08_SDM,Blei11_JMLR,Caron07_UAI}, 
this work employs a temporally varying mixture model with a Markov chain time-dependence~\citep{Lin10_NIPS} 
(henceforth referred to as an MCMM). 
This particular model has two major advantages over past work: it explicitly 
captures the dynamics of the latent cluster structure through birth, death, and transition processes, and
allows the processing of data in a batch-sequential fashion.\footnote{It should be noted that the original work 
referred to the MCMM as a dependent Dirichlet process~\citep{Lin10_NIPS,Campbell13_NIPS}. This is actually not the case;
the transition operation on the underlying Poisson point process destroys the ability to form a one-to-one correspondence between
atoms at different timesteps~\citep{Chen13_ICML}. However, the fact that the MCMM lacks this link 
to a previously-developed model does not detract from its important ability to model cluster motion, death, and birth.}
In this paper, we show that analyzing the small-variance limit of the MAP
filtering problem for a MCMM with Gaussian likelihood (MCGMM) yields fast, flexible clustering algorithms for
datastreams with temporally-evolving latent structure.

The specific contributions of this work are as follows:
\begin{itemize}
  \item In Section \ref{sec:asymptotics}, we analyze the small-variance
    limit of the maximum a posteriori filtering problem for the MCGMM, and show that it results in an optimization
      problem with a K-Means-like clustering cost.  We then develop coordinate descent
      steps for this cost in Section \ref{sec:dynmeans}, yielding a batch-sequential clustering algorithm for data whose
    clusters undergo birth, death, and motion processes. Finally, we provide a
    reparameterization of the algorithm
    that is intuitive to tune in practice, and discuss practical aspects of
    efficient implementation.
  \item In Sections \ref{sec:krnspec}-\ref{sec:spectraldynmeans}, we first develop a
    kernelization of the small-variance MAP filering problem, including a
    recursive approximation for old cluster centers with theoretical guarantees 
    on the approximation quality. This extends the applicability of the
    small variance analysis to
    nonlinearly separable and nonspherical clusters. 
    Next, we derive a spectral relaxation of
    the kernelized minimization problem, providing a lower bound on the optimal 
    clustering cost. Based on the relaxation, we present an algorithm that generates a feasible clustering
    solution via coordinate descent and linear programming.
    The combination of kernelization, relaxation, and generation of a feasible
    solution yields a spectral counterpart to the iterative algorithm that is
    more flexible and robust to local optima, at the expense of increased
    computational cost.
  \item Experimental results are presented for both algorithms. Synthetic
    data (moving Gaussians, moving rings, and Gaussian processes) provide a rigorous examination of the performance
    trade-offs in terms of labelling accuracy and
    computation time for both algorithms alongside contemporary
    clustering algorithms. Finally, applications in clustering commercial aircraft
    trajectories, streaming point cloud data, and video color quantization are presented.
\end{itemize}
This paper extends previous work on Dynamic Means~\citep{Campbell13_NIPS}.
Proofs for all the theorems presented herein are provided
in the appendices.

\section{Background: Markov Chain Mixture Model} 
The Dirichlet process (DP) is a prior over discrete probability
measures~\citep{Sethuraman94_SS, Ferguson73_ANSTATS}. In general,
if $D$ is DP distributed, then realizations are probability measures
$D = \sum_k \pi_k \delta_{\theta_k}$ consisting of a countable number 
of discrete atoms $\delta_{\theta_k}$, each with measure $1$ and location
$\theta_k$ in some space, and weights $\pi_k$, with $\sum_k \pi_k =
1$~\citep{Sethuraman94_SS}. The two arguments to a Dirichlet process are
the \emph{concentration parameter}
$\alpha$, which determines how
the weights $\pi_k$ are distributed amongst the atoms, and
the \emph{base measure} $\mu$, which is the distribution from which atom locations $\theta_k$ are
sampled independently.  When used as a prior over mixture models,
the locations $\theta_k$ can be thought of as the parameters of the clusters,
while the weights $\pi_k$ are the weights in the categorical distribution of
labels. The reader is directed to~\citep{Teh10_EML} for an introduction to Dirichlet
processes.

Since $D$ is a probability measure, it is possible to take samples
$\tilde\theta_i$, $i=1, \dots, N$ independently from it. If $D$ is marginalized
out, the distribution of the samples follows the exchangeable probability partition function (EPPF)~\citep{Pitman95_PTRF}
\begin{align}
\hspace{-.4cm}p(\tilde\theta_1, \dots, \tilde\theta_N) &\propto \alpha^{K-1}\prod_{k : n_k > 0} \Gamma(n_k) \quad \text{where}\quad n_k= \sum_{i=1}^N \ind{\tilde\theta_i=\theta_k} \quad K = \sum_k \ind{n_k > 0}.
\end{align}
Alternatively, we specify the conditional distributions of the Chinese restaurant process (CRP) that 
sequentially builds up a set of cluster labels 
$z_i\in\mathbb{N}$, $i=1, \dots, N$ via
\begin{align}
p(z_i=k | z_1, \dots, z_{i-1}) &\propto \left\{\begin{array}{ll}
			n_k &  k\leq K, \quad \text{where}\quad n_k = \sum_{j=1}^{i-1}\ind{z_j=k}\\
			 \alpha & k=K+1
				\end{array}\right., \label{eq:labelcrp}
\end{align}
where the first line denotes the probability of observation $i$ joining an already instantiated cluster, while the second
denotes the probability of observation $i$ creating a new cluster.

The Markov chain mixture model (MCMM) described in the following, originally developed by~\citet{Lin10_NIPS}\footnote{We have modified the original
model to account for later work, in which the link to the dependent Dirichlet process was shown to be false~\citep{Chen13_ICML}.}, is heavily based on
the DP and CRP but includes a Markov chain dependence structure between clusters at sequential time steps $t\in\mathbb{N}$. 
Define the current set of labels $\mathbf{z}_t =
\{z_{it}\}_{i=1}^{N_t}$, the current batch of data $\mathbf{y}_t = \{y_{it}\}_{i=1}^{N_t}$, and
the current set of cluster centers $\bm{\theta}_{t} =
\{\theta_{kt}\}_{k=1}^{K_t}$. Define $n_{kt}$ to be the number of datapoints assigned to cluster $k$ at timestep
$t$, and let $c_{kt} = \sum_{\tau=1}^{t-1}n_{k\tau}$.
Define $\dtk$ to be the number of timesteps since cluster $k$ last generated data, i.e.~$\dtk = t - \left(\max_{\tau\in\mathbb{N}} \tau \text{ s.t. } n_{k\tau}>0\right)$. 
Further, define the past data $\mathbf{y}_{<t} =
\{\mathbf{y}_\tau\}_{\tau=1}^{t-1}$, past labels $\mathbf{z}_{<t} =
\{\mathbf{z}_\tau\}_{\tau=1}^{t-1}$, and most recent knowledge of 
past cluster parameters $\bm{\theta}_{t-\dtk} =
\{\theta_{k(t-\dtk)}\}_{k=1}^{K_{t-1}}$.
Finally, let $\clusa_t$ be the set of active clusters at timestep $t$, i.e.~$\clusa_t = \{ k \in \mathbb{N}, 1\leq k \leq K_t : n_{kt} >
0\}$, and $\indices_{kt}$ be the set of data indices assigned to cluster $k$ at
timestep $t$, i.e.~$\indices_{kt}=\{ i \in \mathbb{N} : 1\leq i \leq N_t,
z_{it}=k\}$.  

The prior on label assignments is specified, similarly to the CRP, in terms of a sequence of conditional distributions as follows:
\begin{align}
p(z_{it}=k | \mathbf{z}_{<t}, \left\{z_{jt}\right\}_{j=1}^{i-1})&\propto \left\{\begin{array}{ll}
			\alpha & k=K_t+1\\
			c_{kt}+n_{kt} &  k \in \clusa_t\\
                        q^\dtk c_{kt} & k \notin \clusa_t
				\end{array}\right. . \label{eq:mcmmlabelprior}
\end{align}
The parameter $q\in[0, 1]$ controls the cluster death process in this model; in between each pair of timesteps $t\to t+1$, 
the survival of each cluster $k \leq K_t$ is determined by an independent Bernoulli random variable with success probability $q$. If the trial
is a success, the cluster survives; otherwise it is removed and no longer generates data for all $\tau > t$. 
The factor of $q^\dtk$ above reflects the fact that if $\dtk$ timesteps have elapsed since a cluster $k$ was observed, it must have survived
$\dtk$ Bernoulli trials of probability $q$ in order to generate data in the present timestep. If cluster $k$ is currently active in timestep $t$ (i.e.~$n_{kt}>0$),
then a new datapoint is assigned to it with probability proportional to the total amount of data assigned to it in the past; this is the typical
Bayesian nonparametric ``rich get richer'' behavior. Finally, a new cluster is created with probability proportional to $\alpha > 0$. Note that at
the first timestep $t=1$, the above label assignment model reduces to the CRP.

The parameters $\theta_{kt}$ have a random walk prior:
\begin{align}
\theta_{kt} | \bm{\theta}_{t-1}\overset{\text{indep}}{\sim} \left\{\begin{array}{ll}
                              H & k > K_{t-1}\\
                             T(\cdot | \theta_{k(t-1)}) & k \leq K_{t-1}
                               \end{array}\right. , \quad k=1, \dots, K_t .\label{eq:mcmmparamprior}
\end{align}
where $H$ is the prior distribution for generating a new cluster parameter the first time it is instantiated, while $T$ is the 
distribution of the random walk that each cluster parameter $k$ undergoes between each pair of timesteps $t\to t+1$.

Finally, each observed datapoint $y_{it}$, $i=1, \dots, N_t$ is sampled independently from the likelihood $F$ parameterized for its respective cluster, 
i.e.~$y_{it} \overset{\text{indep}}{\sim} F(\cdot | \theta_{z_{it}t})$.

\section{Asymptotic Analysis of the Markov Chain Gaussian Mixture Model}\label{sec:asymptotics}
\subsection{The MAP Filtering Problem}\label{subsec:mapfilter}
The MCMM is used as a prior on a temporally evolving Gaussian mixture model, where the points $\theta_{kt}$ are used as the means for the clusters.
The specific distributions used in this model are a Gaussian transition distribution $T\left(\cdot |
\theta'\right) = \mathcal{N}\left(\theta', \xi  I\right)$, a Gaussian parameter
prior $H(\cdot) = \mathcal{N}(\phi, \rho I)$, and a Gaussian likelihood $F(\cdot | \theta) = \mathcal{N}(\theta, \sigma I)$.
The MCMM with this collection of distributions is referred to as a Markov chain Gaussian mixture model (MCGMM).
In this work, we consider the \emph{maximum a posteriori filtering problem},
in which the goal is to sequentially cluster a stream of batches of data,
thereby tracking the positions of the evolving cluster centers given all
observations from past timesteps. 
Fixing learned labellings from past time steps, we wish to solve
\begin{align}
  \begin{aligned}
  \max_{\mathbf{z}_t, \bm{\theta}_t} \, \,&
  p\left(\mathbf{z}_t, \bm{\theta}_t | 
  \mathbf{y}_t, \mathbf{y}_{<t}, \mathbf{z}_{<t}\right)\\
  =\max_{\mathbf{z}_t, \bm{\theta}_t} \, \,&
  p\left(\mathbf{y}_t | \mathbf{z}_t, \bm{\theta}_t\right)p\left(\mathbf{z}_t | \mathbf{z}_{<t}\right)
  \int_{\bm{\theta}_{t-\dtk}} \hspace{-.7cm}p\left(\bm{\theta}_t |
  \bm{\theta}_{t-\dtk}\right) 
  p\left(\bm{\theta}_{t-\dtk} | \mathbf{z}_{<t},
  \mathbf{y}_{<t}\right)\\
  =\min_{\mathbf{z}_t, \bm{\theta}_t} \, \,&
  -2\sigma \log \left[p\left(\mathbf{y}_t | \mathbf{z}_t, \bm{\theta}_t\right) p\left(\mathbf{z}_t | \mathbf{z}_{<t}\right)
  \int_{\bm{\theta}_{t-\dtk}} \hspace{-.7cm}p\left(\bm{\theta}_t |
  \bm{\theta}_{t-\dtk}\right) 
  p\left(\bm{\theta}_{t-\dtk} | \mathbf{z}_{<t},
\mathbf{y}_{<t}\right)\right].
\end{aligned}\label{eq:mapoptimization}
\end{align}
The data negative log likelihood is
\begin{align}
  \begin{aligned}
    p\left(\mathbf{y}_t | \mathbf{z}_t, \bm{\theta}_t\right) &=
    \prod_{i=1}^{N_t} \left( 2\pi
    \sigma\right)^{-\frac{d}{2}}\exp\left(-\frac{\|y_{it}-\theta_{z_{it}t}\|^2}{2\sigma}\right)\\
  \therefore -2\sigma\log p\left(\mathbf{y}_t | \mathbf{z}_t,
  \bm{\theta}_t\right) &=\sigma N_t
  d\log\left(2 \pi
  \sigma\right) + \sum_{i=1}^{N_t} || y_{it} - \theta_{z_{it}t} ||^2 .
  \end{aligned}
\end{align}

The label prior negative log likelihood can be derived from 
equation (\ref{eq:mcmmlabelprior}),
\begin{align}
  \begin{aligned}
    p\left(\mathbf{z}_t | \mathbf{z}_{<t}\right) &\propto
    \prod_{k \in \clusa_t}
    \left(\alpha\Gamma\left(n_{kt}+1\right)\right)^{\ind{\dtk>0}}
    \left(q^\dtk\frac{\Gamma\left(c_{kt}+n_{kt}+1\right)}{\Gamma\left(c_{kt}\right)}\right)^{\ind{\dtk=0}}\\
   \therefore -2\sigma\log p\left(\mathbf{z}_t | \mathbf{z}_{<t}\right) &=
   2\sigma C - 2\sigma\sum_{k\in\clusa_t}
   \left\{\begin{array}{ll}
\scriptstyle 
\log \alpha +\log\Gamma\left(n_{kt}+1\right)  & \scriptstyle\dtk = 0\\
      \scriptstyle\dtk \log q + \log\Gamma\left(n_{kt}+c_{kt}+1\right)-\log\Gamma\left(c_{kt}\right) & \scriptstyle\dtk > 0
    \end{array}\right.  ,
  \end{aligned}
\end{align}
 where $C$ is the log normalization constant. 
Finally, suppose (this assumption will be justified in Section \ref{sec:gaussianrecursion}) that the cluster center tracking prior
$p\left(\bm{\theta}_{t-\dtk} | \mathbf{z}_{<t},
\mathbf{y}_{<t}\right)$ at time $t$ is a product of Gaussian distributions with means
$\phi_{kt}$ and corresponding covariances
$\rho_{kt}$. Then
the cluster center prior in the current timestep is
\begin{align}
  \begin{aligned}
  &\int_{\bm{\theta}_{t-\dtk}} \hspace{-.7cm}p\left(\bm{\theta}_t |
  \bm{\theta}_{t-\dtk}\right) 
  p\left(\bm{\theta}_{t-\dtk} | \mathbf{z}_{<t},
  \mathbf{y}_{<t}\right) \\
  =&\prod_{k=1}^{K_t}
  \left\{\begin{array}{ll}
\left(2\pi\rho\right)^{-\frac{d}{2}}\exp\left(-\frac{\|\theta_{kt}-\phi\|^2}{2\rho}\right)
  & \dtk = 0\\
  \left(2\pi(\xi\dtk+\rho_{kt})\right)^{-\frac{d}{2}}\exp\left(-\frac{\|\theta_{kt}-\phi_{kt}\|^2}{2(\xi\dtk+\rho_{kt})}\right)
  & \dtk > 0
\end{array}\right.\\
  \therefore -2\sigma\log &\int_{\bm{\theta}_{t-\dtk}} \hspace{-.7cm}p\left(\bm{\theta}_t |
  \bm{\theta}_{t-\dtk}\right) 
  p\left(\bm{\theta}_{t-\dtk} | \mathbf{z}_{<t},
  \mathbf{y}_{<t}\right) \\
=&\sum_{k=1}^{K_t}
  \left\{\begin{array}{ll}
    \sigma d\log\left(2\pi\rho\right)+\frac{\sigma\|\theta_{kt}-\phi\|^2}{\rho}
  & \dtk = 0\\
  \sigma
  d\log\left(2\pi(\xi\dtk+\rho_{kt})\right)+\frac{\sigma\|\theta_{kt}-\phi_{kt}\|^2}{\xi\dtk+\rho_{kt}}
  & \dtk > 0
\end{array}\right. .
  \end{aligned}
\end{align}

\subsection{Small-Variance Analysis}

We now have all the components of the MAP filtering problem cost function.
However, as it stands, the problem (\ref{eq:mapoptimization}) is a difficult combinatorial optimization 
due to the label prior. 
Thus, rather than solving it directly, we analyze its
small-variance limit. This will effectively remove the troublesome label prior cost components
and allow the derivation of fast K-Means-like coordinate descent updates.

Small-variance analysis in this model requires the scaling of the concentration $\alpha$,
the transition variance $\xi$, the subsampling probability $q$ and the 
old cluster center prior variances $\rho_{kt}$ with the
observation variance $\sigma$ in order to find
meaningful assignments of data to clusters. Thus, we set $\alpha$, $\xi$,
$q$, and $\rho_{kt}$
 to
\begin{align}
  \hspace{-.3cm}\begin{array}{c c c c}
    \alpha = (1+\rho/\sigma)^{d/2}\exp\left({-\frac{\lambda}{2\sigma}}\right),
&
  \xi = \tau \sigma,
  & 
  q =  \exp\left({-\frac{Q}{2 \sigma}}\right)
  & \text{and }
  \rho_{kt} = \frac{\sigma}{w_{kt}}.
\end{array} \label{eq:smallvariancescaling}
\end{align}

Finally, taking the limit $\sigma \to 0$ of (\ref{eq:mapoptimization}) yields 
the small-variance MAP filtering problem for the MCGMM,
\begin{align}
  \begin{aligned}
\min_{\mathbf{z}_t, \bm{\theta}_t} &
\sum_{k\in\clusa_t}\left(\overbrace{\lambda\ind{\dtk}}^{\text{New Cost}}+\hspace{-.3cm}\overbrace{Q\dtk}^{\text{Revival
Cost}}\hspace{-.3cm}+
  \overbrace{\gamma_{kt}||\theta_{kt} -
  \phi_{kt}||^2_2 +  \sum_{i\in \indices_{kt}} ||y_{it} -
\theta_{kt}||_2^2}^{\text{Weighted-Prior Sum-Squares
Cost}}\right)\label{eq:DMeansObj}\!.
\end{aligned}
\end{align}
The $\gamma_{kt}$ are used for notational brevity, and are defined as 
\begin{align}
  \gamma_{kt} = \left(w_{kt}^{-1} + \tau\dtk\right)^{-1}.
  \label{eq:gammaupdate}
\end{align}
The cost (\ref{eq:DMeansObj}) at timestep $t$ (henceforth denoted $J_t$) 
is a K-Means-like cost comprised of a number of components for each
active cluster $k \in \clusa_t$ (i.e.~those with data assigned to them): a penalty
for new clusters based on $\lambda$, a penalty for old clusters based
on $Q$ and $\dtk$, and finally a prior-weighted sum of squared
distance cost for all the observations in cluster $k$. 
The quantities $\gamma_{kt}$ and $\phi_{kt}$ transfer the knowledge about cluster $k$
from timestep $(t-\dtk)$ prior to incorporating any new data at timestep $t$.

\subsection{Recursive Update for Old Cluster Centers, Weights, and Times}\label{sec:gaussianrecursion}
Recall the assumption made in Section \ref{subsec:mapfilter} that $p\left(\bm{\theta}_{(t-\dtk)} |
\mathbf{z}_{<t}, \mathbf{y}_{<t}\right)$ is a product of Gaussian distributions
with means $\phi_{kt}$ and variances $\rho_{kt}=\frac{\sigma}{w_{kt}}$. We
now justify this assumption by showing that
if the assumption holds at timestep $t$, it will hold at timestep $t+1$,
and leads to a recursive update for $w_{kt}$ and $\phi_{kt}$.

Including the newly fixed labels from timestep $t$ yields
\begin{align}
  \begin{aligned}
  p\left(\bm{\theta}_t | \mathbf{z}_{<t}, \mathbf{z}_{t}, \mathbf{y}_{<t},
  \mathbf{y}_{t}\right) &\propto p\left(\mathbf{y}_t | \bm{\theta}_t,
  \mathbf{z}_t\right)\int_{\bm{\theta}_{t-\dtk}}p\left(\bm{\theta}_t |
  \bm{\theta}_{t-\dtk}\right) p\left(\bm{\theta}_{t-\dtk} | \mathbf{z}_{<t},
  \mathbf{y}_{<t}\right)\\
  &=
  \prod_{k=1}^{K_t}\mathcal{N}\left(\frac{\gamma_{kt}\phi_{kt}+\sum_{i\in
    \indices_{kt}}y_{it}}{\gamma_{kt}+n_{kt}},
  \frac{\sigma}{\gamma_{kt}+ n_{kt}}\right).
\end{aligned}\label{eq:recursivegaussian}
\end{align}
This shows that if the cluster center tracking prior is Gaussian with mean
$\phi_{kt}$ and variance $\rho_{kt}$ that scales with $\sigma$, 
then the new cluster center tracking prior after clustering the dataset at 
time $t$ is also Gaussian with a variance $\rho_{k(t+1)}$ that scales with
$\sigma$. Thus, the Gaussian assumption holds between timesteps, 
and the recursive update scheme for the old cluster centers
$\phi_{kt}$, the weights $w_{kt}$, and times 
since cluster $k$ was last observed $\dtk$ (applied after 
fixing the labels $\mathbf{z}_t$ at timestep $t$) is 
\begin{align}
  \begin{array}{rccc}
    \text{Condition} & \phi_{k(t+1)} & w_{k(t+1)} & \dtkn\\
    \midrule
    n_{kt} = 0 & \phi_{kt} & w_{kt} & \dtk+1\\
    n_{kt}>0 & \frac{\gamma_{kt}\phi_{kt}+\sum_{i\in\indices_{kt}}y_{it}}{\gamma_{kt}+n_{kt}} & \gamma_{kt}+n_{kt} & 1
  \end{array} \label{eq:wupdate}
\end{align}
where $n_{kt}$ is the number of datapoints assigned to cluster $k$ at 
timestep $t$, and $w_{kt} = \gamma_{kt} = 0$ for new clusters.

\section{Dynamic Means (D-Means)}\label{sec:dynmeans}
\subsection{Algorithm Description}
As shown in the previous section, the small-variance asymptotic limit of the
MAP filtering problem for the MCGMM is 
a hard clustering problem with a K-Means-like objective. Inspired by the
original K-Means algorithm~\citep{Lloyd82_IEEETIT}, we develop a coordinate descent algorithm
to approximately optimize this objective.

\paragraph{Label Update}
The minimum cost label assignment for datapoint $y_{it}$ is found by fixing all
other labels and all the cluster centers, and computing the cost for assignment
to each instantiated and uninstantiated cluster:
\begin{align}
  z_{it} = \argmin_{k}
          \left\{\begin{array}{l l}
          ||y_{it} - \theta_{kt}||^2 \quad &\text{if $\theta_k$ instantiated,
          i.e.~}k\in\clusa_t\\
          Q\dtk + \frac{\gamma_{kt}}{\gamma_{kt}+1}|| y_{it} - \phi_{kt}||^2 \quad &\text{if
            $\theta_k$ old, uninstantiated, i.e.~} k \notin \clusa_t\\
          \lambda \quad &\text{if $\theta_k$ new, i.e.~} k = K_t+1 \, .
        \end{array}\right.  \label{eq:labelupdate}
\end{align}
In this assignment step, $Q\dtk$ acts as a cost 
penalty for reviving old clusters that increases with the time
since the cluster was last seen, $\frac{\gamma_{kt}}{\gamma_{kt}+1} =
\frac{1}{w_{kt}^{-1}+\dtk \tau + 1}$ acts as a cost reduction to account for the possible motion of
clusters since they were last instantiated, and $\lambda$ acts as a cost penalty
for introducing a new cluster.

\paragraph{Parameter Update}
The minimum cost value for cluster center $k$ is found by fixing all other
cluster centers and all the data labels, taking the derivative and setting to
zero:
\begin{align}
  \theta_{kt} &= \frac{\gamma_{kt}\phi_{kt} + \sum_{i\in \indices_{kt}}y_{it}}
    {\gamma_{kt} + n_{kt}}.\label{eq:paramupdate}
\end{align}
In other words, the revived $\theta_{kt}$ is a weighted 
average of estimates using current
timestep data and previous timestep data. $\tau$ controls how much the current 
data is favored---as $\tau$ increases, the weight on current data increases,
which is explained by the fact
that uncertainty in where the old cluster center transitioned to increases with
$\tau$. It is also noted that if $\tau = 0$, this reduces to a simple weighted
average using the amount of data collected as weights, and if $\tau \to \infty$, 
this reduces to just taking the average of the current batch of data.

An interesting interpretation of this update, when viewed in combination
with the recursive update scheme for $w_{kt}$, is that it behaves like 
a standard Kalman filter in which $w_{kt}^{-1}$ serves as the 
current estimate variance, $\tau$ serves as the 
process noise variance, and $n_{kt}$
serves as the inverse of the measurement variance.

\paragraph{Coordinate Descent Algorithm}
Combining these two updates in an iterative scheme (shown in Algorithm
\ref{alg:dynmeans}) results in
Dynamic Means\footnote{Code available online:
  \url{http://github.com/trevorcampbell/dynamic-means}} (D-Means), an algorithm that
clusters sequential batches of observations, carrying
information about the means and the confidence in those means
forward from time step to time step. 
D-Means is guaranteed to converge to a local optimum in the objective
(\ref{eq:DMeansObj}) via typical monotonicity
arguments~\citep{Lloyd82_IEEETIT,Kulis12_ICML}. Applying Algorithm \ref{alg:dynmeans} to a sequence of batches of data 
yields a clustering procedure that retains much of the flexibility of the
MCGMM, in that it is able to track a set of
dynamically evolving clusters, and allows new clusters to emerge and old clusters to be forgotten. 
While this is its primary application, the 
sequence of batches need not be a temporal sequence. For example, Algorithm
\ref{alg:dynmeans} may be used as an any-time clustering algorithm for
large datasets, where $Q$ and $\tau$ are both set to 0, and the sequence of batches is generated by selecting
random disjoint subsets of the full dataset. In this application, the setting of
$Q$ and $\tau$ to 0 causes the optimal cluster means for the sequence of data
batches to converge to the optimal cluster means 
as if the full dataset were clustered in one large batch using
DP-means~\citep{Kulis12_ICML} (assuming all assignments in each timestep are
correct).
A final note to make is that D-Means is equivalent to the small-variance Gibbs sampling
algorithm for the MCGMM; interested readers may consult Appendix \ref{app:gibbsderiv} for a
brief discussion.
\begin{algorithm}[t!]
      \captionsetup{font=small}
      \caption{Dynamic Means}\label{alg:dynmeans}
      \small
  \begin{algorithmic}
    \Require $\left\{\{y_{it}\}_{t=1}^{N_t}\right\}_{t=1}^{t_{f}}$, $Q$, $\lambda$, $\tau$
    \State $K_0 \gets 0$
    \For{$t = 1 \to t_f$}
      \State $\{\gamma_{kt}\}_{k=1}^{K_{t-1}} \gets $ Eq.~(\ref{eq:gammaupdate})
      \State $K_t \gets K_{t-1}$
      \State $J^{\text{prev}}_t \gets \infty$, $J_t \gets \infty$
      \Repeat
      \State $J^{\text{prev}}_t \gets J_t$
      \State $\{z_{it}\}_{i=1}^{N_t} \gets$ Eq.~(\ref{eq:labelupdate})
      \State For each new cluster $k$ that was created, set $\gamma_{kt} \gets 0, \dtk \gets 0$ $K_t\gets K_t +1$
      \State For each new cluster $k$ that was destroyed, $K_t \gets K_t - 1$
      \State $\{\theta_{kt}\}_{k=1}^{K_t} \gets$ Eq.~(\ref{eq:paramupdate})
      \State $J_t \gets$ Eq.~(\ref{eq:DMeansObj})
      \Until{$J_t = J^{\text{prev}}_t$}
      \State $\{\phi_{k(t+1)}, w_{k(t+1)}, \dtkn\}_{k=1}^{K_t} \gets$ Eq.~(\ref{eq:wupdate})
    \EndFor
    \State \Return {$\{ \{z_{it}\}_{i=1}^{N_t}\}_{t=1}^{t_f},   
       \{\{\theta_{kt}\}_{k=1}^{K_t} \}_{t=1}^{t_f}$}
  \end{algorithmic}
\end{algorithm}

  \subsection{Practical Aspects and Implementation}
\subsubsection{Label Assignment Order}
One important caveat to note about D-Means is that, while it is a deterministic algorithm, its performance
depends on the order in which equation (\ref{eq:labelupdate}) is used
to assign labels. In practice this does not have a large effect, since D-Means creates clusters as needed
on the first round of label assignment, and so typically starts with a good
solution. However, if required, multiple random restarts of the algorithm with
different assignment orders may be used to mitigate the dependence. If this
is implemented, the lowest cost clustering out of all the random restarts
at each time step should be used to proceed to the next time step.

\subsubsection{Reparameterizing the Algorithm}
In order to use the Dynamic Means 
algorithm, there are three free parameters to
select: $\lambda$, $Q$, and $\tau$. While $\lambda$ represents
how far an observation can be from a cluster before it
is placed in a new cluster, and thus can be tuned 
intuitively, $Q$ and $\tau$ are not so straightforward. The 
parameter $Q$ effectively adds artificial distance between an observation and
an old cluster center, diminishing the ability for that cluster to explain the
observation and thereby capturing cluster death. The parameter $\tau$
artificially reduces the distance between an
observation and an old cluster center, enhancing the ability for that cluster to
explain the observation and thus accounting for cluster motion. How these two quantities affect
the algorithm, and how they interact with the setting of $\lambda$,
is hard to judge.

Instead of picking $Q$ and $\tau$ directly, the algorithm may be
reparameterized by picking $ T_Q, k_\tau \in \mathbb{R}_+$, $T_Q > 1$,
$k_\tau \geq 1$, and given
a choice of $\lambda$, setting
\begin{align}
\begin{aligned}
  Q =& \lambda/T_Q \quad
  \tau = \frac{T_Q(k_\tau-1) + 1}{T_Q-1}.
\end{aligned}
\end{align}
If $Q$ and $\tau$ are set in this manner, $T_Q$ represents the number
(possibly fractional) of
time steps a cluster can be unobserved before the label update (\ref{eq:labelupdate}) will never revive
that cluster, and $k_\tau \lambda$ 
represents the maximum squared distance away from a cluster 
center such that after a single time step, the
label update (\ref{eq:labelupdate}) will
revive that cluster. As $T_Q$ and $k_\tau$ 
are specified in terms of concrete algorithmic behavior, they are 
intuitively easier to set than $Q$ and $\tau$.

\subsubsection{Tuning Parameters}\label{subsubsec:tuning}
While D-Means has three tuning parameters ($\lambda$, $T_Q$ and $k_\tau$), in practice the algorithm is fast 
enough that a grid search is often sufficient. However,
in cases where this isn't possible, it is best to first tune $\lambda$ by running
D-Means on individual batches without considering any temporal effects
(i.e.~setting $Q = 0$ and $\tau = \infty$, since $\lambda$ only controls the
size of clusters), and then tune $T_Q$ and $k_\tau$ together afterwards with
fixed $\lambda$. Note that the size of clusters guides the choice 
of $\lambda$, the size of their transition steps
guides the choice of $k_\tau$, and
how frequently they are destroyed guides the choice of $T_Q$.

\subsubsection{Deletion of Clusters}\label{subsubsec:deletion}
When the revival penalty $Q\dtk$ exceeds $\lambda$ for an old cluster,
it will never be reinstantiated by the algorithm because it will always be 
less costly to create a new cluster. Therefore, an optimized implementation of
D-Means can safely remove such clusters from consideration to save memory 
and computation time.

\section{Kernelization \& Relaxation of the Small-Variance Filtering
Problem}\label{sec:krnspec}
D-Means tends to perform best when the clusters are roughly 
spherical and linearly separable. While it is possible to redefine the cost 
equation (\ref{eq:DMeansObj}) in terms of more general distance metrics or
divergences~\citep{Banerjee05_JMLR}, clusters still must be spherical and
linearly separable using whichever distance metric or divergence is
chosen. When this assumption is violated, D-Means tends to create too many or
too few clusters to properly explain the data.

This section addresses this limitation in D-Means through
\emph{kernelization}~\citep{Dhillon07_TPAMI,Hofmann08_ANSTATS} of
the small-variance MAP filtering problem (\ref{eq:DMeansObj}).
Rather than clustering data vectors themselves at each time step,
a nonlinear embedding map is first applied to the vectors, and the embedded data 
is clustered instead. Crucially, the embedding map need not be known explicitly;
it is specified implicitly through a kernel function. Given an appropriate
kernel function, nonlinearly separable and nonspherical clusters in the original
data vector space become spherical and linearly separable in the embedding
space, thereby allowing the use of D-Means in this space. 

Iterative clustering methods for kernelized costs are particularly 
susceptible to getting trapped in poor local cost minima~\citep{Dhillon07_TPAMI}. 
Thus, this section additionally discusses a 
\emph{spectral relaxation} of the clustering 
problem~\citep{Zha01_NIPS,Kulis12_ICML}, where some
of the clustering constraints are removed, yielding
a relaxed problem that may be solved globally via 
eigendecomposition. The relaxed solution is then
refined to produce a feasible clustering that is typically
less susceptible to local clustering minima. These two steps together
produce SD-Means, a spectral clustering algorithm for temporally evolving batch-sequential
data.

\subsection{Kernelization of the Cost}
Instead of clustering the data vectors directly, suppose 
we first map the vectors via the nonlinear embedding $\omega : \mathbb{R}^d
\to \mathbb{R}^n$. Then the least squares minimizer for the 
cluster centers in the new space is
\begin{align}
  \theta_{kt} &= \frac{\gamma_{kt}\phi_{kt} +
  \sum_{i\in\indices_{kt}}\omega(y_{it})}{\gamma_{kt}+n_{kt}} \in \mathbb{R}^n.\label{eq:nonlineartheta}
\end{align}
Substituting (\ref{eq:nonlineartheta}) and replacing $y_{it}$ with its nonlinear
embedding $\omega(y_{it})$ in (\ref{eq:DMeansObj}) yields the following 
clustering problem:
\begin{align}
\begin{aligned}
  \min_{\mathbf{z}_t} \sum_{i=1}^{N_t} \krnmat{Y}{Y}_{ii} + \sum_{k\in\clusa_t}
  \!\lambda\ind{\dtk}+Q\dtk+
    \frac{\gamma_{kt}n_{kt}\krnmat{\Phi}{\Phi}_{kk}}{\gamma_{kt}+n_{kt}} 
    -\!\sum_{i\in\indices_{kt}}\!\frac{2\gamma_{kt}\krnmat{Y}{\Phi}_{ik}+\sum_{j\in\indices_{kt}}\krnmat{Y}{Y}_{ij}}{\gamma_{kt}+n_{kt}}
  \end{aligned}\label{eq:krnmin}
\end{align}
where $\krnmat{Y}{Y}_{ij} = \omega(y_{it})^T\omega(y_{jt})$,
$\krnmat{Y}{\Phi}_{ik} = \omega(y_{it})^T\phi_{kt}$, and
$\krnmat{\Phi}{\Phi}_{kk} = \phi_{kt}^T\phi_{kt}$. Since $\phi_{kt}$ is not
known (as it lies in the embedding space), computing $\krnmat{Y}{\Phi}$
and $\krnmat{\Phi}{\Phi}$ requires that $\phi_{kt}$ is expressed
in terms of past data assigned to cluster $k$, 
\begin{align}
  \phi_{kt} &=
  \sum_{\tau\in\mathcal{T}_k}\left(\frac{1}{\gamma_{k\tau}}\prod_{s\in\mathcal{T}_k
  :
s \geq \tau}\frac{\gamma_{ks}}{\gamma_{ks}+n_{ks}}\right)
  \sum_{i\in\indices_{k\tau}}\omega(y_{i\tau}),
  \label{eq:phiexpansion}
\end{align}
where $\mathcal{T}_k$ is the set of all past timesteps where cluster $k$ was
active. Note
that the entire cost may now be specified in terms of dot products of data vectors in the embedding
space. Therefore, only the \emph{kernel function} $\krn{\cdot}{\cdot}
\!:\!\mathbb{R}^d\times \mathbb{R}^d
\to \mathbb{R}$ needs to be known, with the nonlinear embedding $\omega$ left
implicit.

\subsection{Recursive Approximation of Old Cluster Centers}
An issue with the above procedure is that computing $\krnmat{Y}{\Phi}$ and
$\krnmat{\Phi}{\Phi}$ via (\ref{eq:phiexpansion}) requires storing and computing
the kernel function with all past data. Over time, as the amount of 
observed data increases, this becomes intractable. Therefore, we instead
recursively approximate $\phi_{kt}$ with a linear combination of
a sparse subset of past data assigned to cluster $k$. Suppose
there is an approximation budget of $m$ past data vectors, and $\phi_{kt}$ 
is approximated as
\begin{align}
  \phi_{kt} &\simeq \sum_{j=1}^m a_j \omega(u_j),
\end{align}
where $a_j \in \mathbb{R}$, and $u_j \in \mathbb{R}^d$ (subscripts
$k$ and $t$ are suppressed for brevity). Then after
clustering the data $\{\omega(y_{it})\}_{i=1}^{N_t}$ at time $t$,
if $k\in\clusa_t$ the old center $\phi_{k(t+1)}$ is updated via
\begin{align}
  \phi_{k(t+1)} \simeq
  \frac{\gamma_{kt}\sum_{j=1}^m a_j
  \omega(u_j)+\sum_{i\in\indices_{kt}}\omega(y_{it})}{\gamma_{kt}+n_{kt}} =
  \sum_{j=1}^{m+n_{kt}} \bar{a}_j\omega(v_j)
  \label{eq:phiinitapprox}
\end{align}
where $\{u_j\}_{j=1}^m\bigcup\{y_{it}\}_{i\in\indices_{kt}}$ have been renamed
$\{v_j\}_{j=1}^{m+n_{kt}}$, $\bar{a}_j=
\frac{\gamma_{kt}}{\gamma_{kt}+n_{kt}} a_j \, \forall 1\leq j \leq m$, 
and $\bar{a}_{j} = \frac{1}{\gamma_{kt}+n_{kt}} \, \forall m+1\leq j \leq m+n_{kt}$.
As (\ref{eq:phiinitapprox}) requires storing $m+n_{kt}$ vectors, $\phi_{k(t+1)}$ must
be reapproximated with a subset containing $m$ vectors. This is accomplished
by solving a sparse regressor selection problem, 
\begin{align}
  \begin{aligned}
a^\star &\gets 
\argmin_{x\in\mathbb{R}^{m+n_{kt}}, \, \mathrm{card}(x) \leq m} 
  \left(\bar{a}-x\right)^TW\left(\bar{a}-x\right)\\
  \phi_{k(t+1)} &\gets\hspace{-.3cm} \sum_{j : |a^\star_j|> 0}\hspace{-.3cm}
a^\star_j\omega(v_j) \, \, .
  \end{aligned}
\label{eq:krnphiupdate}
\end{align}
where $W \in \mathbb{R}^{(m+n_{kt})\times (m+n_{kt})}$, $W_{ij} = \omega(v_i)^T\omega(v_j) = \krn{v_i}{v_j}$.
This optimization can be solved approximately using a greedy approach
(iteratively selecting the regressor that provides the largest reduction in
cost) or using $l_1$-regularized quadratic
programming~\citep{Tibshirani96_JRSTATS}.
The successive application of the approximation (\ref{eq:krnphiupdate})
is stable in the sense that relative error between the true old center
and its approximation does not grow without bound over time, as
shown by Theorem \ref{thm:phierrorbound}.
\begin{theorem}\label{thm:phierrorbound}
  Let $\phi^\star_{kt}$ be the true old
  center $k$ at time $t$ given by (\ref{eq:phiexpansion}),
  and $\phi_{kt}$ be the approximation given by 
  (\ref{eq:krnphiupdate}). If
  after every time step, (\ref{eq:krnphiupdate})
  is solved with objective value less than $\epsilon^2$,
  i.e.~$(\bar{a}-a^\star)^TW(\bar{a}-a^\star) <\epsilon^2$, 
  then $\|\phi_{kt}-\phi^\star_{kt}\|_2 < \epsilon
  \left(1+\frac{1}{\tau}\right)\, \, \forall t \in \mathbb{N}$.
\end{theorem}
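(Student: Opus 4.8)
The plan is to track the error vector $e_{kt} := \phi_{kt} - \phi^\star_{kt}$ living in the embedding space and show it obeys a contractive recursion driven by the per-step sparsification residuals. The first observation I would make is that, since the matrix $W$ in (\ref{eq:krnphiupdate}) is the Gram matrix of the vectors $\{\omega(v_j)\}$, the objective value $(\bar a - a^\star)^T W (\bar a - a^\star)$ is exactly $\|\sum_j \bar a_j \omega(v_j) - \sum_j a^\star_j \omega(v_j)\|_2^2$, i.e.~the squared norm of the increment discarded by the sparsification step. Hence the standing hypothesis forces each reapproximation to perturb the maintained center by an additive vector $\delta_{k(t+1)}$ with $\|\delta_{k(t+1)}\|_2 < \epsilon$.

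Next I would derive the recursion for $e_{kt}$. On a timestep where cluster $k$ is inactive ($n_{kt}=0$), both $\phi_{kt}$ and $\phi^\star_{kt}$ are carried forward unchanged by (\ref{eq:wupdate}), so $e_{k(t+1)} = e_{kt}$. On a timestep where $k \in \clusa_t$, the true center updates as $\phi^\star_{k(t+1)} = (\gamma_{kt}\phi^\star_{kt} + \sum_{i\in\indices_{kt}}\omega(y_{it}))/(\gamma_{kt}+n_{kt})$, while the approximation forms the same convex combination with $\phi_{kt}$ in place of $\phi^\star_{kt}$ and then adds $\delta_{k(t+1)}$; subtracting gives $e_{k(t+1)} = \frac{\gamma_{kt}}{\gamma_{kt}+n_{kt}} e_{kt} + \delta_{k(t+1)}$. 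Restricting attention to the subsequence $t_1 < t_2 < \cdots$ of timesteps on which $k$ is active (the error is constant in between), and unrolling this recursion — noting $\gamma_{kt_1}=0$ at creation, so the first contraction factor vanishes and the chain starts cleanly — the error after the $J$-th active step is $\sum_{j\le J}\delta_{kt_j}\prod_{l=j+1}^{J}\frac{\gamma_{kt_l}}{\gamma_{kt_l}+n_{kt_l}}$, whose norm is at most $\epsilon\sum_{j\le J}\prod_{l=j+1}^{J}\frac{\gamma_{kt_l}}{\gamma_{kt_l}+n_{kt_l}}$.

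The crux is bounding each contraction factor. Using (\ref{eq:wupdate}), at an active step $t_l$ with $l\ge 2$ the weight satisfies $w_{kt_l} = \gamma_{kt_{l-1}}+n_{kt_{l-1}} > 0$, and by (\ref{eq:gammaupdate}) we have $\gamma_{kt_l}^{-1} = w_{kt_l}^{-1} + \tau\,\Delta t_{kt_l} \ge \tau$ because $\Delta t_{kt_l}\ge 1$; hence $\gamma_{kt_l}\le 1/\tau$. Combined with $n_{kt_l}\ge 1$ for an active cluster, this yields $\frac{\gamma_{kt_l}}{\gamma_{kt_l}+n_{kt_l}}\le \frac{1/\tau}{1/\tau+1} = \frac{1}{1+\tau}$. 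Therefore each product of $J-j$ such factors is at most $(1+\tau)^{-(J-j)}$, and summing the geometric series gives $\sum_{j\le J}(1+\tau)^{-(J-j)} < \frac{1}{1-(1+\tau)^{-1}} = 1+\frac{1}{\tau}$, so $\|e_{k(t_J+1)}\|_2 < \epsilon(1+1/\tau)$; since the error does not change on inactive steps, the bound holds at every $t\in\mathbb{N}$.

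I do not anticipate a serious obstacle — the argument is a clean telescoping bound — but the step requiring the most care is establishing $\gamma_{kt_l}\le 1/\tau$: it hinges on correctly propagating $w_{kt}$ and $\Delta t_{kt}$ through (\ref{eq:wupdate}) across the (possibly many) inactive steps separating consecutive active steps, and on the fact that $\Delta t_{kt_l}\ge 1$ whenever a cluster is revived, which is what makes the transition term $\tau\,\Delta t_{kt_l}$ dominate $\gamma_{kt_l}^{-1}$. A second, milder point is the bookkeeping at the creation step, where $\gamma_{kt_1}=0$ makes the leading contraction factor zero, so the error chain effectively begins at the first post-creation active step and no initial-condition term survives.
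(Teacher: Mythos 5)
Your proposal is correct and follows essentially the same route as the paper's proof: decompose the new error into the sparsification residual (whose norm is bounded by $\epsilon$ via the Gram-matrix identity) plus the previous error contracted by $\frac{\gamma_{kt}}{\gamma_{kt}+n_{kt}} \le \frac{1}{1+\tau}$ (using $\dtk \ge 1$ and $n_{kt}\ge 1$), then sum the geometric series. Your treatment is somewhat more careful than the paper's about inactive timesteps and the $\gamma_{kt_1}=0$ initial condition, but these are bookkeeping refinements rather than a different argument.
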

\begin{proof}
  See Appendix \ref{app:phiapproxproof}.
\end{proof}

\subsection{Spectral Relaxation}
In order to solve (\ref{eq:krnmin}), we develop a spectral clustering method,
which requires the objective to be expressed
as a constrained matrix trace minimization~\citep{Zha01_NIPS,Kulis12_ICML}. 
However, it is not possible to include the old cluster penalties
$\sum_{k\in\clusa_t}Q\dtk$ in such a formulation. Thus, the penalty 
is modified as follows:
\begin{align}
  \sum_{k\in\clusa_t} Q\dtk \simeq \sum_{k\in\clusa_t}
  \frac{n_{kt}}{\gamma_{kt}+n_{kt}}Q\dtk.
\end{align}
This modification preserves the value of the original penalty 
when $n_{kt} = 0$ (the cluster is uninstantiated) or when $\dtk = 0$ (the cluster is
new), as it incurs a penalty of $0$ in these cases. 
Otherwise, if $\gamma_{kt}$ is large, the cluster was very popular in past timesteps;
therefore, instantiating the cluster has a low cost. Likewise, if
$\gamma_{kt}$ is small, the cluster was not popular, and
$\frac{n_{kt}}{\gamma_{kt}+n_{kt}} \approx 1$, so it is more costly to instantiate
it. Thus, the modified penalty makes sense in that ``the rich get richer'',
similar to the behavior of the Dirichlet process. Further, this modification
strictly reduces the cost of any clustering, and thus preserves the lower
bounding property of the exact spectral relaxation.

Given this modification, it is possible to rewrite the kernelized clustering
problem (\ref{eq:krnmin}) as a trace minimization,
\begin{align}
  \begin{aligned}
    \min_{Z} \tr{\krnmat{Y}{Y}} +
    \tr{\Gamma^{\frac{T}{2}} \krnmat{\Phi}{\Phi}\Gamma^{\frac{1}{2}}} + \tr{\Omega -
    \lambda I}-\tr{Z^T\hat{\Gamma}^{\frac{T}{2}}(G-\lambda
    I)\hat{\Gamma}^{\frac{1}{2}}Z}, 
  \end{aligned}\label{eq:spectralmin}
\end{align}
where $Z$, $G$ and $\hat{\Gamma}$ are defined as
\begin{align}
\label{eq:ZGdefine}
\hspace{-.3cm}Z = \hat{\Gamma}^{\frac{1}{2}}\left[\begin{array}{ccc}
    \! \! c_1 &\! \!\!\dots \!& \!\!c_{K_t} \!\!
\end{array}\right] \Xi ,
\,  \,
G =\hat{\Gamma}^{\frac{T}{2}}\left[\begin{array}{cc}
    \krnmat{Y}{Y}& \krnmat{Y}{\Phi}\\
    \left(\krnmat{Y}{\Phi}\right)^T& 
  \mathrm{diag}(\krnmat{\Phi}{\Phi}) + \Gamma^{-1}\Omega
  \end{array}\right]\hat{\Gamma}^{\frac{1}{2}}, \, \,
\hat{\Gamma} = \left[\begin{array}{cc}
    I & 0 \\
    0 & \Gamma
\end{array}\right].
\end{align}
Above, $c_k\in\{0, 1\}^{N_t+K_{t-1}}$ is an indicator vector
for cluster $k$ ($c_{ki} = 1, \, i \leq N_t$ indicates that datapoint $i$
is assigned to cluster $k$, and $c_{kk'} = 1, \, k' > N_t$ 
indicates that cluster $k$ is linked to old cluster $k'$, where at most one component
$c_{kk'}, \, k'> N_t$ can be set to 1),
the matrix $\Gamma\in\mathbb{R}^{{K_{t-1}}\times {K_{t-1}}}$ is a square diagonal matrix with the
$\gamma_{kt}$ along the diagonal for each old cluster $k \in \{1,
\dots,K_{t-1}\}$, and $\Xi\in\mathbb{R}^{K_t\times K_t}$ is a square matrix with
the value $\left(\gamma_{kt}+n_{kt}\right)^{-\frac{1}{2}}$ along the diagonal
for each cluster $k\in \{1, \dots, K_t\}$.

If the problem is relaxed by removing all constraints on $Z$
except for $Z^TZ = I$, (\ref{eq:spectralmin})
is equivalent to solving
\begin{align}
  &\begin{aligned}\label{eq:relaxedopt}
    Z^\star = &\argmax_{Z} \tr{Z^T(G-\lambda
    I)Z}\\
    &\quad \mathrm{s.t. }\, \, Z^TZ = I,
  \end{aligned}
\end{align}
for which the set of global optimum solutions is~\citep{Yu03_ICCV}
\begin{align}
  Z^\star &\in \left\{V^\star U : U\in\mathbb{R}^{\left|\clusa_t\right|\times \left|\clusa_t\right|}, \, U^T = U^{-1}\right\}\\
  V^\star &=\left[v_1, \dots, v_{\left|\clusa_t\right|}\right]\in
  \mathbb{R}^{(N_t+{K_{t-1}})\times {\left|\clusa_t\right|}}, \label{eq:UVdefine}
\end{align}
where $\{v_1, \dots, v_{\left|\clusa_t\right|}\}$ are unit eigenvectors of $G$ whose corresponding eigenvalues
are greater than $\lambda$.
Note that $\left|\clusa_t\right| \neq K_t$ in general; $K_t$ accounts for clusters
that were created in a previous time step but are not active in the current time
step. Therefore, $\left|\clusa_t\right| \leq K_t$, where equality occurs when all old clusters $k$
satisfy $n_{kt} > 0$.


\subsection{Finding a Feasible Solution}\label{subsec:specfeasible}
Given the set of global minimum solutions of the relaxed problem $\{V^\star U :
  U^T = U^{-1}\}$, the next steps are
to find the partitioning of data into clusters, and to find the correspondences
between the set of clusters and the set of old clusters.  
The partitioning of data into clusters is found
by by minimizing the Frobenius norm between a binary cluster indicator matrix $X$ and 
$\bar{V}^\star U$, where $\bar{V}^\star \in \mathbb{R}^{N_t \times
\left|\clusa_t\right|}$ is the row-normalization of $V^\star$ with the
last $K_{t-1}$ rows removed:
\begin{align}
  \begin{aligned}\label{eq:xuopt}
  X^\star, U^\star = \argmin_{X, U} \quad &  \|X - \bar{V}^\star U\|^2_F\\
  \mathrm{s.t.}\quad & U^T = U^{-1}, \, \, \sum_j X_{ij} = 1, \, \,  1 \leq i \leq N_t\\
              & U \in \mathbb{R}^{\left|\clusa_t\right| \times
              \left|\clusa_t\right|}, \, \, X \in \{0,
              1\}^{N_t\times\left|\clusa_t\right|}.
  \end{aligned}
\end{align}
This optimization is difficult to solve exactly, but the following approximate 
coordinate descent method works well in practice~\citep{Yu03_ICCV}: 
\begin{enumerate}
  \item{Initialize $U$ via the method described in~\citep{Yu03_ICCV}.}
  \item{Set $X = 0$. Then for all $1 \leq i \leq N_t$, set row $i$ of $X$ to an indicator for the
      maximum element of row $i$ of $\bar{V}^\star U$,
      \begin{align}\label{eq:XUpdate}
          j^\star &= \argmax_j \left(\bar{V}^\star U\right)_{ij}, \quad X_{ij^\star} \gets 1.
      \end{align} }
\item{Set $U$ using the singular value decomposition of
    $X^T\bar{V}^\star$,
        \begin{align}\label{eq:UUpdate}
            X^T\bar{V}^\star &\overset{\text{svd}}{=} R\Sigma W^T , \quad U \gets WR^T.
        \end{align}}
      \item {Compute $\|X-\bar{V}^\star U\|^2_F$. If it decreased from the previous
      iteration, return to 2.}
  \end{enumerate}
Given the solution to this problem, the current set of clusters are known -- 
define the temporary indices set $\indices_l = \{i : X_{il} = 1\}$, and the
temporary cluster count $n_l = \left|\indices_l\right|$. The 
final step is to link the clusters $1, \dots, \left|\clusa_{t}\right|$ to
old clusters by solving the following linear program:
\allowdisplaybreaks
\begin{align}
  \begin{aligned}
    c^\star = \argmin_{c} \quad & \sum_{l=1}^{\left|\clusa_t\right|}
    \sum_{k=1}^{K_{t-1}} c_{lk} \left(Q\dtk-\lambda 
    + \frac{\gamma_{kt}\zeta_{lk}}{\gamma_{kt}+n_l}\right)\\
  \mathrm{s.t.}\quad      &\sum_{k=1}^{K_{t-1}} c_{lk} \leq 1, \, \, 1 \leq j \leq
     \left|\clusa_t\right|\\
     &\sum_{l=1}^{\left|\clusa_t\right|} c_{lk} \leq 1, \, \, 1 \leq k \leq
     K_{t-1}\\
     &c \geq 0, \, \,  c \in \mathbb{R}^{\left|\clusa_{t}\right|\times K_{t-1}}.
   \end{aligned}\label{eq:clustermatching}
%
\end{align}
where $\zeta_{lk} = n_{l}\krnmat{\Phi}{\Phi}_{kk}-\sum_{i\in\indices_l}\left(2\krnmat{Y}{\Phi}_{ik}-\frac{1}{n_l}\sum_{j\in\indices_l}\krnmat{Y}{Y}_{ij}\right)$.
Theorem \ref{thm:clustermatching} shows that the final clustering of
the data at the current timestep $t$ can be constructed given the optimal solution $c^\star$ of the
linear program (\ref{eq:clustermatching}):
\begin{lemma}\label{lem:totallyunimodular}
  The linear optimization (\ref{eq:clustermatching}) has a totally unimodular constraint matrix.
\end{lemma}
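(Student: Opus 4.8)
The plan is to recognize the constraint matrix of the linear program (\ref{eq:clustermatching}) as the incidence matrix of a bipartite graph, and then invoke the classical characterization of total unimodularity (TU) for such matrices. First I would strip away the nonnegativity constraints $c \geq 0$: these contribute only rows that are (negatives of) standard basis vectors, and appending $\pm$ identity rows to a TU matrix preserves total unimodularity, so they can be ignored. What remains are the two families of inequality constraints: for each cluster index $l$, the row $\sum_{k} c_{lk} \leq 1$, and for each old-cluster index $k$, the column-sum row $\sum_{l} c_{lk} \leq 1$. Viewing the decision variables $c_{lk}$ as indexed by edges $(l,k)$ of the complete bipartite graph $K_{|\clusa_t|, K_{t-1}}$, the first family has a $1$ in position $(l,k)$ exactly when edge $(l,k)$ is incident to left-vertex $l$, and the second family has a $1$ in position $(l,k)$ exactly when edge $(l,k)$ is incident to right-vertex $k$. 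Hence the constraint matrix (modulo the identity rows from $c\geq 0$) is precisely the vertex-edge incidence matrix of a bipartite graph.

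Next I would apply the standard theorem (e.g., the Poincaré/Hoffman--Kruskal criterion, or the specific fact that incidence matrices of bipartite graphs are TU): a $0/1$ matrix in which every column has at most two nonzero entries is TU if and only if its rows can be partitioned into two sets such that any column with two nonzeros has one in each set. Here every column $c_{lk}$ has exactly two nonzeros — one in the ``row-sum'' block and one in the ``column-sum'' block — and the natural partition into (row-sum rows) and (column-sum rows) puts one of these in each block. This immediately yields total unimodularity of the core constraint matrix, and then reintroducing the identity rows from the nonnegativity constraints keeps it TU by the append-identity property. One should also note the constraint matrix for the standard-form version (after introducing slack variables to turn the $\leq$ inequalities into equalities) remains TU, since adjoining an identity block again preserves TU; this is what is actually needed to conclude the LP has an integral optimal vertex whenever the right-hand side is integral, which it is (all ones).

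I do not anticipate a genuine obstacle here — the result is essentially a recognition argument. The only care needed is bookkeeping: being explicit that the $c \geq 0$ rows and any slack-variable columns are handled by the ``append an identity'' stability property of TU matrices, and confirming that each variable $c_{lk}$ genuinely appears in exactly one row of each of the two inequality families (which is clear from the index ranges $1 \leq l \leq |\clusa_t|$ and $1 \leq k \leq K_{t-1}$). With those details in place, the conclusion follows directly from the bipartite-incidence characterization of total unimodularity, and in turn the integrality of the LP polytope — hence the existence of a $0/1$ optimal $c^\star$ giving a valid cluster-to-old-cluster matching — follows from the Hoffman--Kruskal theorem.
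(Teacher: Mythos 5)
Your proposal is correct, and the core combinatorial observation is the same as the paper's: each variable $c_{lk}$ carries exactly one $+1$ coefficient in the family of constraints indexed by $l$ and exactly one in the family indexed by $k$, so placing these two families on opposite sides of a row partition makes everything cancel to $\{0,\pm 1\}$. Where you diverge is in the machinery used to package this observation. The paper verifies the Ghouila--Houri criterion directly for an \emph{arbitrary} subset $R$ of rows, which forces it to also fold the nonnegativity rows into the partition (assigning each $c_{lk}\geq 0$ row to $R_+$ or $R_-$ according to the sign of the residual vector $\tilde r$). You instead identify the two inequality families as the vertex--edge incidence matrix of the complete bipartite graph $K_{|\clusa_t|,K_{t-1}}$, invoke the Heller--Tompkins-style characterization (a $0/1$ matrix with at most two nonzeros per column is TU iff its rows split so that each such column has one nonzero on each side), and then dispatch the nonnegativity rows and the slack columns via the standard closure properties of TU under appending $\pm$ identity rows and columns. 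Your route is arguably cleaner and more modular -- it isolates the bipartite structure explicitly and avoids the slightly fiddly sign bookkeeping for the nonnegativity rows -- at the cost of citing two auxiliary facts (the bipartite incidence criterion and the append-identity closure) rather than one. The paper's route is more self-contained, needing only the single Ghouila--Houri lemma. Both are complete and correct; your additional remarks about slack variables and Hoffman--Kruskal integrality anticipate material that the paper defers to the proof of Theorem \ref{thm:clustermatching}.
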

\begin{theorem}\label{thm:clustermatching}
  The solution to the linear program (\ref{eq:clustermatching}) is the minimum
  cost matching, with respect to (\ref{eq:krnmin}), of old clusters to current
  clusters, where $c_{lk} = 1$ implies temporary cluster $l$ reinstantiates old
  cluster $k$, and
  $\sum_{k=1}^{K_{t-1}}c_{lk} = 0$ implies temporary cluster $l$ is a new
  cluster.
\end{theorem}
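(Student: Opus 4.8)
The plan is to fix the data partition produced by the spectral/Procrustes step, show that the kernelized cost (\ref{eq:krnmin}) then splits into independent per-cluster terms whose only remaining freedom is which old cluster (if any) each current cluster reinstantiates, verify that the resulting residual cost is \emph{exactly} the objective of (\ref{eq:clustermatching}) up to an additive constant that does not depend on the matching, and finally invoke Lemma \ref{lem:totallyunimodular} to pass from the integer matching problem to the linear program.

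Concretely, I would fix the index sets $\indices_l$ and counts $n_l=\left|\indices_l\right|$ obtained from $X^\star$, and evaluate the per-cluster contribution to (\ref{eq:krnmin}) in the two possible cases. If current cluster $l$ is declared new, then $\gamma_{kt}=0$ and no revival term applies, so after absorbing the global constant $\sum_i \krnmat{Y}{Y}_{ii}$ its contribution is $\lambda - \tfrac{1}{n_l}\sum_{i,j\in\indices_l}\krnmat{Y}{Y}_{ij}$. If instead $l$ reinstantiates old cluster $k\in\{1,\dots,K_{t-1}\}$, then $\gamma_{kt}>0$ and $\dtk\geq 1$ (so the new-cluster penalty does not fire and the revival penalty $Q\dtk$ does), and its contribution is $Q\dtk + \tfrac{\gamma_{kt}n_l\krnmat{\Phi}{\Phi}_{kk}}{\gamma_{kt}+n_l} - \sum_{i\in\indices_l}\tfrac{2\gamma_{kt}\krnmat{Y}{\Phi}_{ik}+\sum_{j\in\indices_l}\krnmat{Y}{Y}_{ij}}{\gamma_{kt}+n_l}$. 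Subtracting the ``new'' contribution from the ``linked'' contribution and collapsing the $\krnmat{Y}{Y}$ terms via $\tfrac1{n_l}-\tfrac1{\gamma_{kt}+n_l}=\tfrac{\gamma_{kt}}{n_l(\gamma_{kt}+n_l)}$ yields exactly $Q\dtk-\lambda+\tfrac{\gamma_{kt}\zeta_{lk}}{\gamma_{kt}+n_l}$, with $\zeta_{lk}$ as defined below (\ref{eq:clustermatching}). Hence, for any assignment of current clusters to old clusters, the total cost (\ref{eq:krnmin}) equals a term constant in the matching plus $\sum_{l}\sum_{k}c_{lk}\big(Q\dtk-\lambda+\tfrac{\gamma_{kt}\zeta_{lk}}{\gamma_{kt}+n_l}\big)$, where $c$ ranges over $0/1$ matrices with at most one $1$ per row (a current cluster reinstantiates at most one old cluster) and at most one $1$ per column (two current clusters cannot reinstantiate the same old cluster). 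These are precisely the constraints of (\ref{eq:clustermatching}) together with integrality, so minimizing (\ref{eq:krnmin}) over matchings is the integer program with this objective.

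It then remains only to drop the integrality requirement. The feasible set of (\ref{eq:clustermatching}) is a transportation polytope with an integral right-hand side, and by Lemma \ref{lem:totallyunimodular} its constraint matrix is totally unimodular; therefore every vertex of the polytope is integral and the linear program attains its optimum at such a vertex. Consequently an optimal $c^\star$ is a $0/1$ partial matching, and by the cost identity above it minimizes (\ref{eq:krnmin}) over all matchings of current to old clusters; reading it off, $c^\star_{lk}=1$ means current cluster $l$ reinstantiates old cluster $k$, and $\sum_k c^\star_{lk}=0$ means cluster $l$ receives a fresh label, which is the claimed conclusion.

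The main obstacle is bookkeeping rather than ideas: one must carefully separate the terms of (\ref{eq:krnmin}) that are constant once the partition is fixed from those that depend on the matching, confirm that an unmatched current cluster genuinely incurs the $\lambda$ penalty while a matched one incurs $Q\dtk$ (i.e., the indicator structure of (\ref{eq:DMeansObj}) behaves as expected in both cases), and check that the algebraic simplification collapses exactly to $\zeta_{lk}$ with no residual cross terms. The single conceptual step, integrality of the LP, is already supplied by Lemma \ref{lem:totallyunimodular}, so the remainder is a direct substitution.
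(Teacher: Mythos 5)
Your proposal is correct and follows essentially the same route as the paper: both reduce the matching decision to the per-cluster costs $m_l$ (new) and $m_{lk}$ (reinstantiate), observe that the LP objective coefficient is exactly $m_{lk}-m_l = Q\dtk-\lambda+\frac{\gamma_{kt}\zeta_{lk}}{\gamma_{kt}+n_l}$ so that (\ref{eq:krnmin}) equals a matching-independent constant plus the LP objective, and then invoke Lemma \ref{lem:totallyunimodular} for integrality. The only cosmetic difference is that the paper makes the constant explicit via slack variables $s_l = 1-\sum_k c_{lk}$ rather than subtracting the new-cluster baseline directly.
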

\begin{proof}
  See Appendix \ref{app:clustermatchingproof}.
\end{proof}

  \section{Spectral Dynamic Means (SD-Means)}\label{sec:spectraldynmeans}
\subsection{Algorithm Description}
\begin{algorithm}[t!]
  \captionsetup{font=small}
  \caption{\textsc{Spectral Dynamic Means}}\label{alg:spectraldynmeans}
  \small
\begin{algorithmic}
  \Require $\{ \{y_{it}\}_{i=1}^{N_t} \}_{t=1}^{t_f}$, $Q$, $\lambda$, $\tau$
  \State $K_0 \gets 0$
  \For{$t=1\to t_f$}
    \State $\{\gamma_{kt}\}_{k=1}^{K_{t-1}} \gets $ Eq. (\ref{eq:gammaupdate})
    \State $G \gets$ Eq. (\ref{eq:ZGdefine})
    \State $\{(v_i, \sigma_i)\} \gets $ Unit eigenvectors \& eigenvalues of $G$
    \State $V^\star \gets \left[v_1, \dots,
    v_{\left|\clusa_t\right|}\right]$ where $\sigma_i > \lambda, 1\leq i \leq \left|\clusa_t\right|$
    \State $\bar{V}^\star \gets $ Row normalization of $V^\star$ with last
    $K_{t-1}$ rows removed
    \State $U \gets $ Most orthogonal rows of $\bar{V}^\star$ (See~\citep{Yu03_ICCV} for details)
    \State $L_t \gets \infty$, $L^\text{prev}_t \gets \infty$
    \Repeat 
    \State $X \gets $ Eq. (\ref{eq:XUpdate})
    \State $U \gets $ Eq. (\ref{eq:UUpdate})
    \State $L_t \gets \|X-\bar{V}^\star U\|^2_F$
    \Until {$L_t = L^{\text{prev}}_t$}
    \State $\{\indices_l\}_{l=1}^{\left|\clusa_t\right|} \gets$ Extracted
    data partitioning from $X$ (See Section
    \ref{subsec:specfeasible} for details)
    \State $\{\indices_{kt}\}_{k=1}^{K_t} \gets$ Optimal cluster correspondence
    from (\ref{eq:clustermatching}) (See Section
    \ref{subsec:specfeasible} for details)
    \State For each new cluster $k$ that was created, set $\gamma_{kt}\gets 0$, $\dtk \gets 0$
    \State $\{\phi_{k(t+1)}\}_{k=1}^{K_t} \gets$ Eq.~(\ref{eq:krnphiupdate}),
    $\{w_{k(t+1)}, \dtkn\}_{k=1}^{K_t} \gets$ Eq.~(\ref{eq:wupdate})
  \EndFor
  \State \Return $\{ \{z_{it}\}_{i=1}^{N_t} \}_{t=1}^{t_f}$
\end{algorithmic}
\end{algorithm}
The overall Spectral Dynamic Means\footnote{Code available online:
  \url{http://github.com/trevorcampbell/dynamic-means}} (SD-Means) algorithm is shown in Algorithm
\ref{alg:spectraldynmeans}. At each time step, the similarity matrix $G$ in
(\ref{eq:ZGdefine})
is constructed using the current data and old cluster information.
Next, the eigendecomposition of $G$ is computed, and $\bar{V}^\star$ is created
by horizontally concatenating all unit eigenvectors whose eigenvalues exceed
$\lambda$, removing the last $K_{t-1}$ rows,  
and row-normalizing as outlined in Section \ref{subsec:specfeasible}.
Note that if the set $\{v_i : \sigma_i > \lambda\}$ is empty, one can set $V^\star$
to be the single eigenvector $ V^\star = v_{i^\star} =
v_{\argmax_i \sigma_i}$.
Then the steps (\ref{eq:XUpdate}) and (\ref{eq:UUpdate}) are iterated until 
a feasible partitioning of the data is found. Finally,
(\ref{eq:clustermatching}) is solved to link clusters to their counterparts in
past timesteps, and the old cluster information
$\phi_{kt}$, $w_{kt}$ and $\dtk$ for each $k\in\clusa_t$ is updated 
using the same procedure as in KD-Means.

\subsection{Practical Aspects and Implementation}
\subsubsection{Complexity vs.~Batch Size}
With traditional batch spectral clustering, given $N$ datapoints, the
dominating aspects of the computational complexity are forming a similarity matrix of
$\frac{N^2+N}{2}$ entries, and finding its eigendecomposition (generally an $O(N^3)$ computation). 
SD-Means processes smaller batches of size $M \ll N$ at a time, thereby reducing this
complexity. However, it must repeat these computations $\frac{N}{M}$ times to process
the same $N$ datapoints, and it also incorporates rows and columns for the $K$ old clusters in the
similarity matrices. Thus, the overall computational cost of SD-Means is
$\frac{NM+(2K+1)N}{2}$ similarity computations to form the matrices, and
$O(\frac{N}{M}(M+K)^3)$ computations for the eigendecompositions. 
Therefore, if $M \ll N$ and $K \ll N$, this represents a significant reduction in computational
time with respect to batch spectral clustering. Note that deletion of old
clusters is very important for practical implementations of SD-Means 
(discussed in Section \ref{subsubsec:deletion}), as the computational complexity
of processing each batch scales with $K^3$.

\subsubsection{Initialization of $U$}
While SD-Means is a deterministic algorithm, its performance
depends on the initialization of the orthogonal matrix $U$ defined in
(\ref{eq:UVdefine}). Like D-Means,
in practice this does not have a large effect when using
the initialization from~\citep{Yu03_ICCV}. 
However, if required, random restarts of the algorithm with
different initializations of $U$ may be used to 
mitigate the dependence. If this
is implemented, the clustering that provides the 
lowest final cost in (\ref{eq:krnmin})
out of all the random restarts should be used 
to proceed to the next time step.


\section{Related Work}
\paragraph {Small-variance Asymptotics}
The small-variance asymptotic analysis methodology was first applied to Bayesian
nonparametrics recently~\citep{Kulis12_ICML}, and as such the field is still quite
active. This first work developed asymptotic algorithms for the DP and hierarchical DP
mixture by considering the asymptotics of the Gibbs sampling algorithms for
those models, yielding K-Means-like clustering algorithms that infer the number of
clusters using a cost penalty similar to the AIC penalty~\citep{Akaike74_IEEETAC}.
This work has been extended to handle general exponential family
likelihoods~\citep{Jiang12_NIPS}, and it has been shown that the asymptotic
limit of MAP estimation (rather than the Gibbs sampling algorithm) 
yields similar results~\citep{Broderick13_ICML}. Most recently, this technique
has been applied to learning HMMs with an unknown number of
states~\citep{Roychowdhury13_NIPS}. D-Means and SD-Means, in contrast to these
developments, are the first results of small-variance asymptotics to a model
with dynamically evolving parameters.

\paragraph{Hard Clustering with an Unknown Number of Clusters}
Prior K-Means clustering algorithms that determine the number of clusters present
in the data have primarily involved a method for
iteratively modifying k using various statistical
criteria~\citep{Ishioka00_IDEAL, Pelleg00_ICML, Tibshirani01_JRSTATS}. In
contrast, D-Means and SD-Means derive this capability from a Bayesian
nonparametric model, similarly to the DP-Means algorithm~\citep{Kulis12_ICML}. 
In this sense, the relationship between the (Spectral) Dynamic 
Means algorithm and the dependent Dirichlet
process~\citep{Lin10_NIPS} is exactly that between the (Spectral) DP-Means
algorithm~\citep{Kulis12_ICML} and Dirichlet process~\citep{Ferguson73_ANSTATS}.

\paragraph{Evolutionary Clustering}
D-Means and SD-Means share a strong connection with evolutionary clustering algorithms.
Evolutionary clustering is a paradigm in which the cost function is comprised of
two weighted components: a cost for clustering the present data set, and a cost related to the comparison between the current clustering
and past clusterings~\citep{Chakraborti06_KDD,Chi07_KDD,Chi09_TransKDD,Tang12_IEEETKDE}. While the
weighting in earlier evolutionary algorithms is fixed, some authors have
considered adapting the weights online~\citep{Xu12_PhD,Xu12_DMKD}. The present 
work can be seen as a theoretically-founded extension of this class of 
algorithm that provides methods for automatic and adaptive prior weight
selection, forming correspondences between old and
current clusters, and for deciding when to introduce new clusters. 
Furthermore, the work presented herein is capable of explicit modeling of cluster
birth, death, and revival, which is not present in previous evolutionary clustering
work. Evolutionary clustering has been extended to include spectral
methods~\citep{Chi07_KDD}, but in addition to the aforementioned shortcomings of
evolutionary clustering, this method assumes that there is a known
correspondence between datapoints at different timesteps. If, as in the present
work, this assumption does not hold, spectral evolutionary clustering reduces to
simply clustering each batch of data individually at each timestep. 

\paragraph{Evolutionary Clustering with BNPs}
Evolutionary clustering has also been considered in the context of Bayesian
nonparametrics. Past work has focused on linking together multiple Dirichlet processes 
in a Markov chain by exponential smoothing on weights~\citep{Xu08a_ICDM,Xu08b_ICDM,Sun10_MLG} or
by utilizing on operations on the underlying Poisson processes~\citep{Lin10_NIPS}.
Inference for these models generally involves Gibbs sampling, which is
computationally expensive. The D-Means and SD-Means
algorithm, and graph clustering methods presented
herein, are closely related to the latter of these methods; indeed, they are 
derived from the small-variance asymptotics of the Gibbs sampler for this model.
However, in contrast to these methods, D-Means and SD-Means require no sampling steps, and have guarantees on convergence
to a local cost optimum in finite time.

\paragraph{Other Methods for Dynamic Clustering}
MONIC~\citep{Spiliopoulou06_KDD} and MC3~\citep{Kalnis05_SSTD} have the
capability to monitor time-varying clusters; however, these methods
require datapoints to be identifiable across timesteps, and determine cluster
similarity across timesteps via the commonalities between label assignments.
Incremental spectral clustering techniques often either make a similar correspondence
assumption~\citep{Ning10_PR}, or do not model cluster death or
motion~\citep{Valgren07_ICRA}. Both D-Means and SD-Means do not require such information, and track clusters
essentially based on temporal smoothness of the motion of parameters over time.
Finally, some sequential Monte-Carlo methods (e.g.~particle
learning~\citep{Carvalho10_SS} or multi-target
tracking~\citep{Hue02_TAES,Vermaak03_ICCV}) 
can be adapted for use in the present context, but suffer the typical drawbacks
(particle degeneracy and inadequate coverage in high dimensions) of particle filtering methods.  

\section{Experiments}\label{sec:experiments}
All experiments were run on a computer with an Intel i7 processor and 16GB of
memory.
\subsection{Synthetic Moving Gaussians}
\begin{figure}[t!]
\captionsetup{font=scriptsize}
\begin{center}
  \begin{subfigure}[t]{.32\linewidth}
  \includegraphics[width=\textwidth]{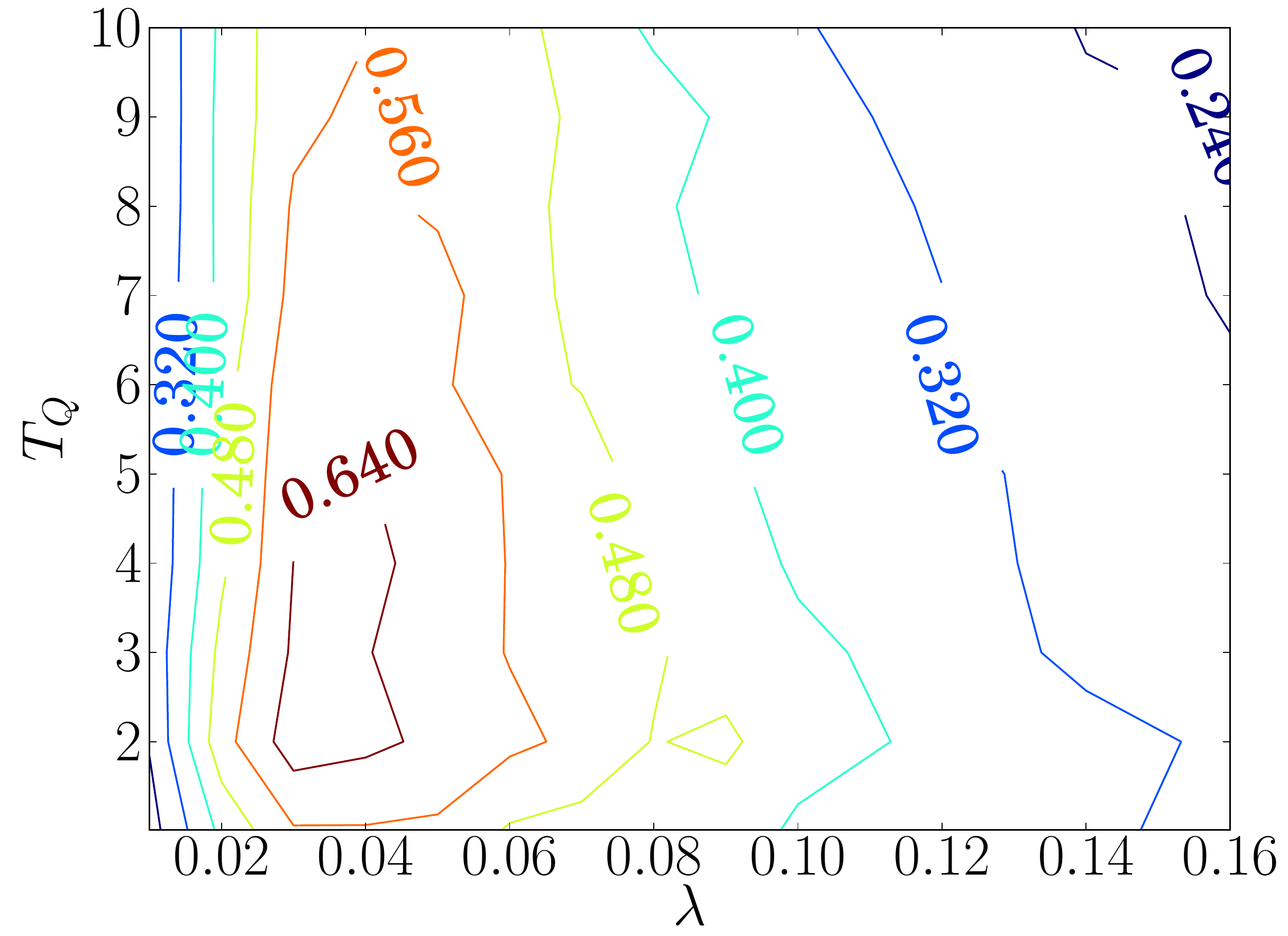}
    \caption{}\label{fig:TuneDynM-LamTQ}
  \end{subfigure}
  \begin{subfigure}[t]{.32\linewidth}
  \includegraphics[width=\textwidth]{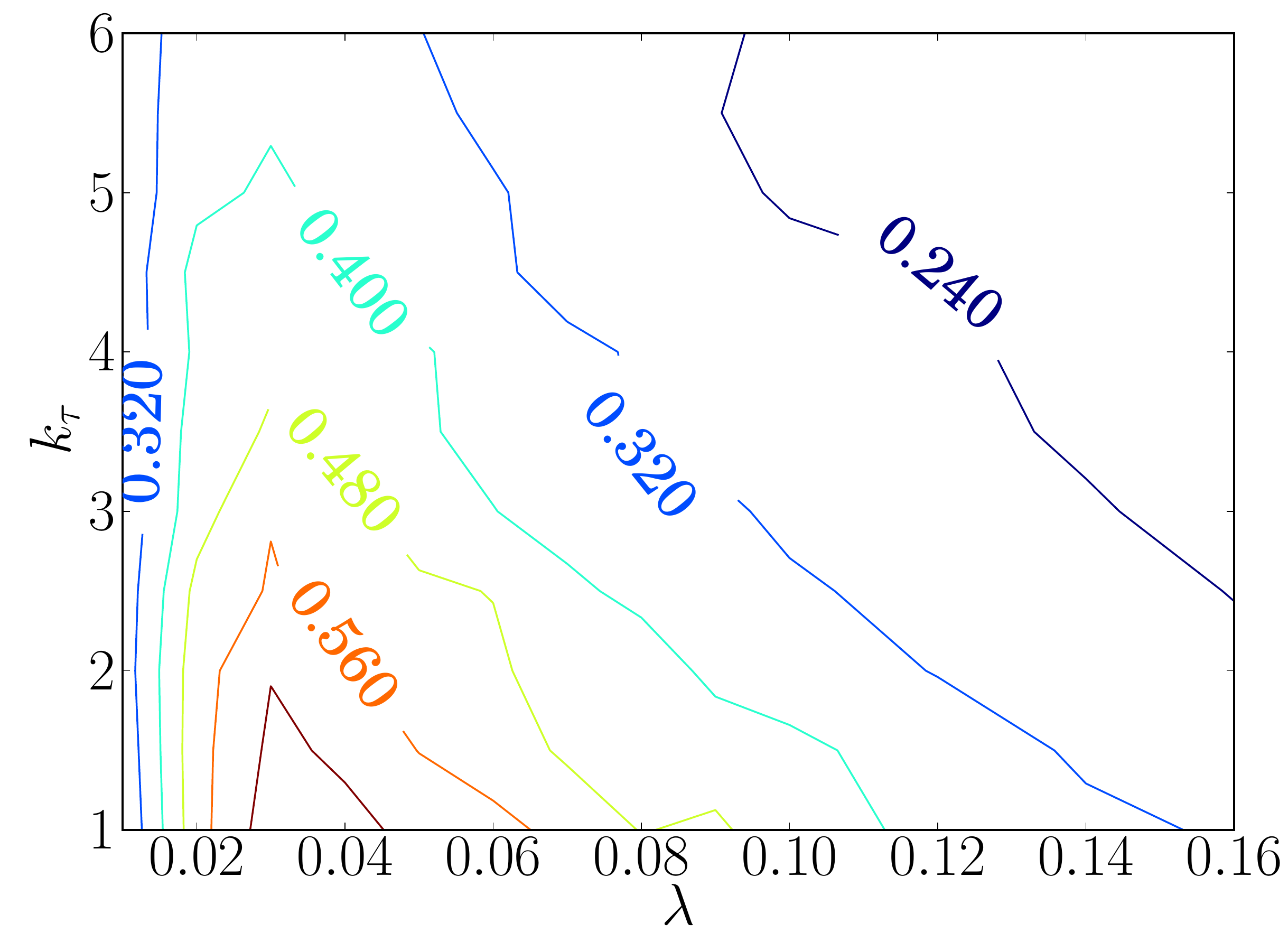}
    \caption{}\label{fig:TuneDynM-LamKT}
  \end{subfigure}
  \begin{subfigure}[t]{.32\linewidth}
  \includegraphics[width=\textwidth]{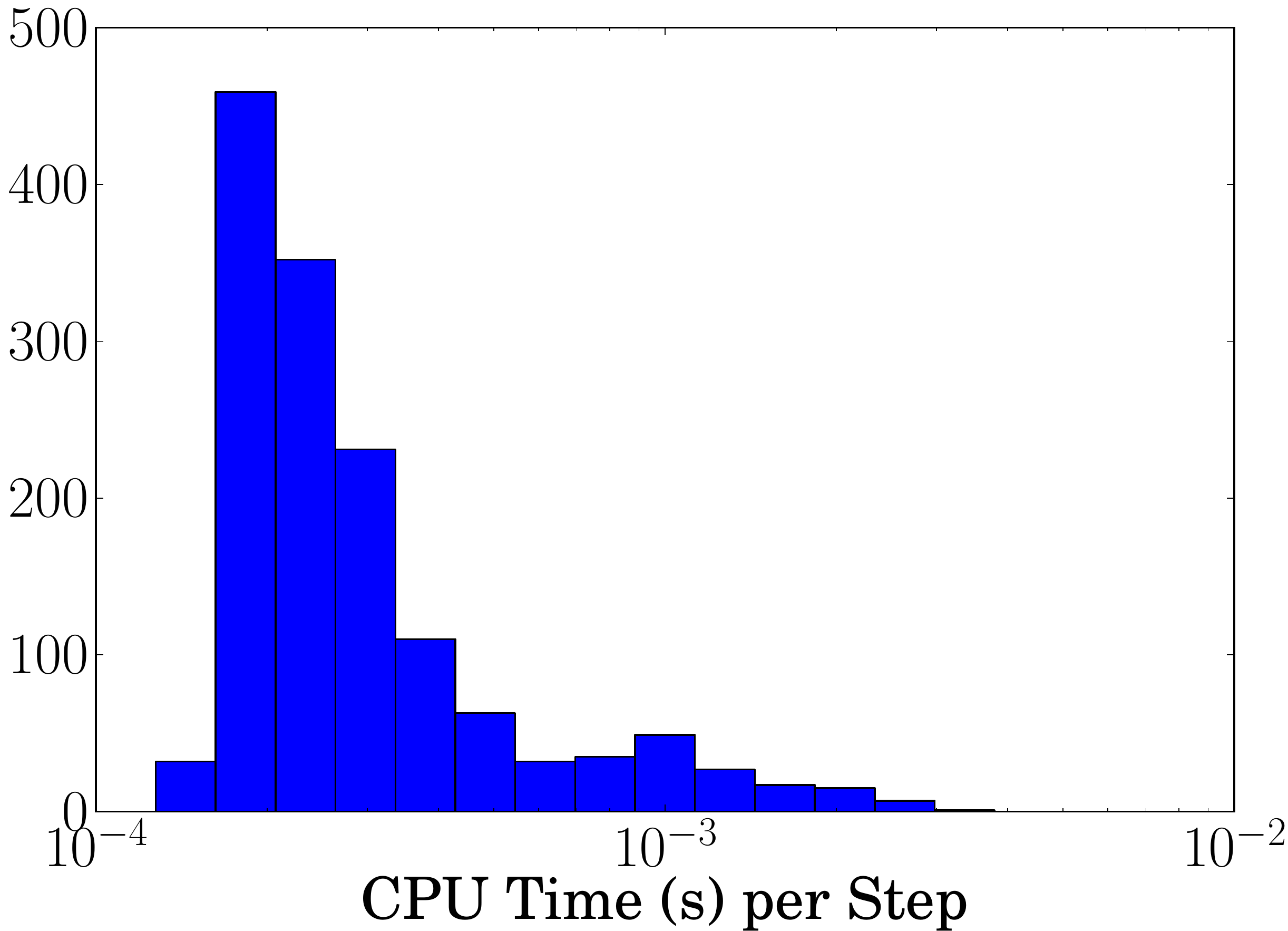}
    \caption{}\label{fig:TuneDynM-CpuT}
  \end{subfigure}
  \begin{subfigure}[b]{.32\linewidth}
  \includegraphics[width=1.06\textwidth]{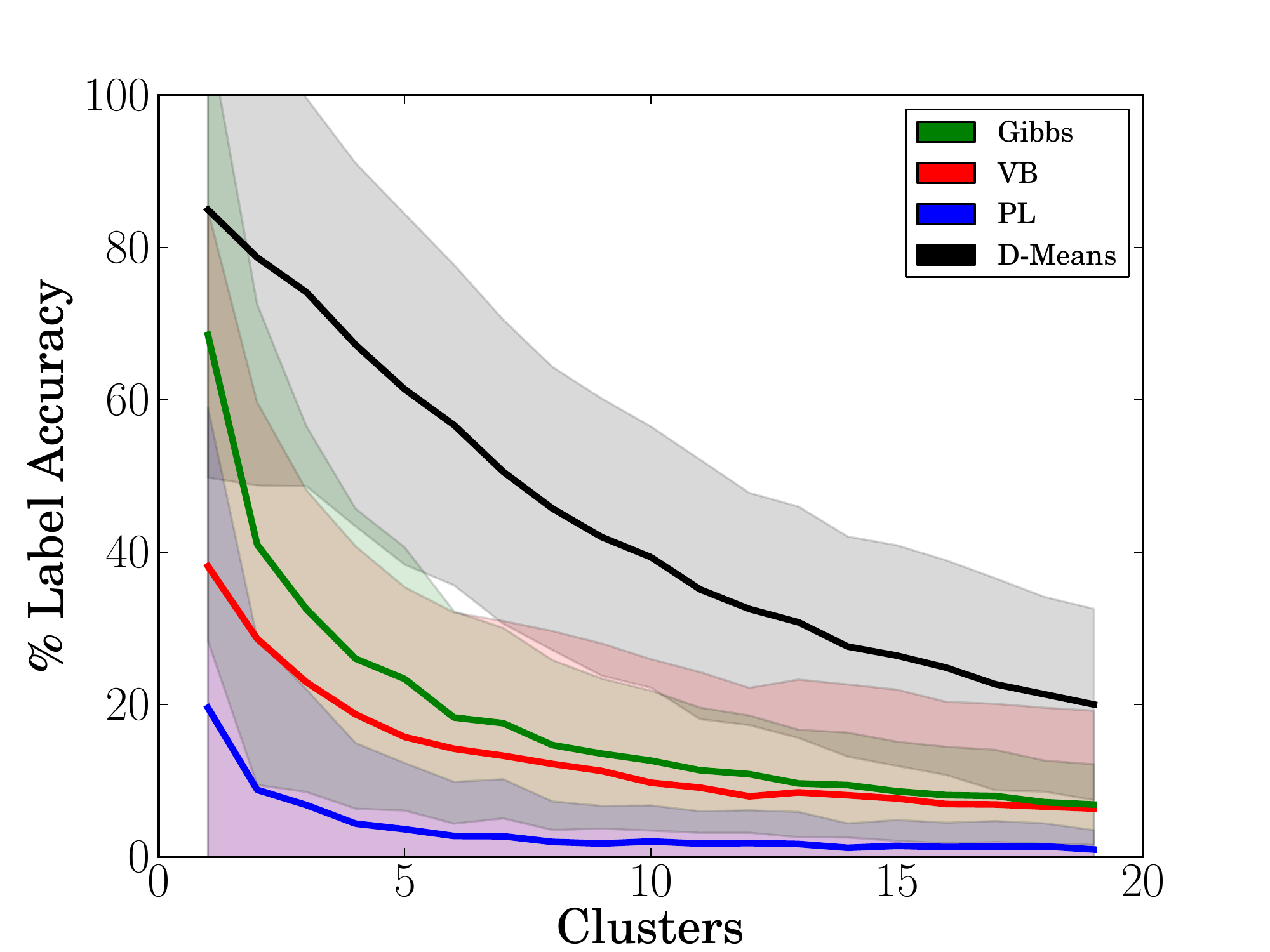}
    \caption{}\label{fig:DMvsDDP-acc}
  \end{subfigure}
  \begin{subfigure}[b]{.32\linewidth}
    \includegraphics[width=1.07\textwidth]{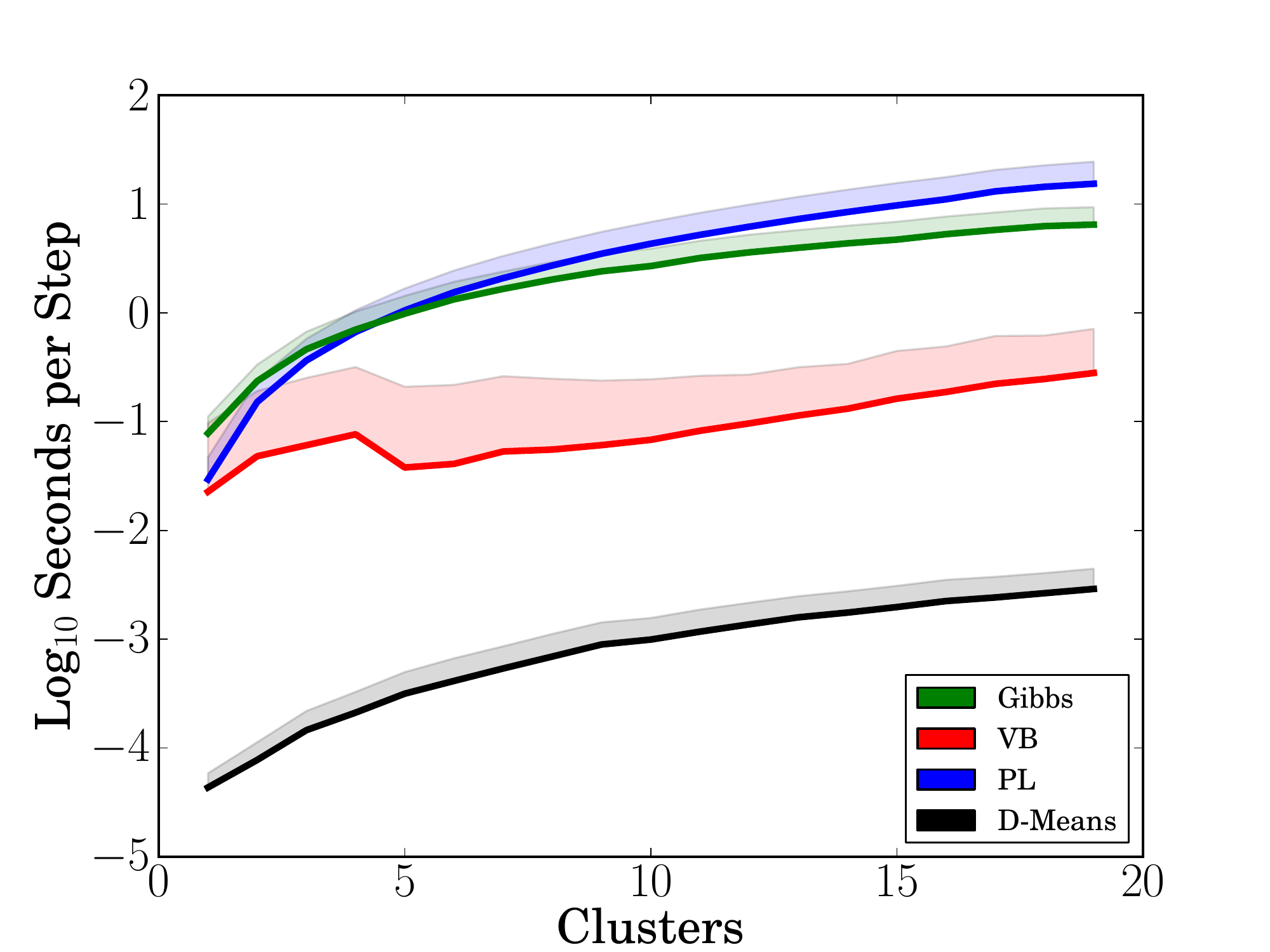}
    \caption{}\label{fig:DMvsDDP-time}
  \end{subfigure}
  \begin{subfigure}[b]{.32\linewidth}
    \includegraphics[width=.97\textwidth]{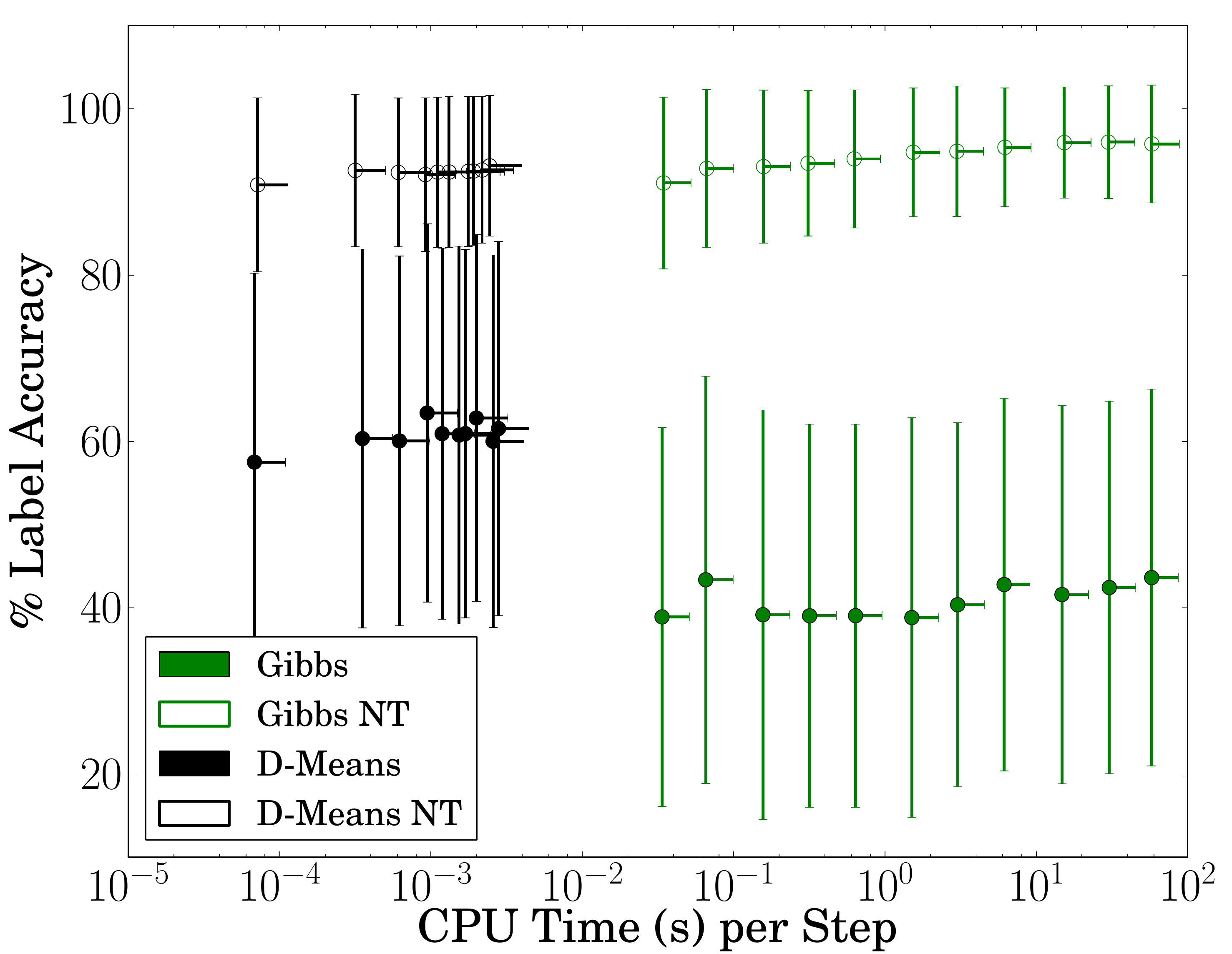}
    \caption{}\label{fig:DMvsDDP-accvtime}
  \end{subfigure}
  \begin{subfigure}[t]{.19\linewidth}
  \includegraphics[width=\columnwidth]{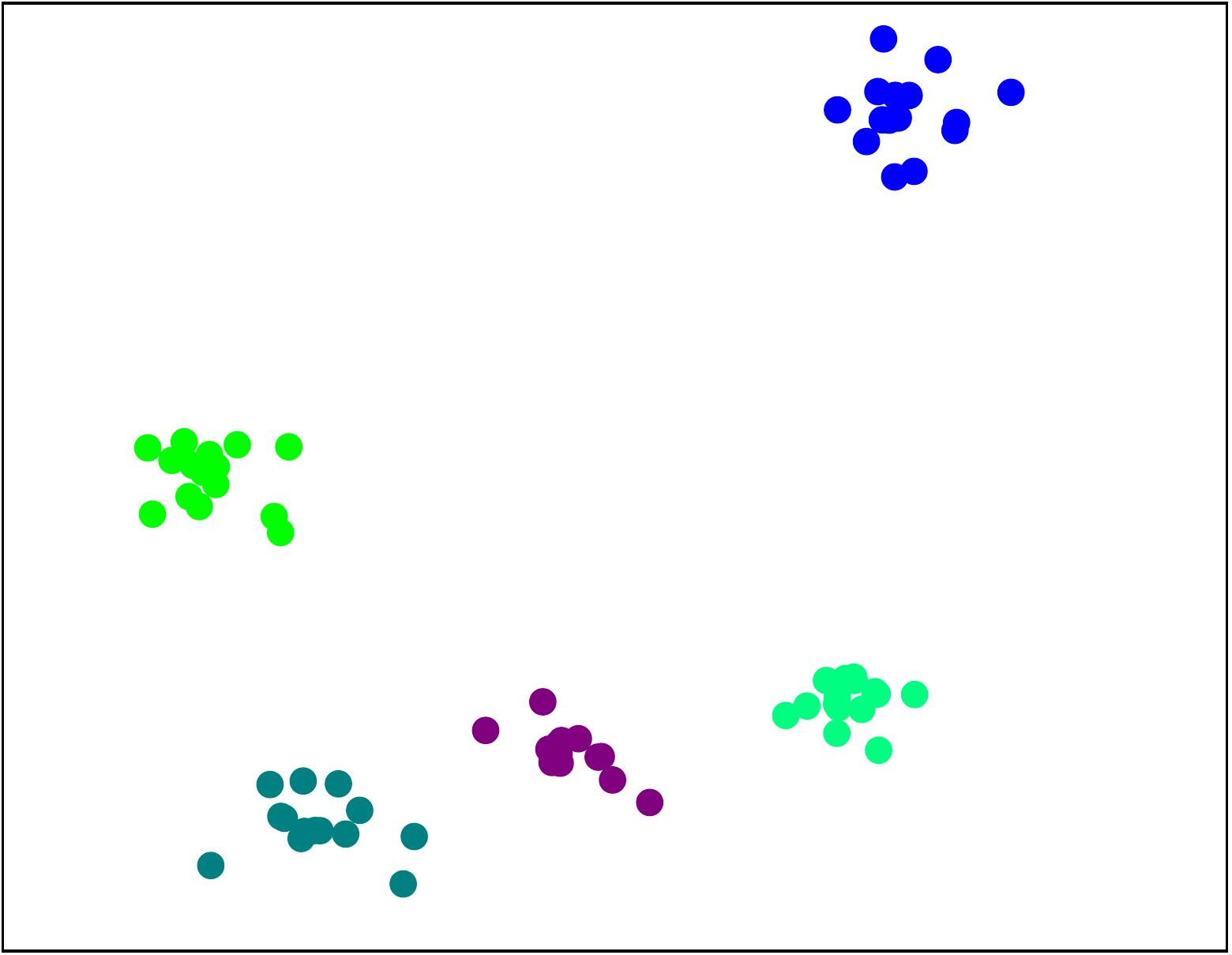}
  \end{subfigure}
  \begin{subfigure}[t]{.19\linewidth}
  \includegraphics[width=\columnwidth]{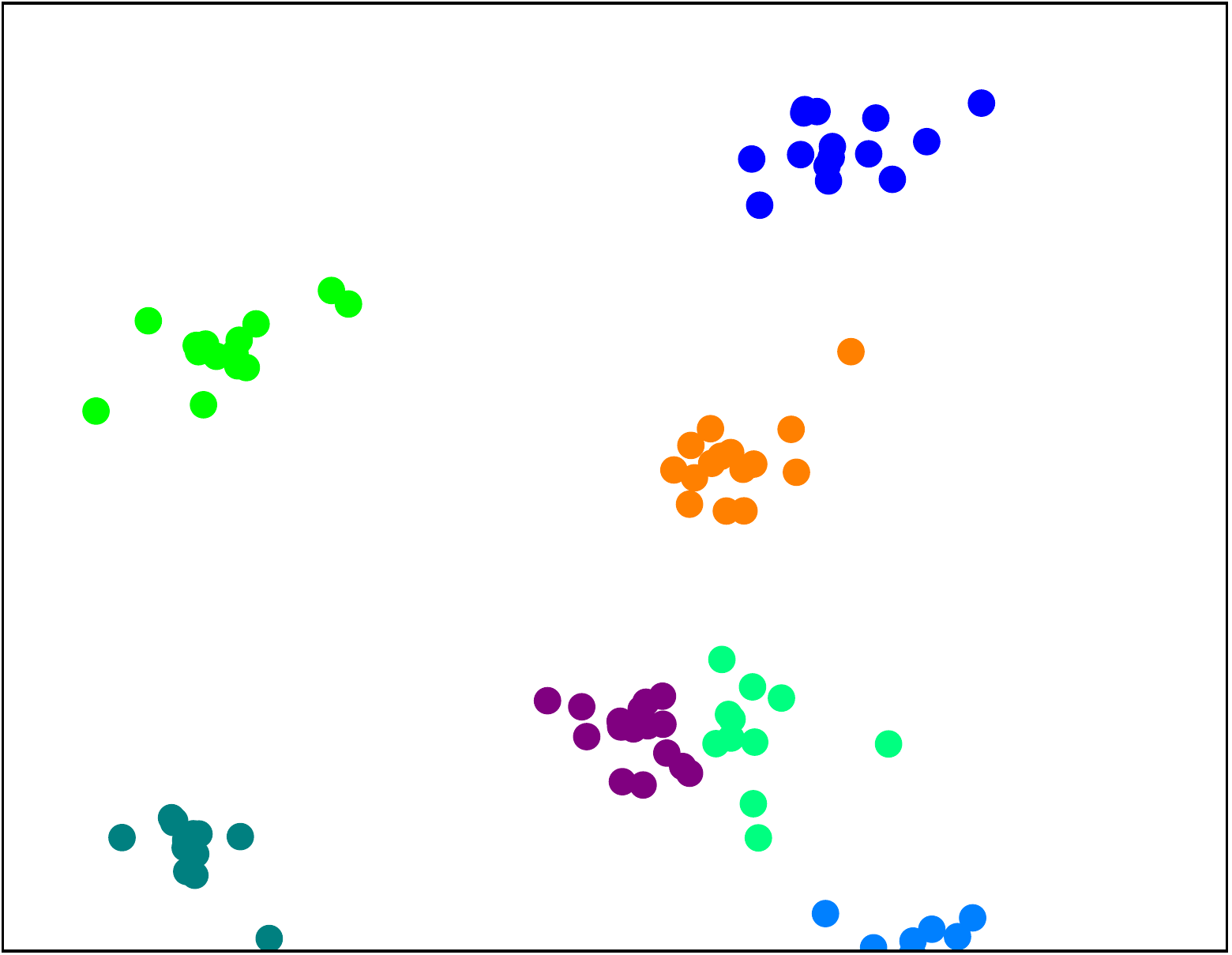}
  \end{subfigure}
  \begin{subfigure}[t]{.19\linewidth}
  \includegraphics[width=\columnwidth]{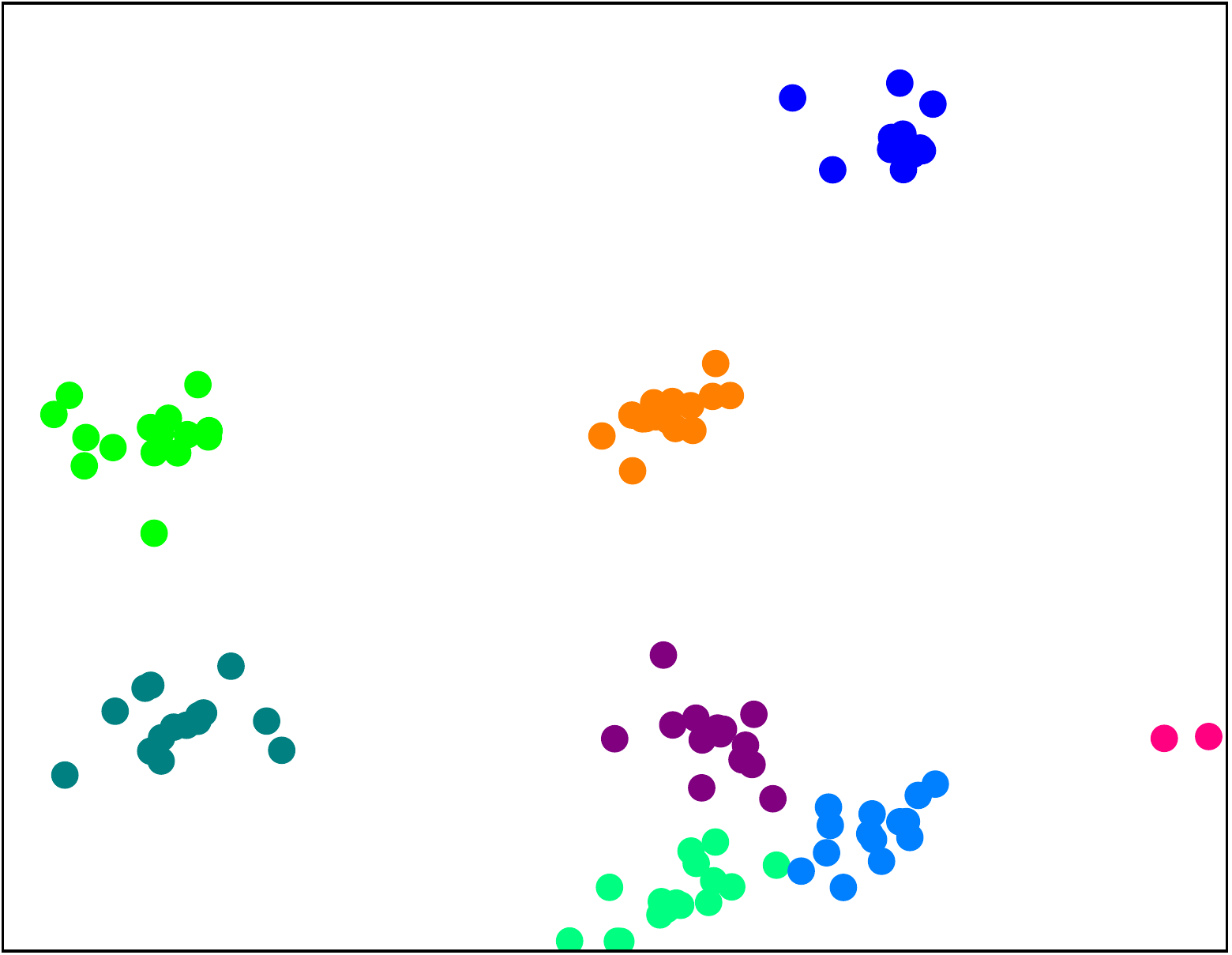}
  \caption{}\label{fig:gdmtrace}
  \end{subfigure}
  \begin{subfigure}[t]{.19\linewidth}
  \includegraphics[width=\columnwidth]{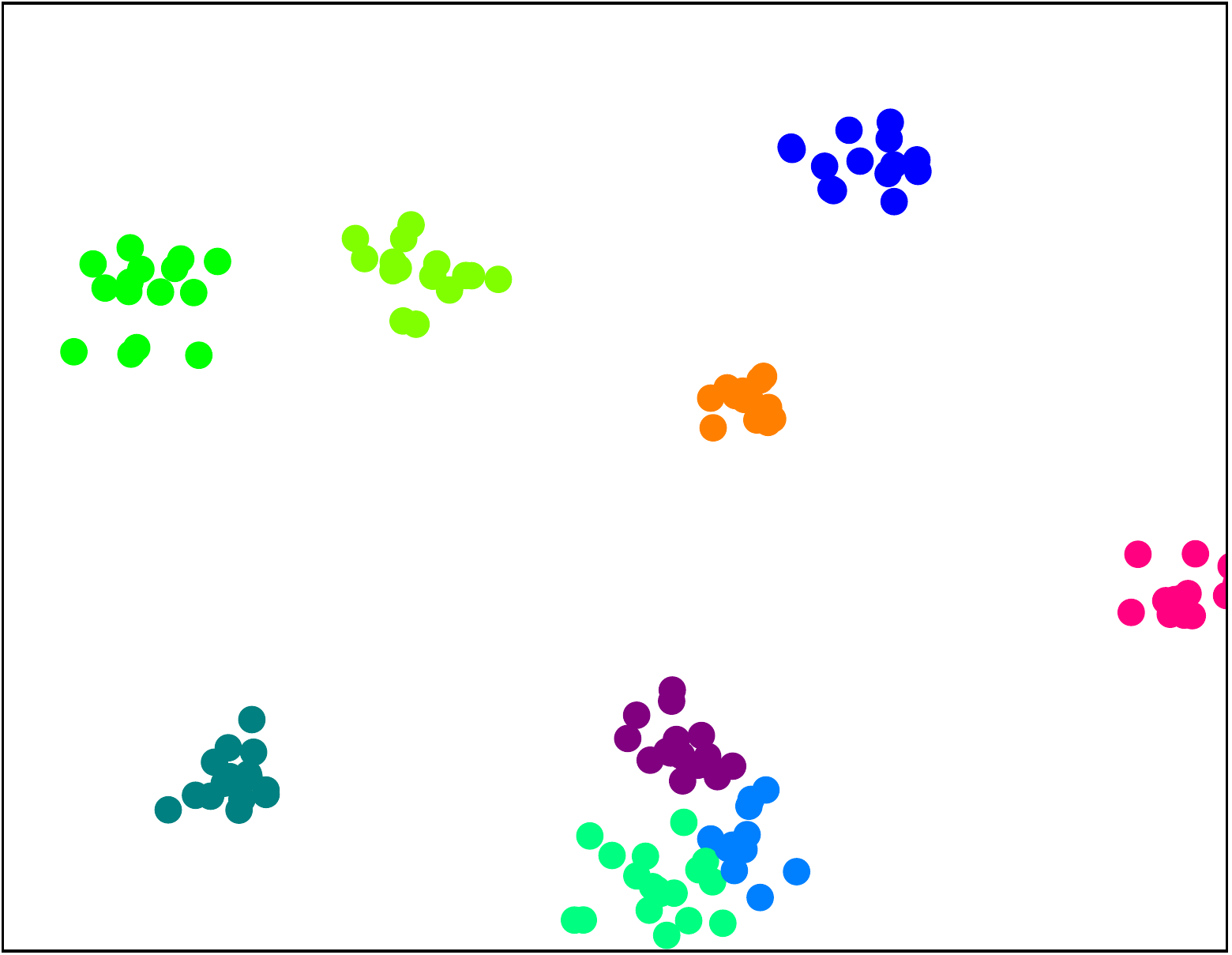}
  \end{subfigure}
  \begin{subfigure}[t]{.19\linewidth}
  \includegraphics[width=\columnwidth]{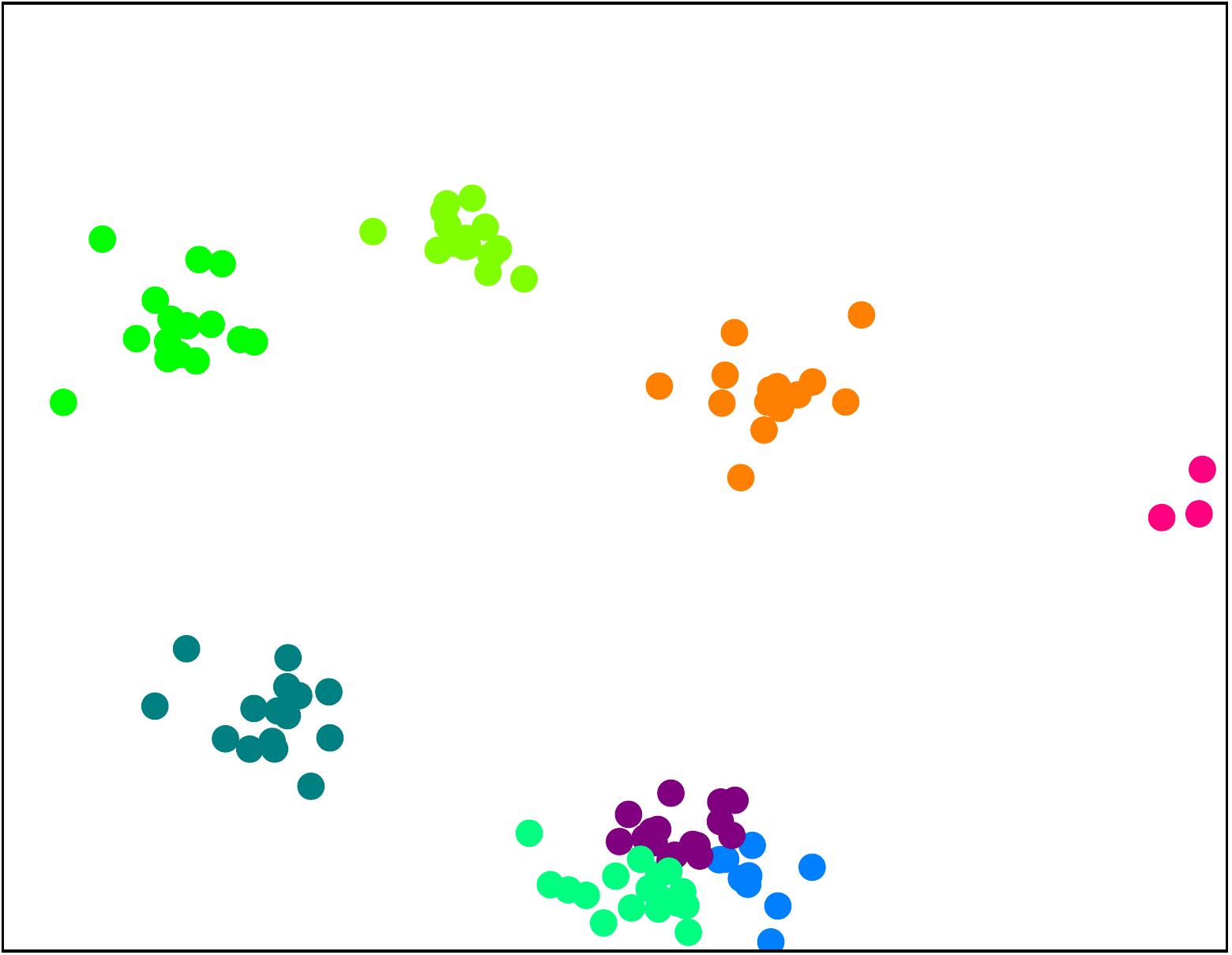}
  \end{subfigure}

  \caption{(\ref{fig:TuneDynM-LamTQ} - \ref{fig:TuneDynM-CpuT}): Accuracy
  contours and CPU time histogram for D-Means.
  (\ref{fig:DMvsDDP-acc} - \ref{fig:DMvsDDP-time}): Comparison
  with Gibbs sampling, variational inference, and particle learning. Shaded
  region indicates $1\sigma$ interval; in (\ref{fig:DMvsDDP-time}), only upper
  half is shown. (\ref{fig:DMvsDDP-accvtime}): Comparison of accuracy when
  enforcing (Gibbs, D-Means) and not enforcing (Gibbs NT, D-Means NT) correct
  cluster tracking. (\ref{fig:gdmtrace}): An example 5 sequential frames from clustering
  using D-Means; note the consistency of datapoint labelling (color) over time.
}\label{fig:SynthTest}
\end{center}
\end{figure}

The first experiment was designed to provide an empirical backing to the
theoretical developments in the foregoing sections. This was achieved through
the use of synthetic mixture model data, with known 
cluster assignments for accuracy comparisons.
In particular, moving Gaussian clusters on $[0,1]\times[0,1]$ 
were generated over a period of 100 time steps, with the
number of clusters fixed. 
At each time step, each cluster had 15 data points sampled from
a isotropic Gaussian distribution with a standard deviation of 0.05. 
Between time steps, the cluster centers moved randomly, with displacements
sampled from the same distribution. At each time
step, each cluster had a 0.05 probability of being destroyed, and upon
destruction a new cluster was created with a random location in the domain.
This experiment involved four algorithms: D-Means with 3 random
assignment ordering restarts;
Gibbs sampling on the full MCGMM with 5000 samples~\citep{Lin10_NIPS}; variational
inference to a tolerance of $10^{-16}$~\citep{Blei06_BA}; and particle
learning with 100 particles~\citep{Carvalho10_BA}.

First, the parameter space of each algorithm was searched for the
best average cluster label accuracy over 50 trials when the number of clusters was fixed to
5. The results of this parameter sweep for
 D-Means are shown in Figures \ref{fig:TuneDynM-LamTQ}--\ref{fig:TuneDynM-CpuT}. 
Figures~\ref{fig:TuneDynM-LamTQ} and \ref{fig:TuneDynM-LamKT} show how the average clustering
accuracy varies with the parameters, after fixing either $k_{\tau}$ or $T_Q$ to their
values at the maximum accuracy setting. D-Means
had a similar robustness with respect to variations in its parameters
as the other algorithms. The histogram in
Figure \ref{fig:TuneDynM-CpuT} demonstrates that the clustering speed is robust
to the setting of parameters. The speed of Dynamic Means, coupled with the
smoothness of its performance with respect to its parameters, makes it
well suited for automatic tuning~\citep{Snoek12_NIPS}.

Using the best parameter setting for D-Means ($\lambda = 0.04$, $T_Q = 6.8$, and
$k_\tau = 1.01$) and the true generative model parameters for Gibbs/PL/VB, 
the data were clustered in 50 trials with a varying number of
clusters present in the data.
In Figures \ref{fig:DMvsDDP-acc} and
\ref{fig:DMvsDDP-time}, the
labeling accuracy and clustering time
for the algorithms is shown versus the true number of clusters in the domain.
Since the domain is bounded, increasing the true number of clusters makes the
labeling of individual data more ambiguous, yielding lower accuracy for all
algorithms. These results illustrate that D-Means
outperforms standard inference algorithms in both label accuracy and
computational cost for cluster tracking problems.

Note that the sampling algorithms were
handicapped to generate 
Figure \ref{fig:DMvsDDP-acc}; the best posterior sample 
in terms of labeling accuracy 
was selected at each time step, which required knowledge of the true
labeling. Further, the accuracy computation involved solving a maximum 
matching problem between the learned and true data labels at each timestep,
and then removing all correspondences inconsistent with matchings from previous
timesteps, thus enforcing consistent cluster tracking over time. If inconsistent
correspondences aren't removed (i.e.~label accuracy computations
consider each time step independently), the other algorithms 
provide accuracies more comparable to those of D-Means. 
This effect is demonstrated in 
Figure \ref{fig:DMvsDDP-accvtime}, which shows the time/accuracy tradeoff
for D-means (varying the number of restarts) and MCGMM (varying the number of
samples) when label consistency is enforced, and when it is not enforced.

\subsection{Synthetic Moving Rings}
\begin{figure}[t!]
\captionsetup{font=scriptsize}
\begin{center}
  \begin{subfigure}[t]{.32\linewidth}
    \raisebox{.1cm}{\includegraphics[width=\textwidth]{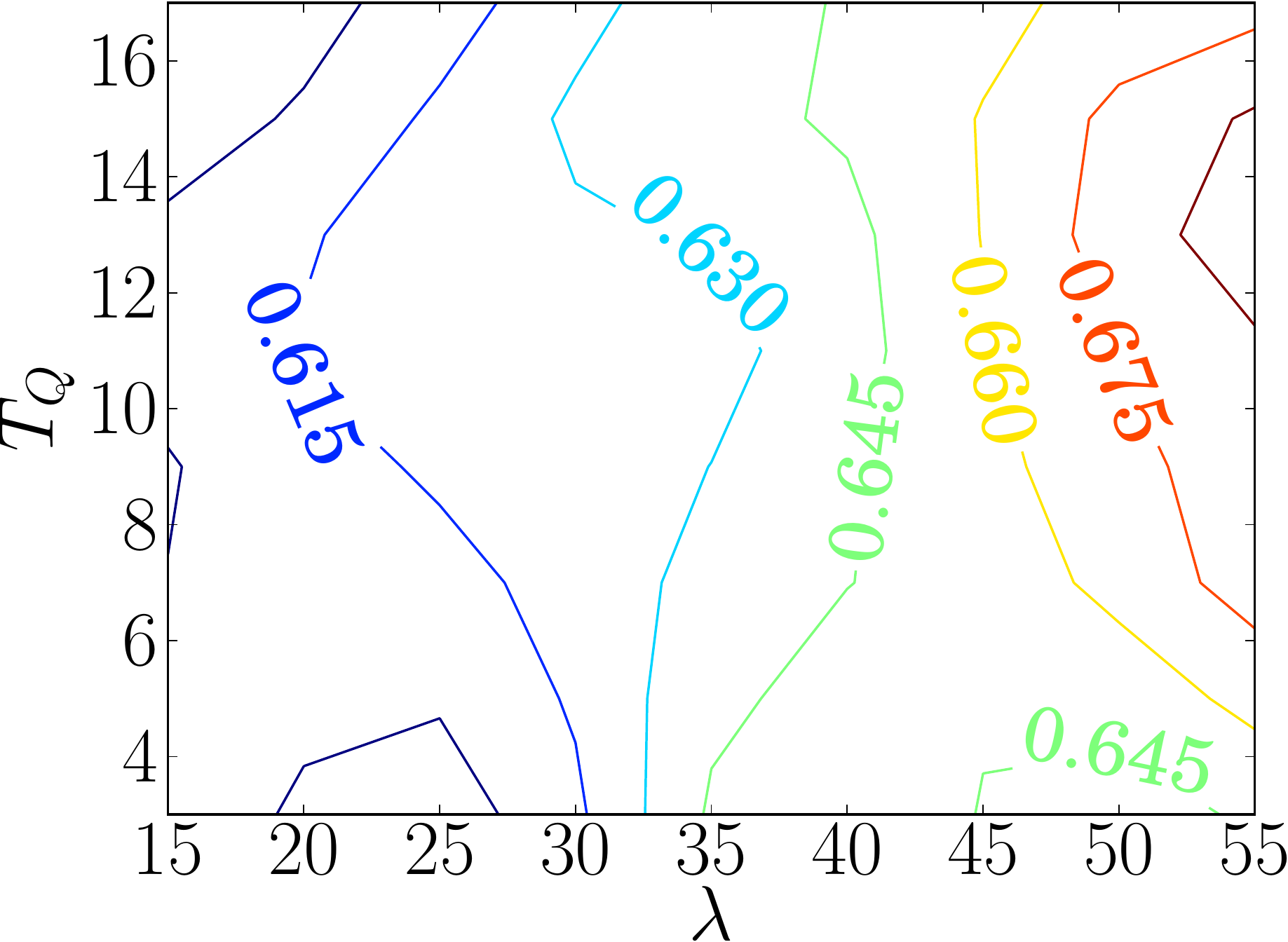}}
    \caption{}\label{fig:tunesdm-tqlamb}
  \end{subfigure}
  \begin{subfigure}[t]{.32\linewidth}
    \raisebox{.1cm}{\includegraphics[width=\textwidth]{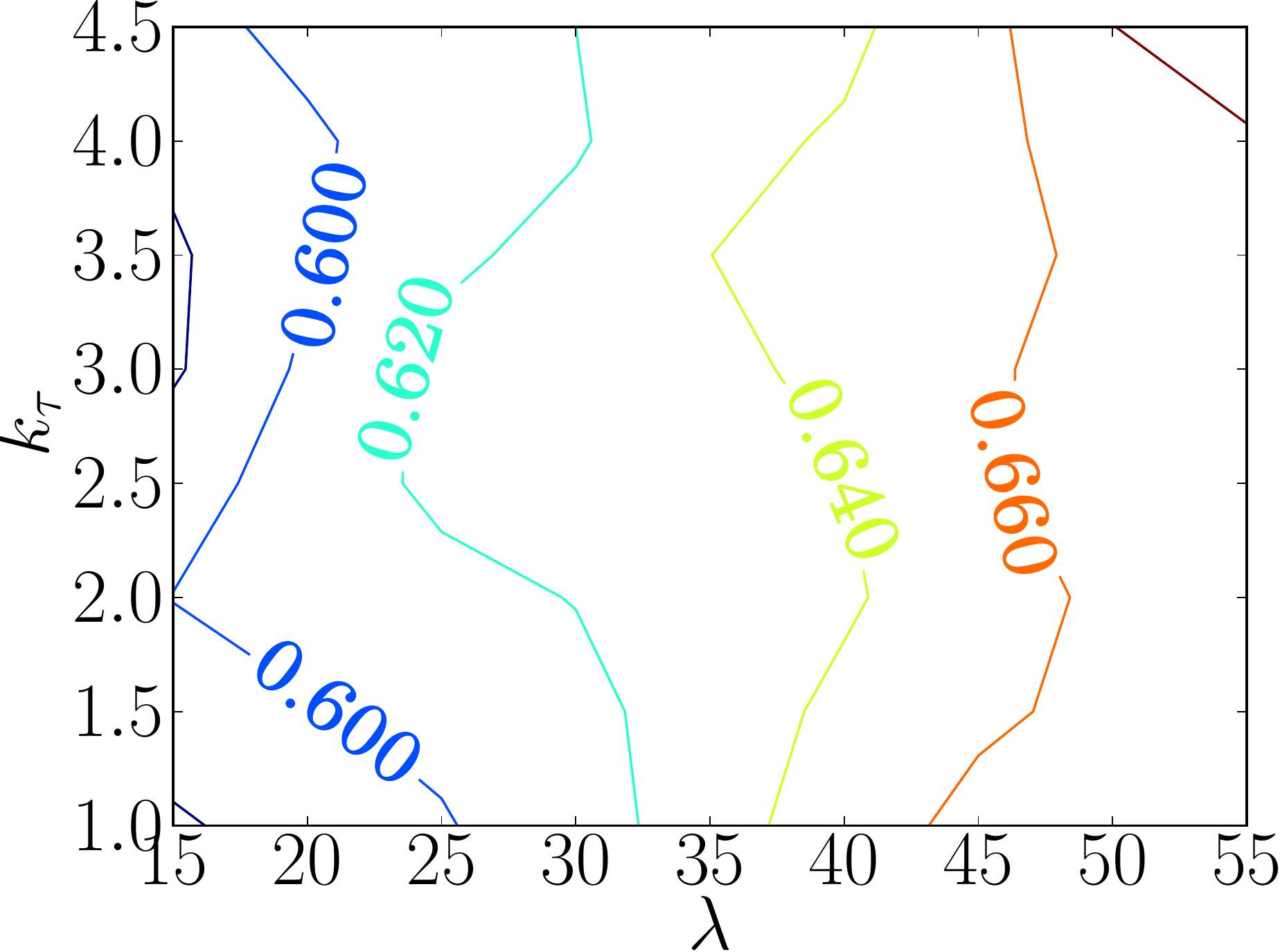}}
  \caption{}\label{fig:tunesdm-ktlamb}
  \end{subfigure}
  \begin{subfigure}[t]{.33\linewidth}
  \includegraphics[width=\textwidth]{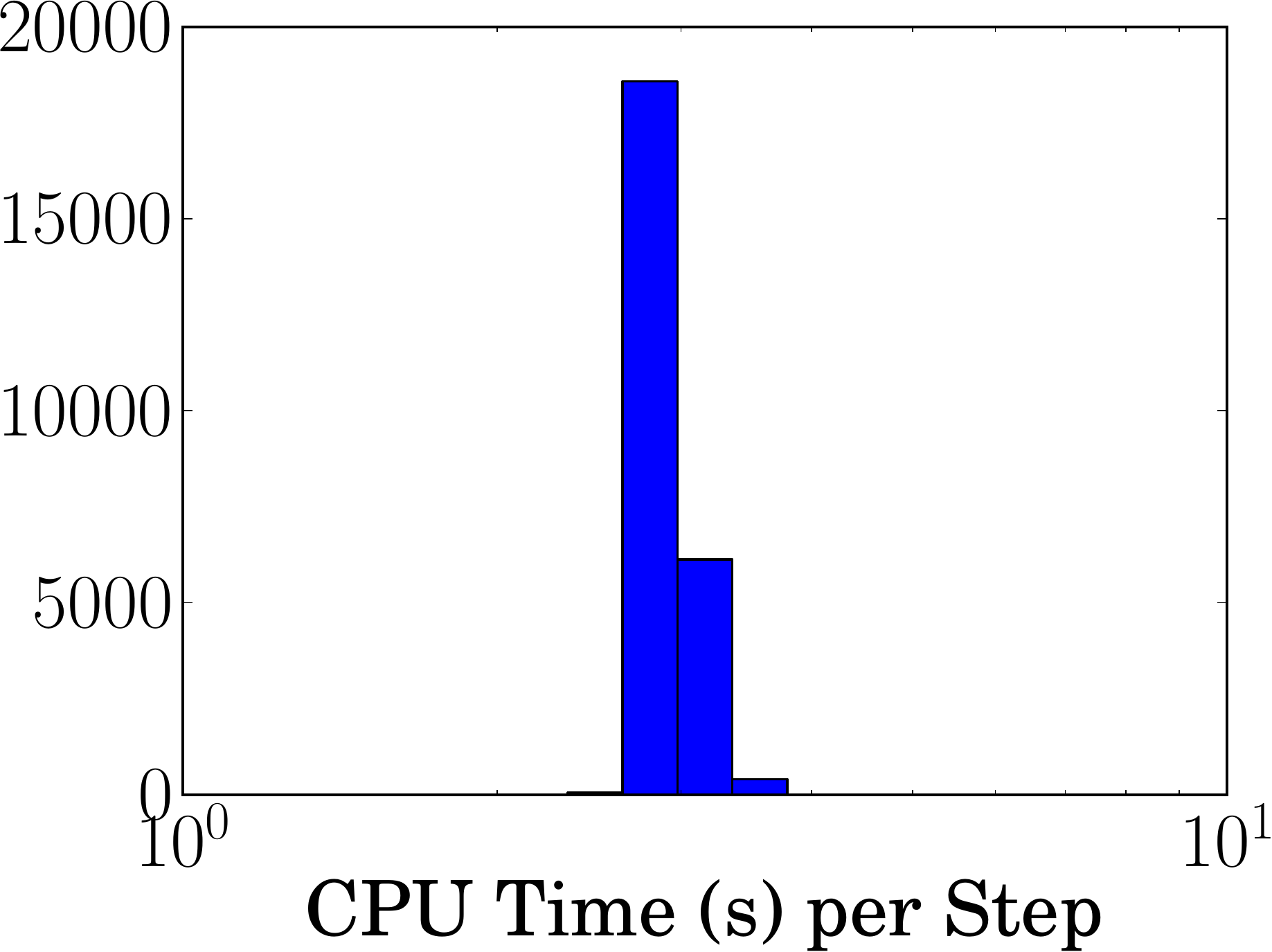}
  \caption{}\label{fig:tunesdm-cput}
  \end{subfigure}
  \begin{subfigure}[t]{.19\linewidth}
  \includegraphics[width=\columnwidth]{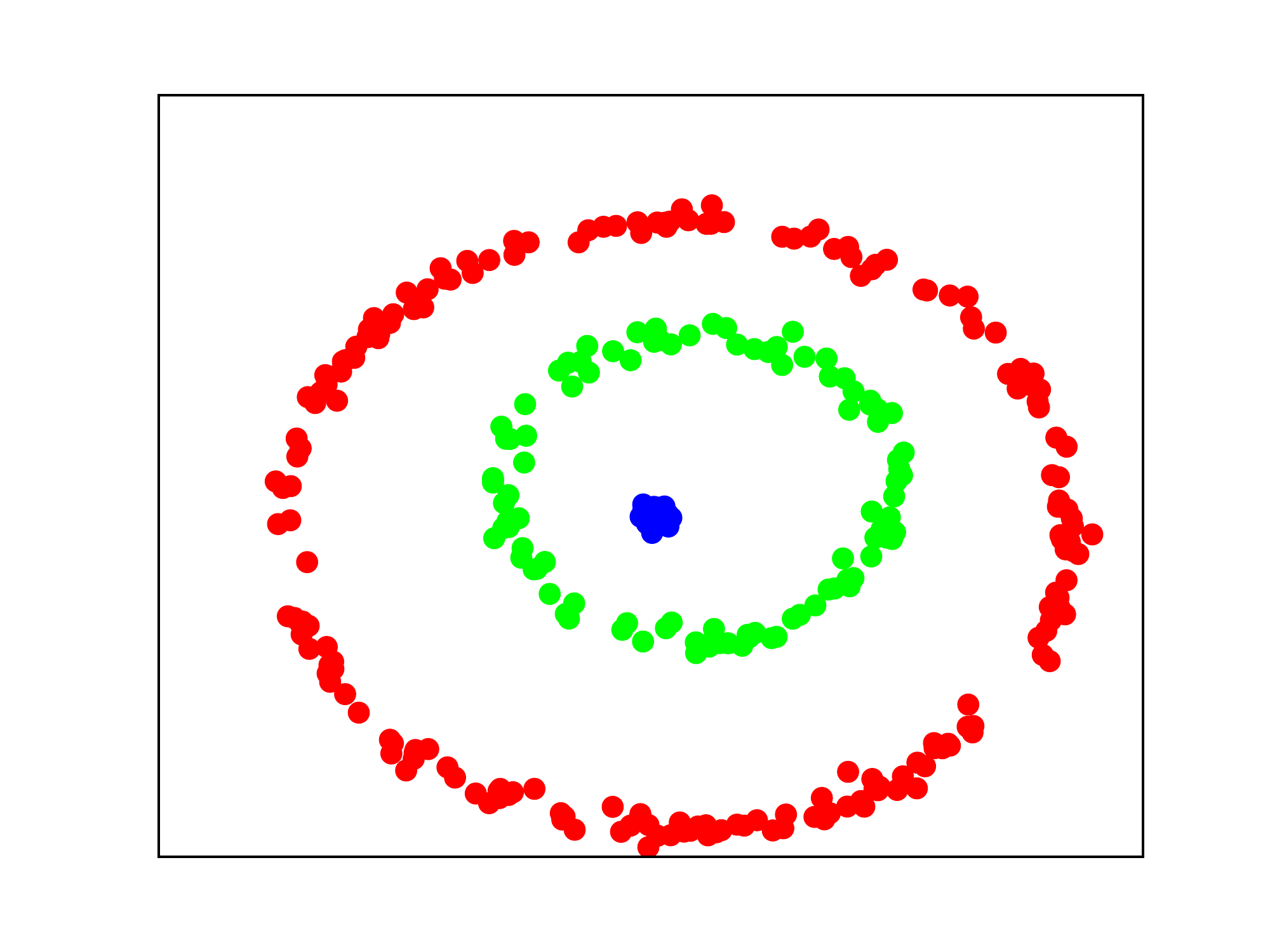}
  \end{subfigure}
  \begin{subfigure}[t]{.19\linewidth}
  \includegraphics[width=\columnwidth]{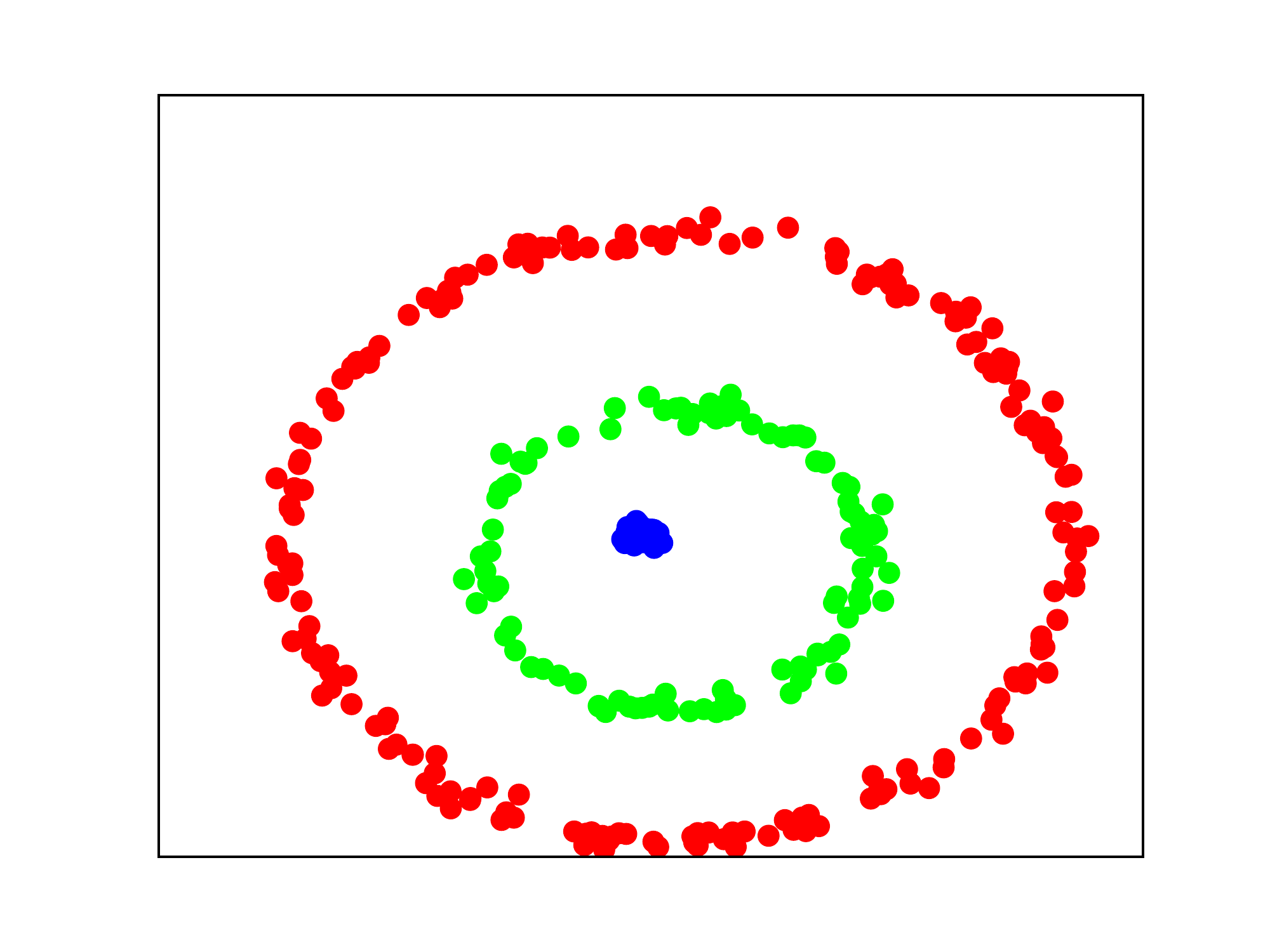}
  \end{subfigure}
  \begin{subfigure}[t]{.19\linewidth}
  \includegraphics[width=\columnwidth]{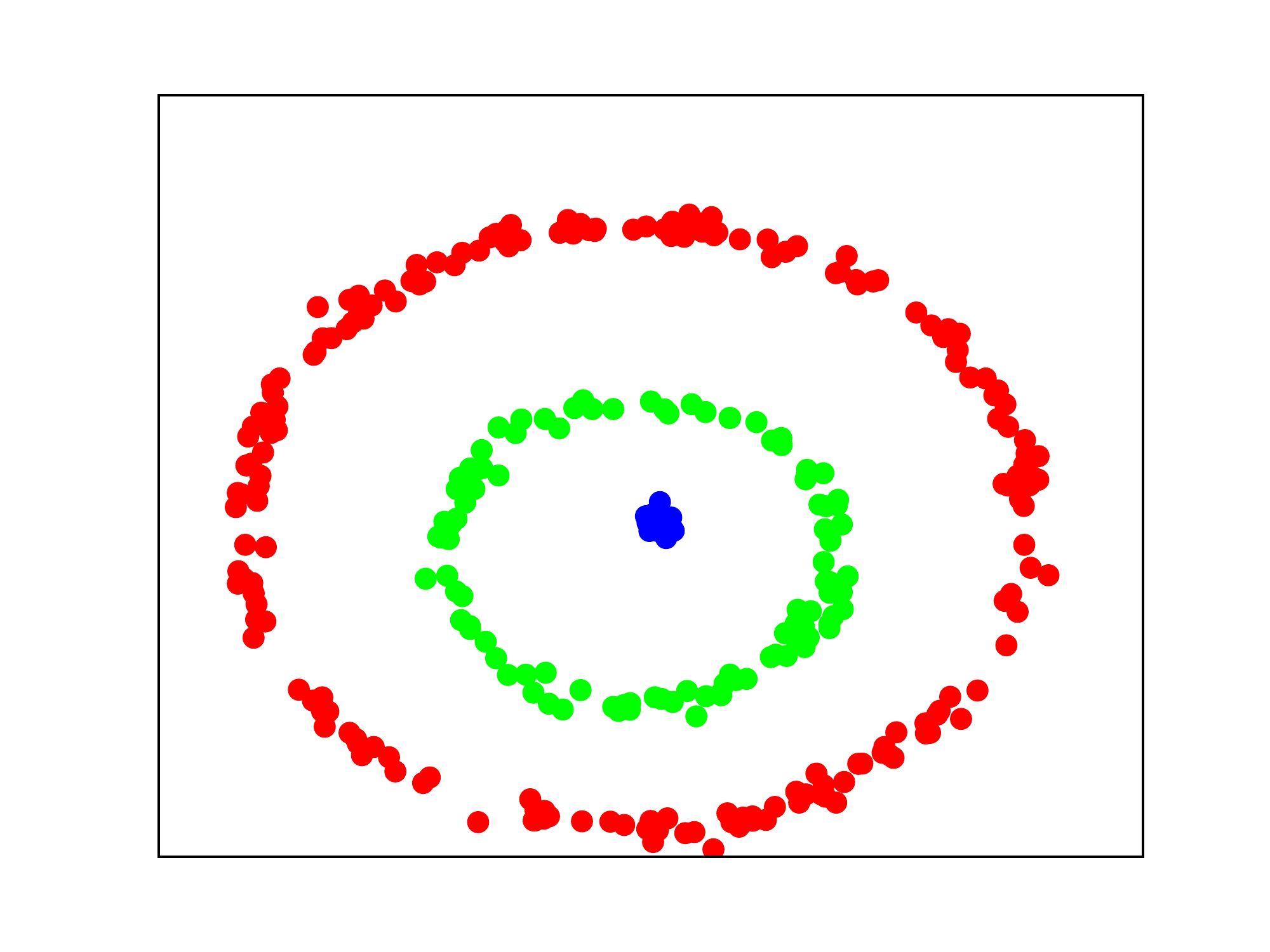}
  \caption{}\label{fig:sdmtrace}
  \end{subfigure}
  \begin{subfigure}[t]{.19\linewidth}
  \includegraphics[width=\columnwidth]{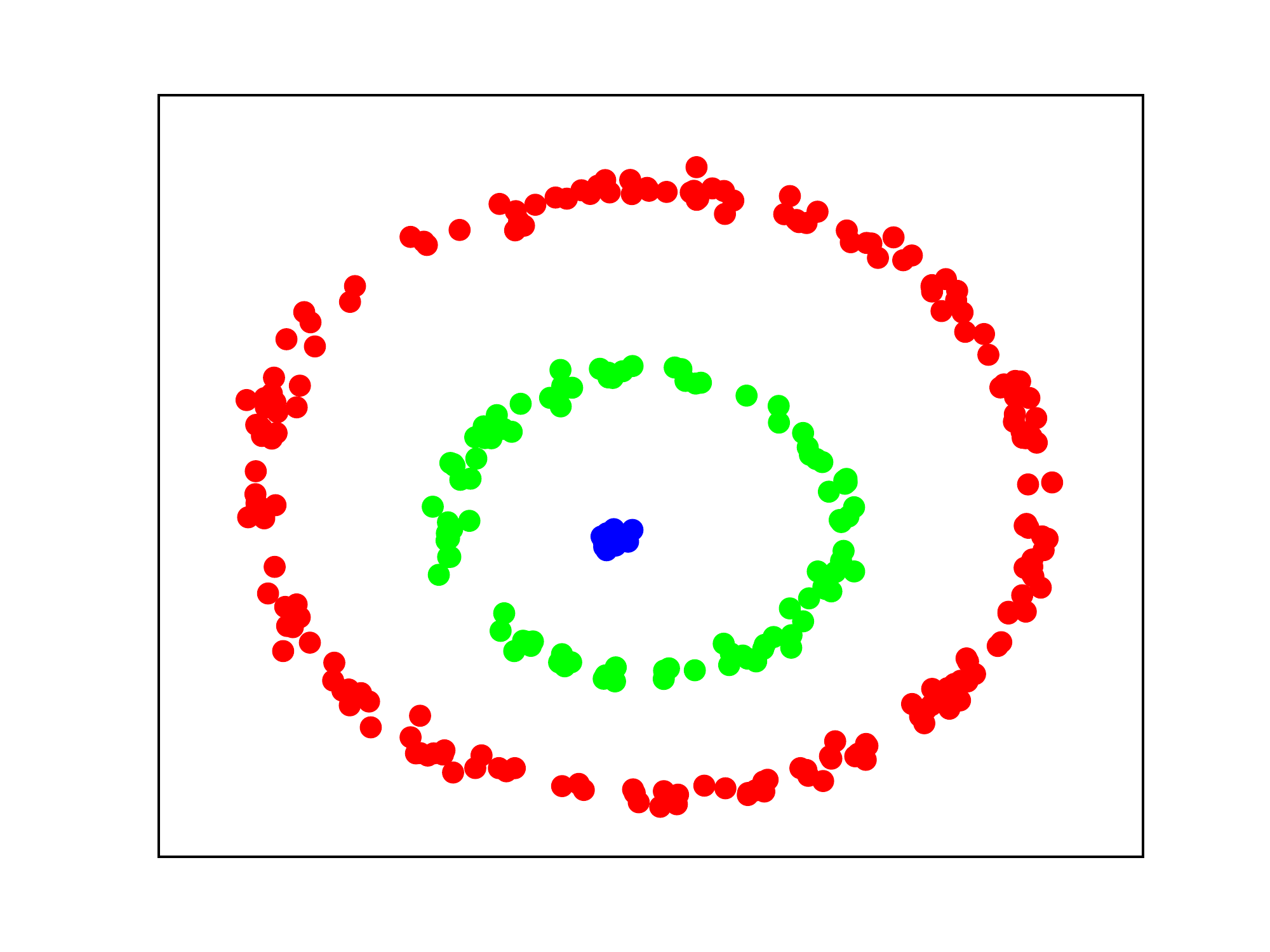}
  \end{subfigure}
  \begin{subfigure}[t]{.19\linewidth}
  \includegraphics[width=\columnwidth]{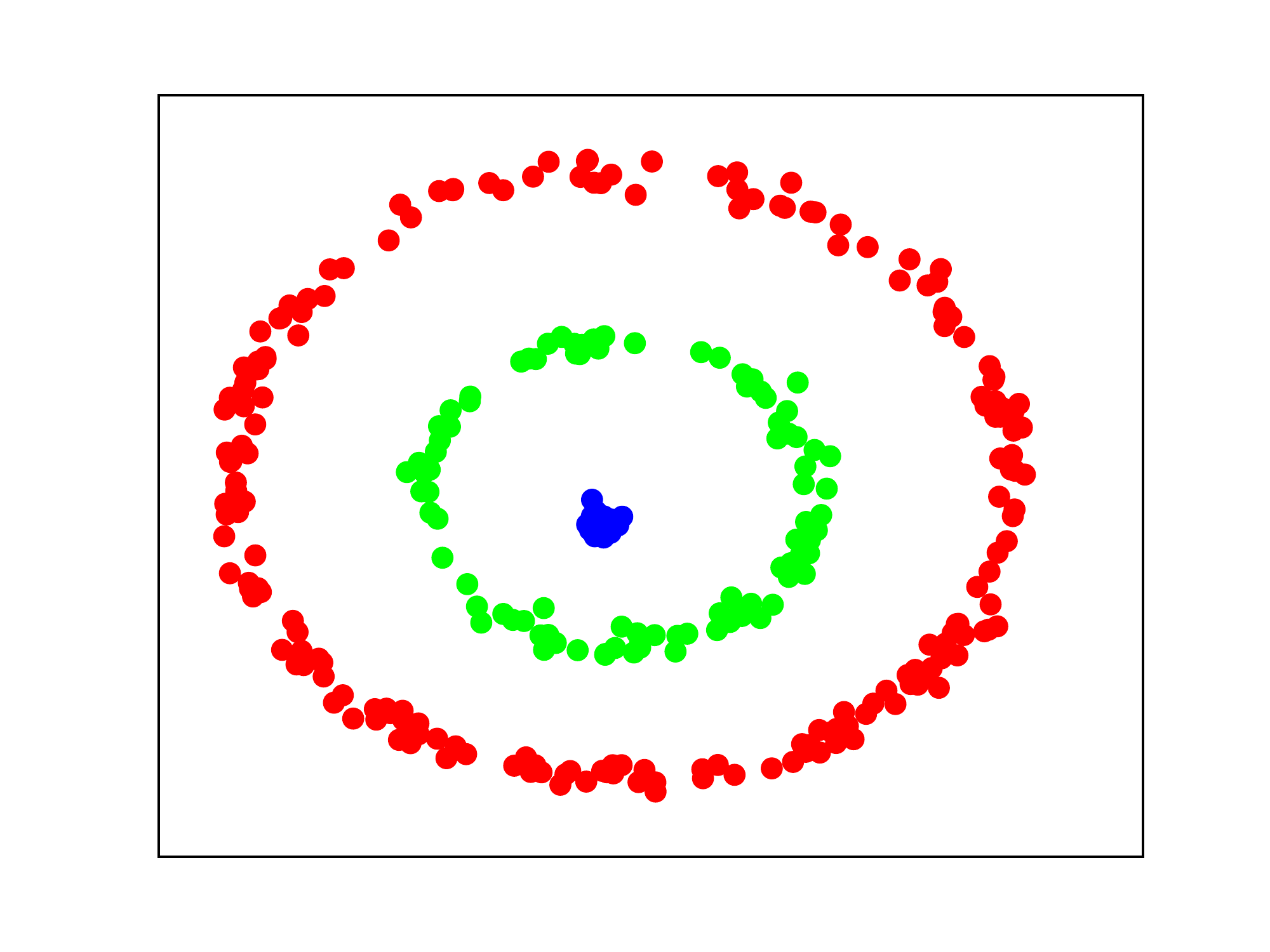}
  \end{subfigure}

  \begin{subfigure}[t]{.19\linewidth}
  \includegraphics[width=\columnwidth]{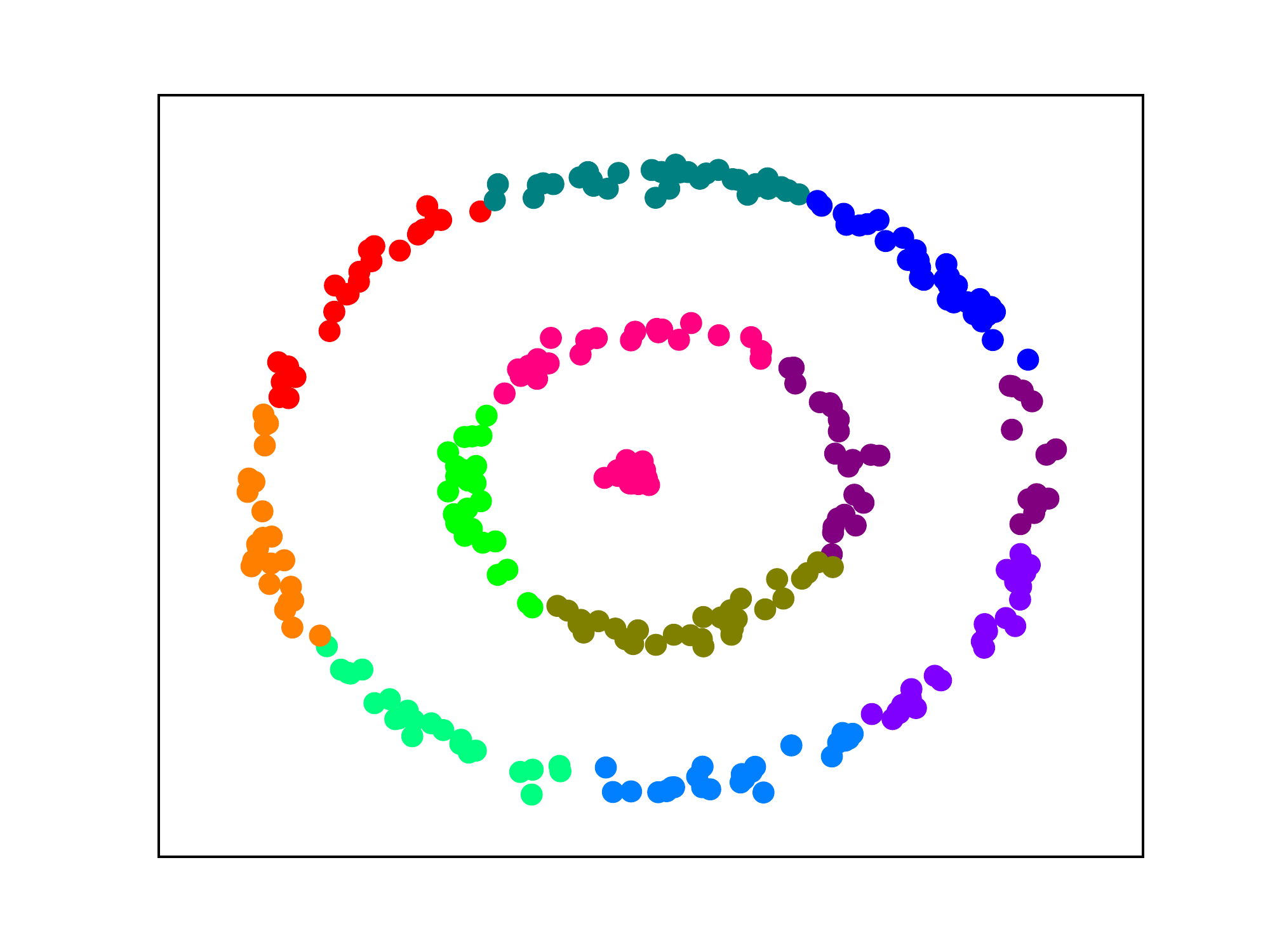}
  \end{subfigure}
  \begin{subfigure}[t]{.19\linewidth}
  \includegraphics[width=\columnwidth]{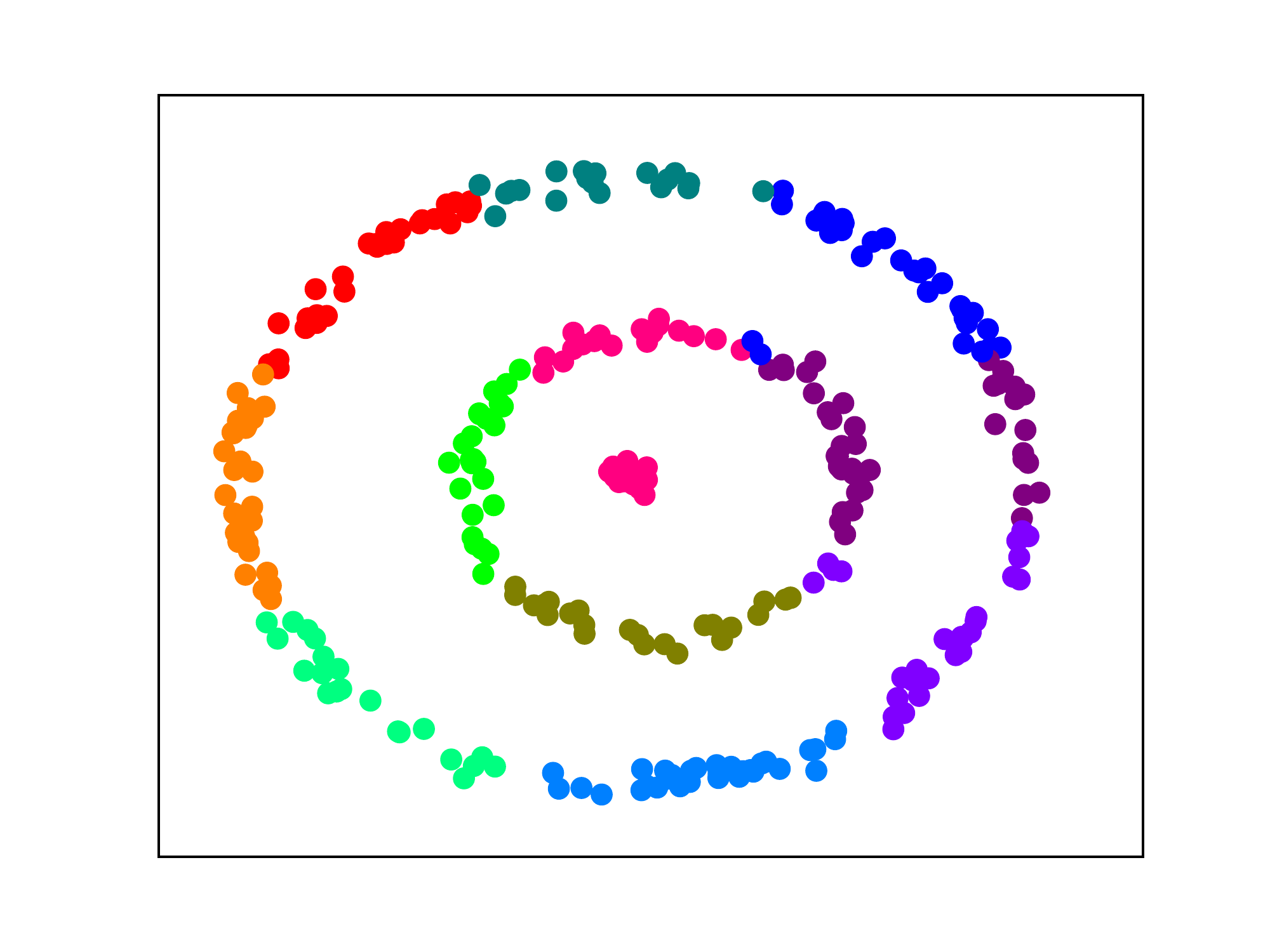}
  \end{subfigure}
  \begin{subfigure}[t]{.19\linewidth}
  \includegraphics[width=\columnwidth]{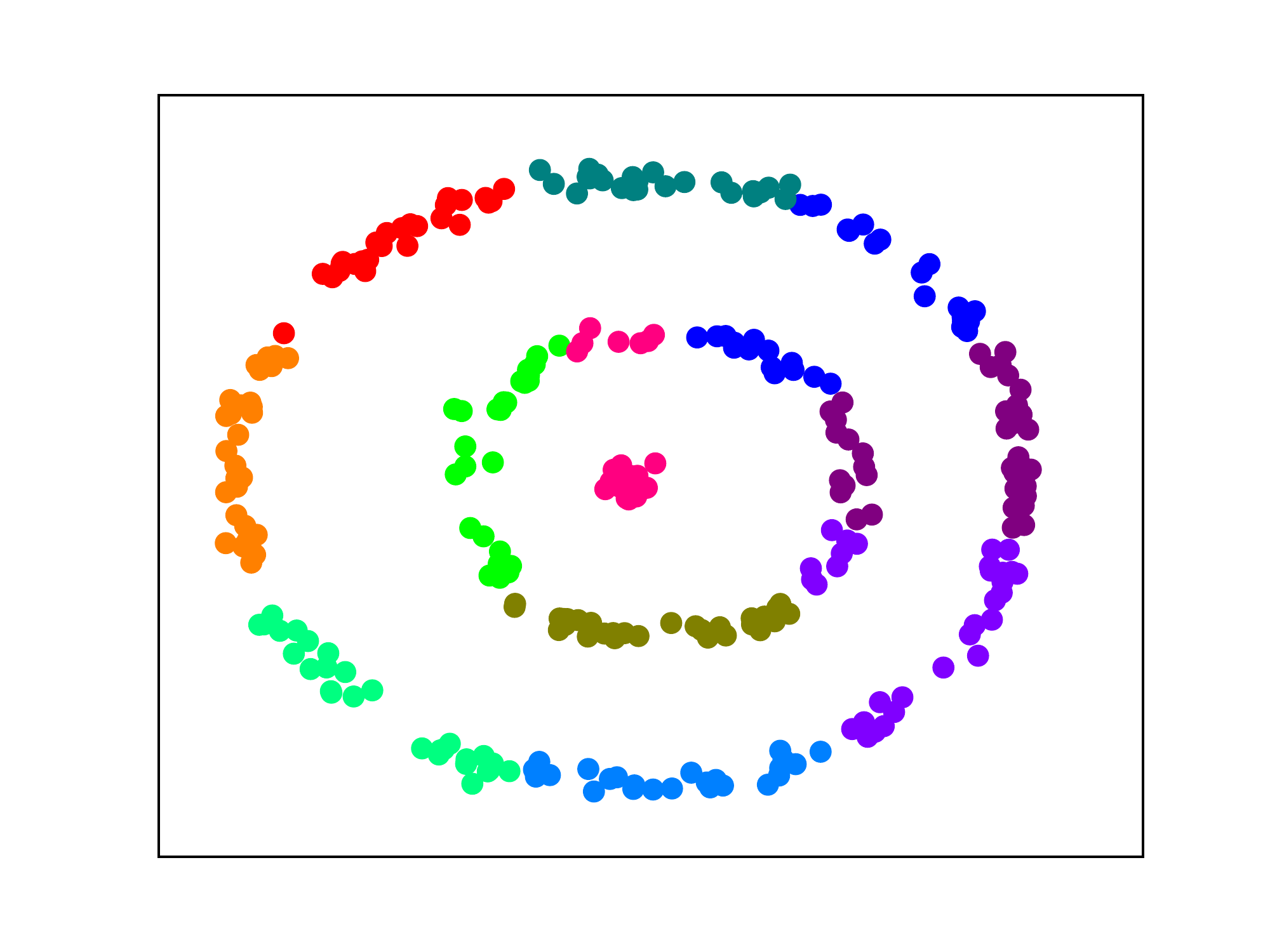}
  \caption{}\label{fig:dmtrace}
  \end{subfigure}
  \begin{subfigure}[t]{.19\linewidth}
  \includegraphics[width=\columnwidth]{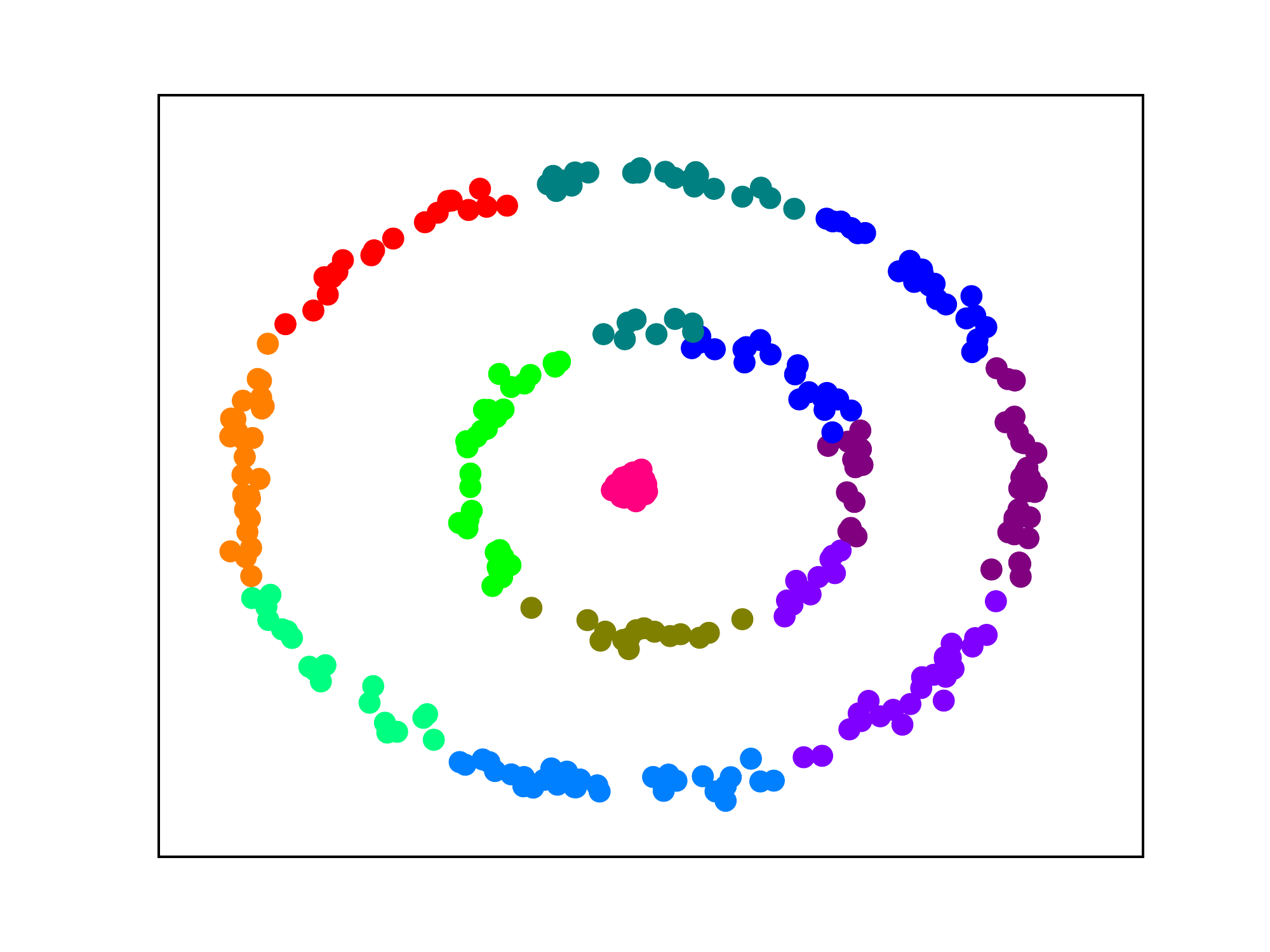}
  \end{subfigure}
  \begin{subfigure}[t]{.19\linewidth}
  \includegraphics[width=\columnwidth]{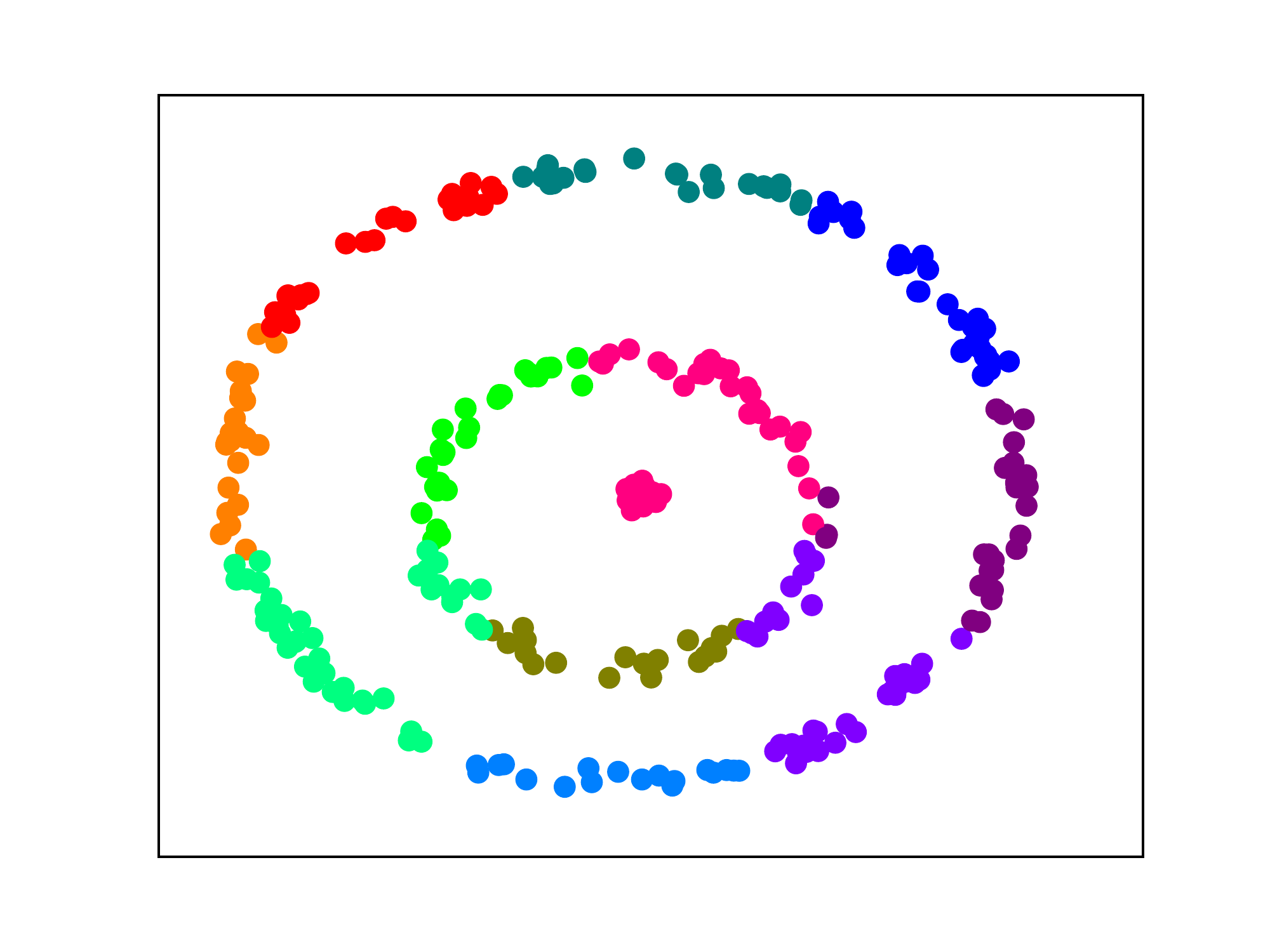}
  \end{subfigure}

  \caption{(\ref{fig:tunesdm-tqlamb} - \ref{fig:tunesdm-cput}): Accuracy
  contours and CPU time histogram for SD-Means on the synthetic ring data.
  (\ref{fig:sdmtrace} - \ref{fig:dmtrace}) An example 5 sequential frames from clustering
  using SD-Means (\ref{fig:sdmtrace}) and D-Means (\ref{fig:dmtrace}).
}\label{fig:SynthTestSDM}
\end{center}
\end{figure}

To illustrate the strengths of SD-Means with respect to D-Means, 
a second synthetic dataset of moving concentric rings was clustered in the same
$[0, 1] \times [0, 1]$ domain. At each
\begin{wrapfigure}[9]{r}{.2\textwidth}
  \centering 
  \captionsetup{font=scriptsize}
  \vspace*{-.07in}
  \includegraphics[width=.2\textwidth]{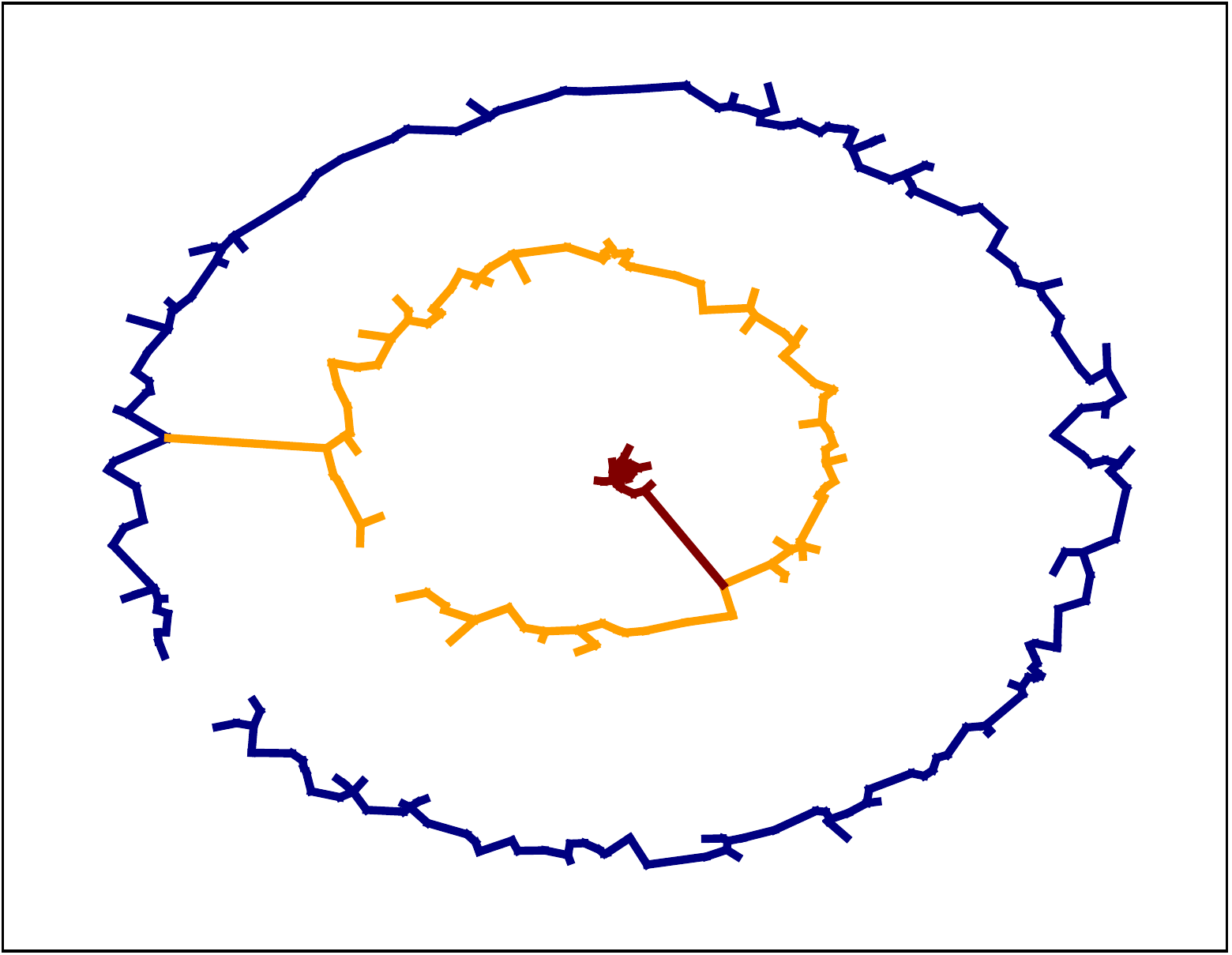}
  \caption{MST kernel to the center point.}\label{fig:ringkrn}
\end{wrapfigure}
  time step, 400 datapoints were generated uniformly
randomly on 3 rings of radius 0.4, 0.2, and 0.0 respectively,
with added isotropic Gaussian noise of standard deviation 0.03.
Between time steps, the three rings moved randomly, with displacements
sampled from an isotropic Gaussian of standard deviation 0.05.
The kernel function for SD-Means between two points $y_1, y_2$ was 
$\exp\left(-\frac{d^2}{2\omega^2}\right)$, where $\omega =0.07$ was a constant,
and $d$ was the sum of distances greater than $\omega$ along the
path connecting $y_1$ and $y_2$ through the minimum Euclidean spanning tree
of the dataset at each timestep. This kernel was used to capture long-range
similarity between points on the same ring. An example of the minimum spanning
tree and kernel function evaluation is shown in Figure \ref{fig:ringkrn}.
Finally, a similar parameter sweep to the previous experiment was used to find the 
best parameter settings for SD-Means ($\lambda = 55$, $T_Q = 13$, $k_\tau =
4.5$) and D-Means ($\lambda = 0.1$, $T_Q = 15$, $k_\tau = 1.1$).

Figure \ref{fig:SynthTestSDM} shows the results from this experiment averaged
over 50 trials. SD-Means
exhibits a similar robustness to its parameter settings as D-Means on the moving
Gaussian data, and is generally able to correctly cluster the moving rings. In contrast, 
D-Means is unable
to capture the rings, due to their nonlinear separability, and introduces many
erroneous clusters. Further, there is a significant computational cost to pay for the
\begin{wraptable}[5]{r}{.4\textwidth}
  \centering \vspace*{-.1in}
  \captionsetup{font=scriptsize}
  \caption{Mean computational time \& accuracy on synthetic ring
  data over 50 trials.}\label{tab:ringresults}
  {\small
  \begin{tabular}{l|c|l}
    \textbf{Alg.} & \textbf{\% Acc.} & \textbf{Time (s)}\\
    \hline
\rule{0pt}{10pt}SD-Means & $75.1$ & $2.9$\\ 
D-Means &  $18.7$& $8.4\times 10^{-4}$\\ 
  \end{tabular}
}
\end{wraptable}
flexibility of SD-Means, as expected. Table \ref{tab:ringresults} corroborates
these observations with numerical data. Thus, in practice, D-Means should be
the preferred algorithm unless the cluster structure of the data is not linearly
separable, in which case the extra flexibility of SD-Means is required.

\subsection{Synthetic Gaussian Processes} 
\afterpage{
\begin{figure}[t]
  \centering
  \captionsetup{font=scriptsize}
  \begin{subfigure}[b]{0.16\columnwidth}
      \centering
      \includegraphics[width=\columnwidth]{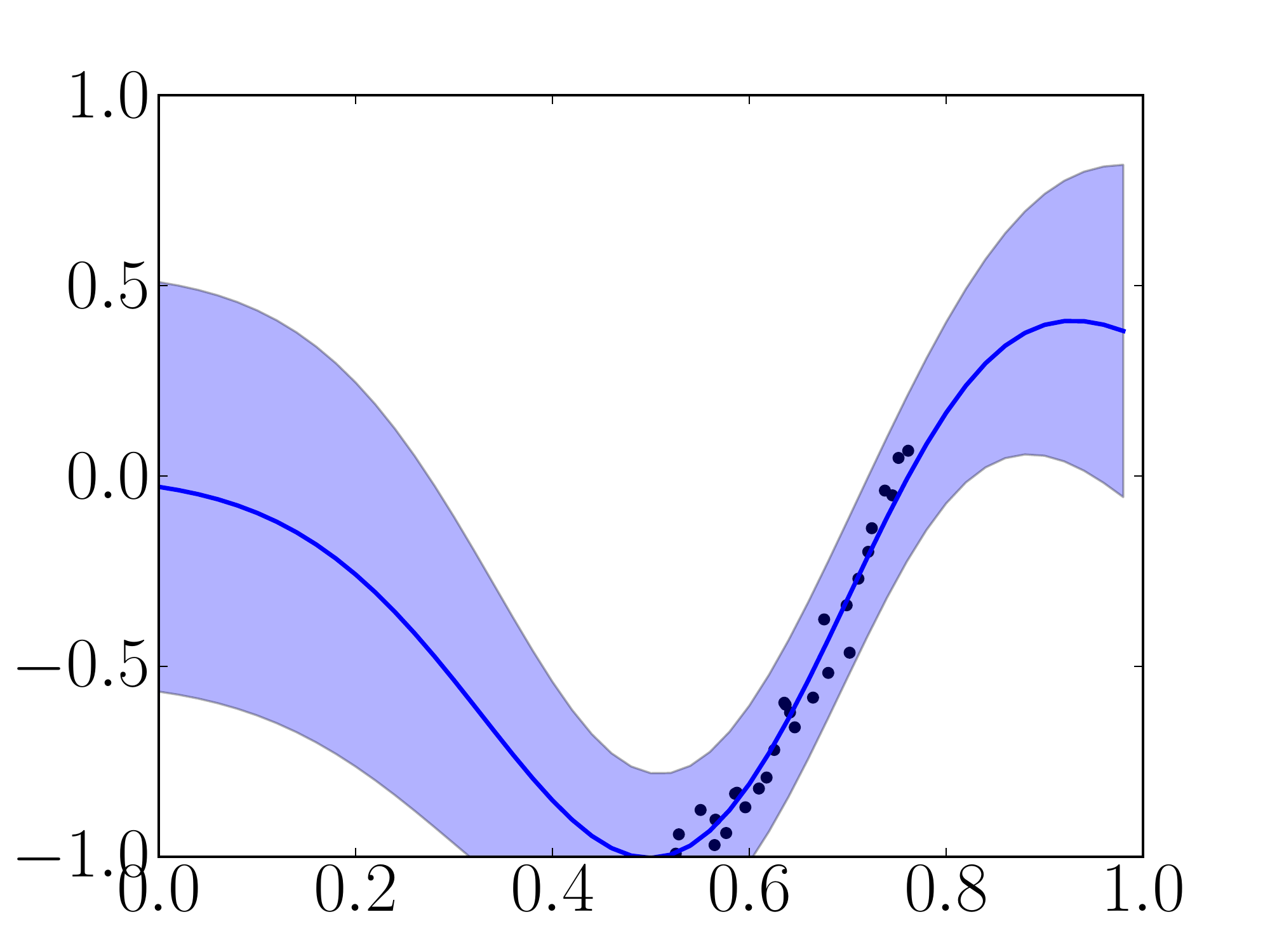}
    \end{subfigure}
\begin{subfigure}[b]{0.16\columnwidth}
      \centering
      \includegraphics[width=\columnwidth]{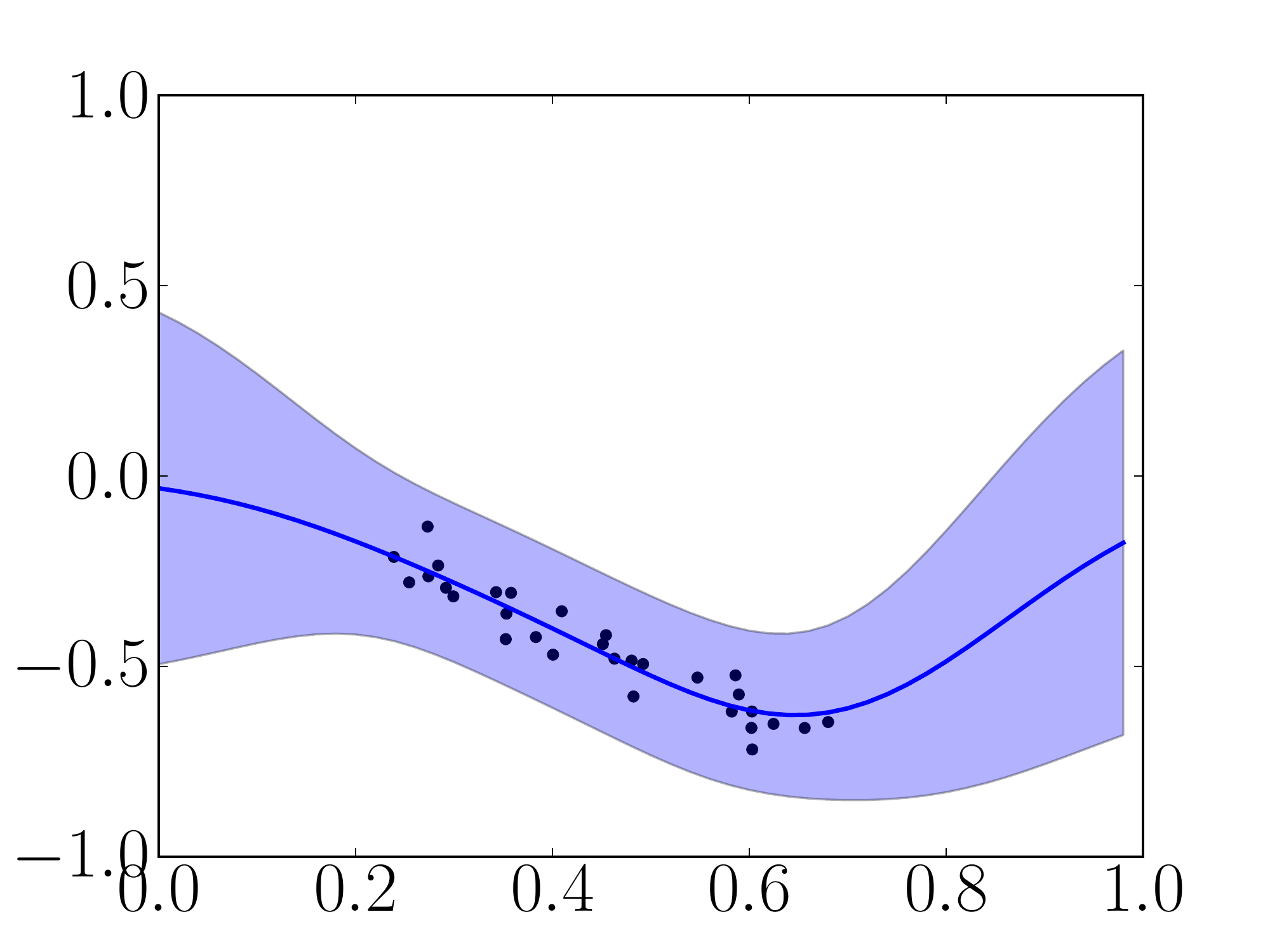}
    \end{subfigure}
\begin{subfigure}[b]{0.16\columnwidth}
      \centering
      \includegraphics[width=\columnwidth]{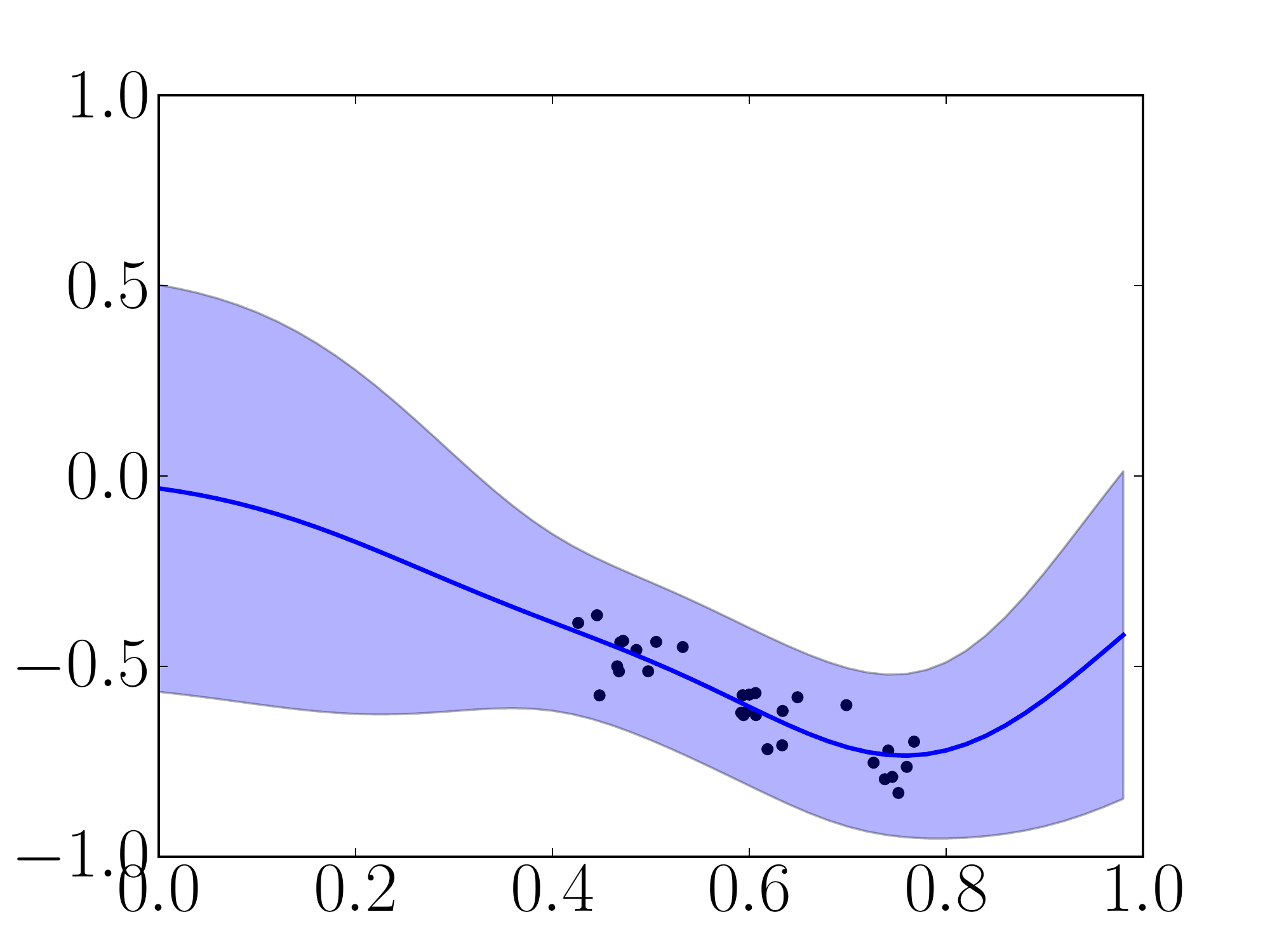}
    \end{subfigure}
\begin{subfigure}[b]{0.16\columnwidth}
      \centering
      \includegraphics[width=\columnwidth]{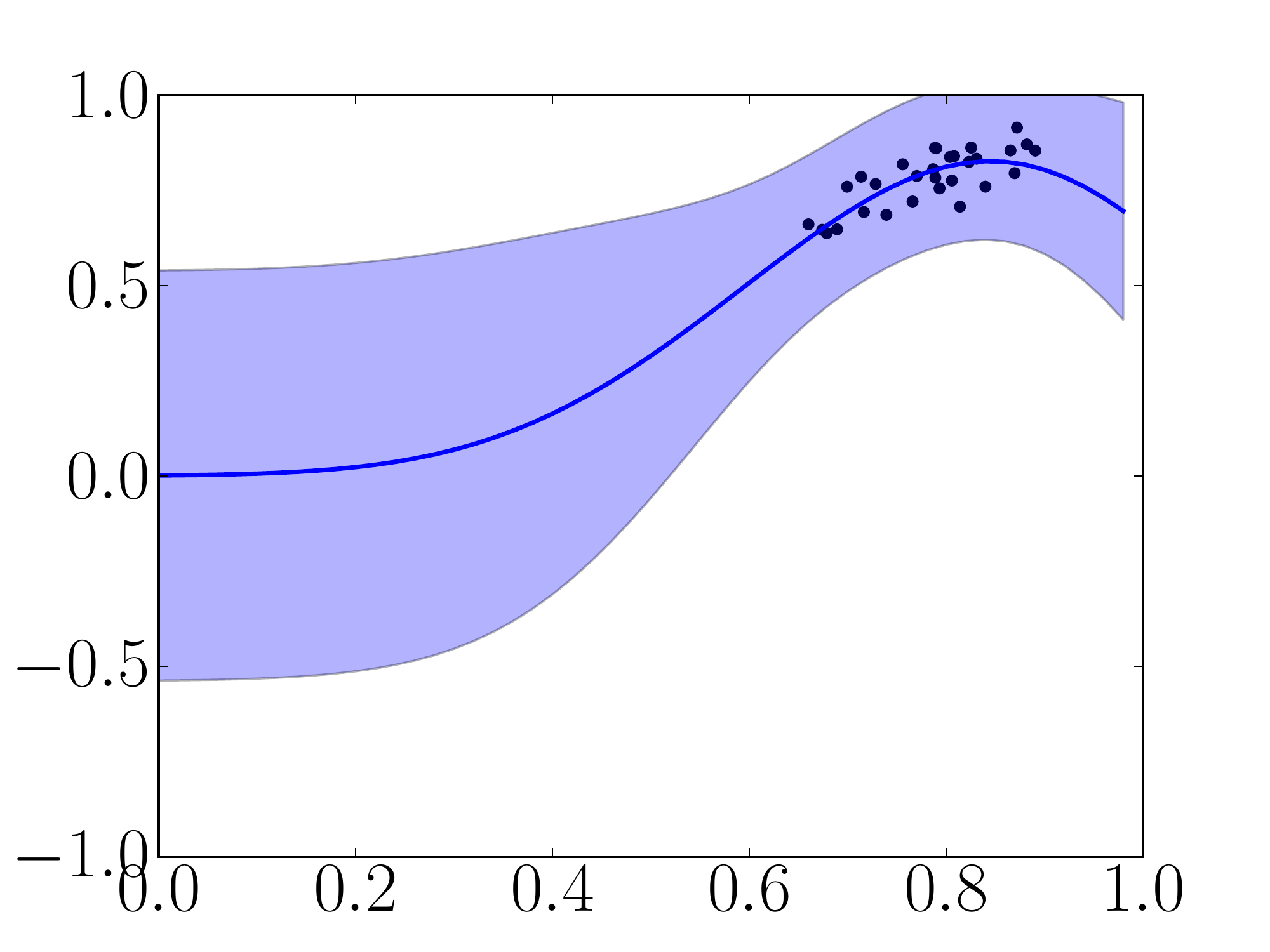}
    \end{subfigure}
\begin{subfigure}[b]{0.16\columnwidth}
      \centering
      \includegraphics[width=\columnwidth]{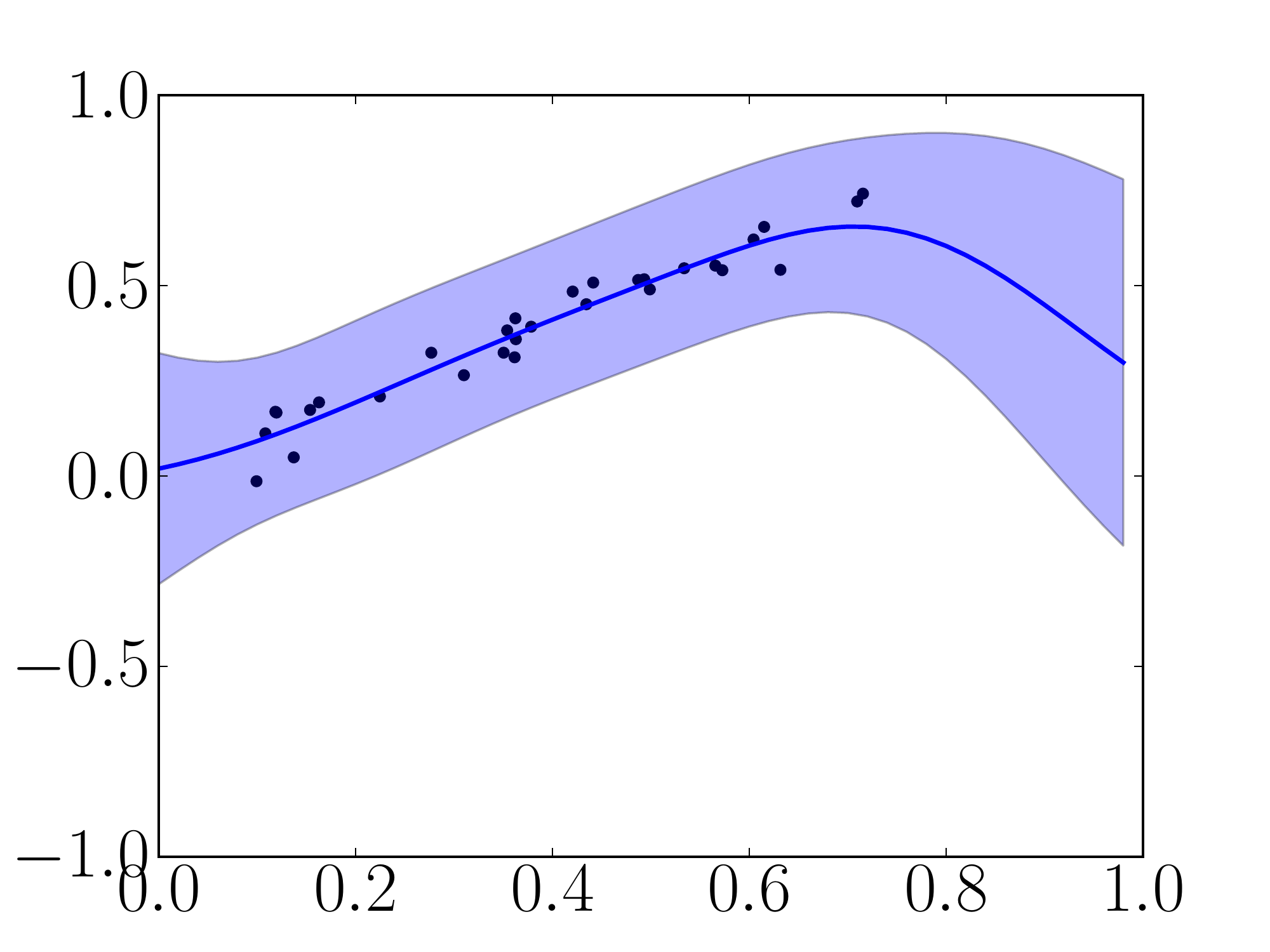}
    \end{subfigure}
\begin{subfigure}[b]{0.16\columnwidth}
      \centering
      \includegraphics[width=\columnwidth]{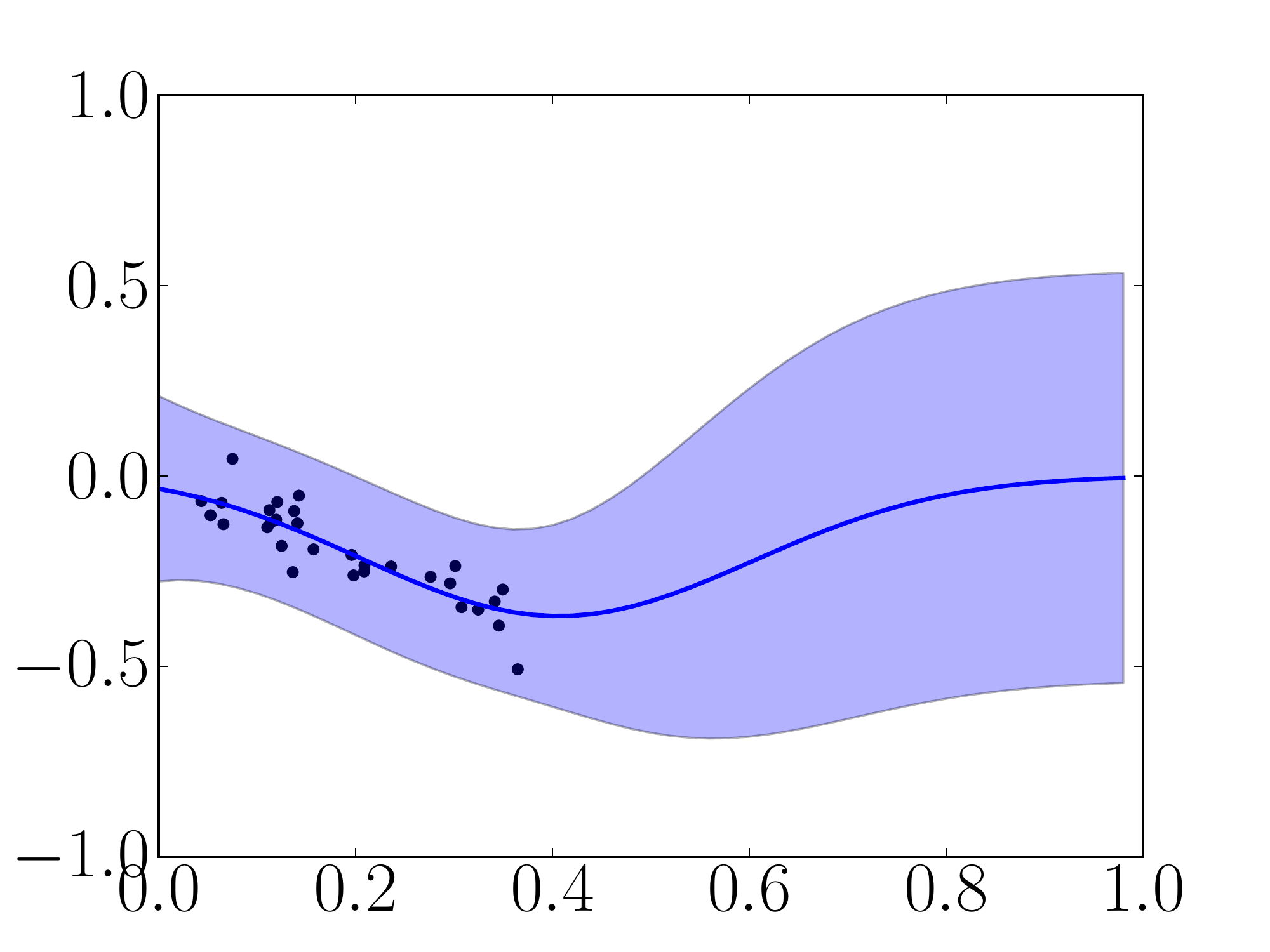}
    \end{subfigure}
    \caption{Six example GPs to be clustered}\label{fig:datagps}
\end{figure}
\begin{figure}[t]
  \centering
  \captionsetup{font=scriptsize}
  \begin{subfigure}[b]{0.16\columnwidth}
      \centering
      \includegraphics[width=\columnwidth]{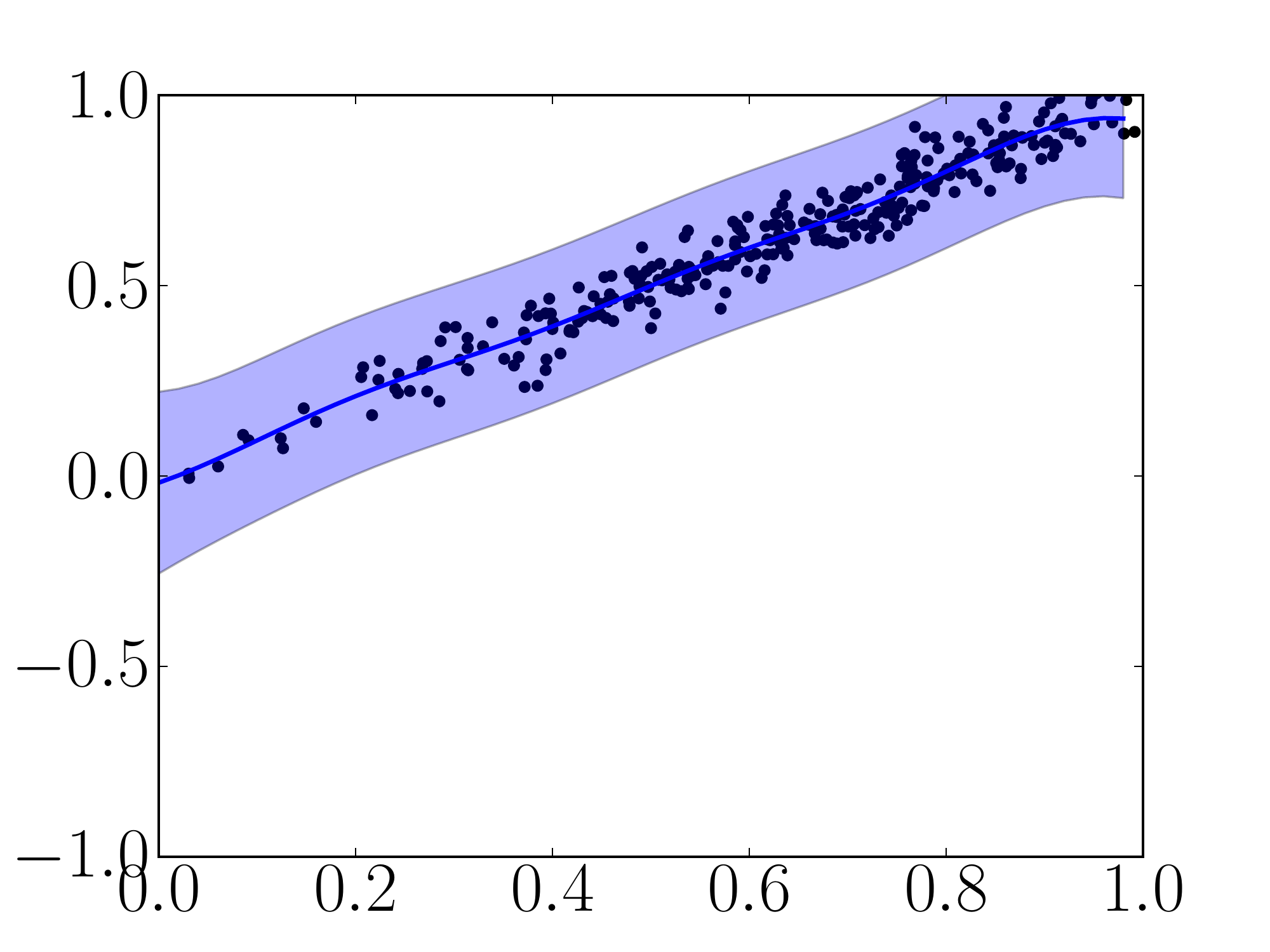}
    \end{subfigure}
\begin{subfigure}[b]{0.16\columnwidth}
      \centering
      \includegraphics[width=\columnwidth]{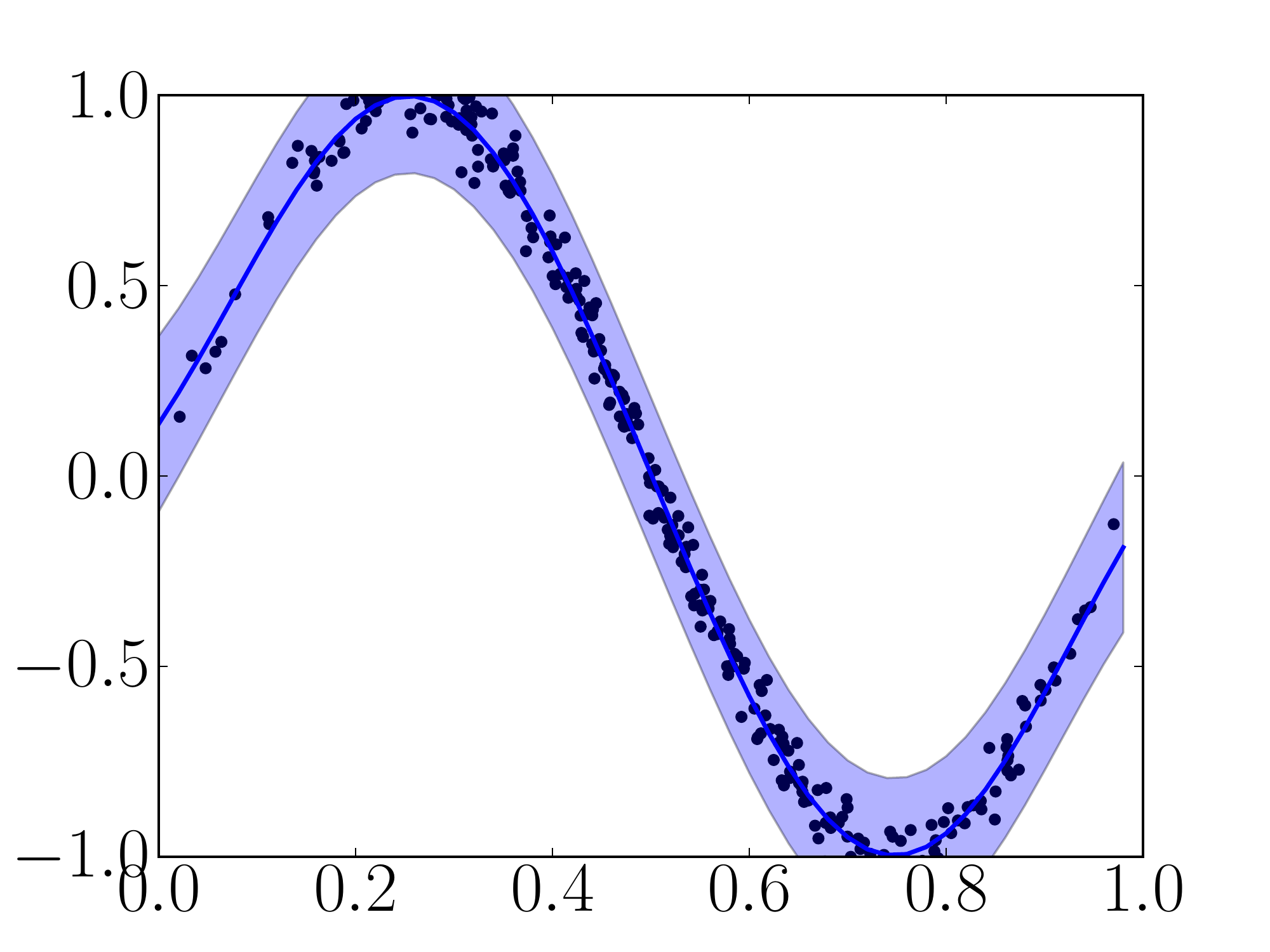}
    \end{subfigure}
\begin{subfigure}[b]{0.16\columnwidth}
      \centering
      \includegraphics[width=\columnwidth]{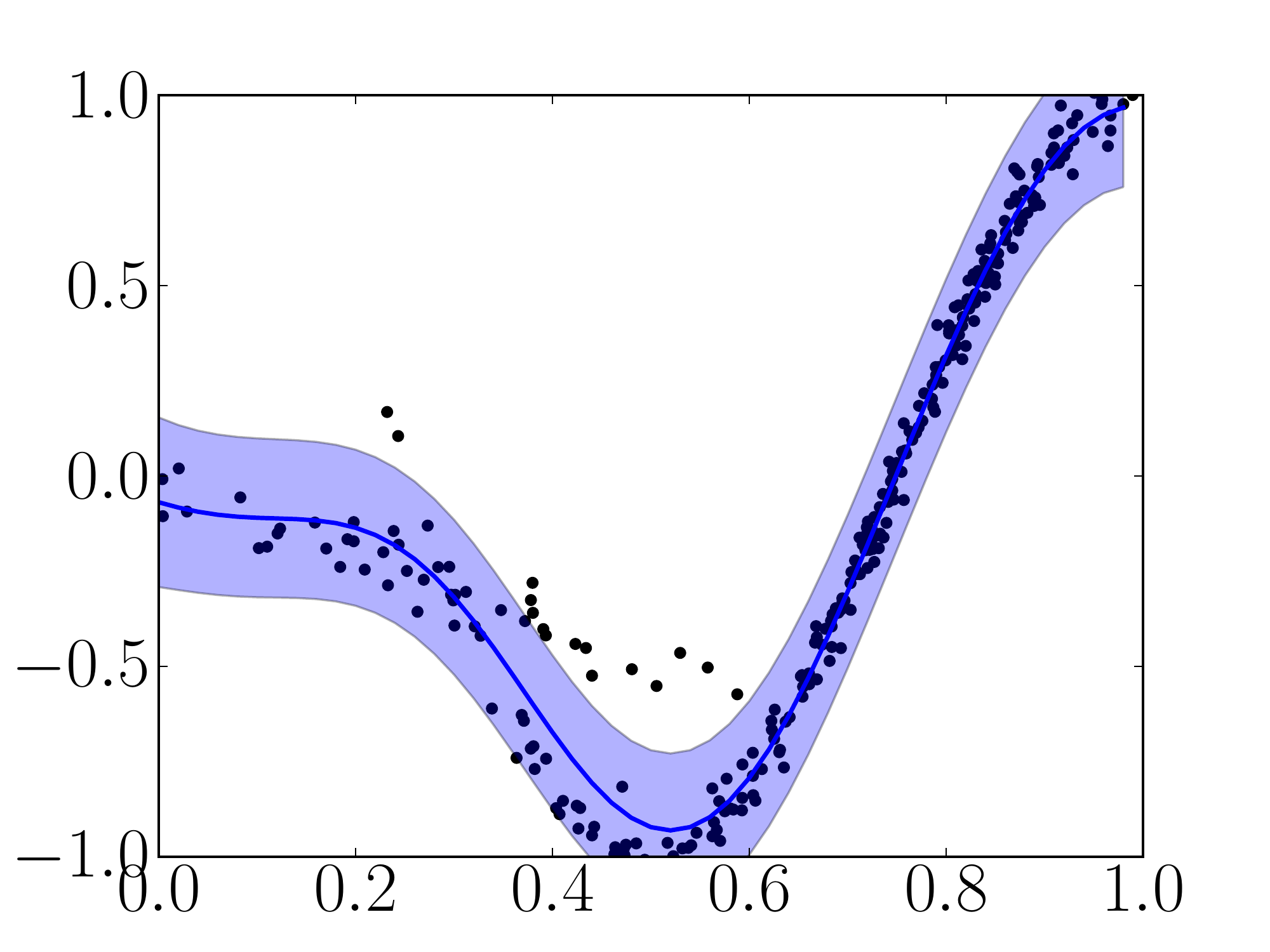}
    \end{subfigure}
\begin{subfigure}[b]{0.16\columnwidth}
      \centering
      \includegraphics[width=\columnwidth]{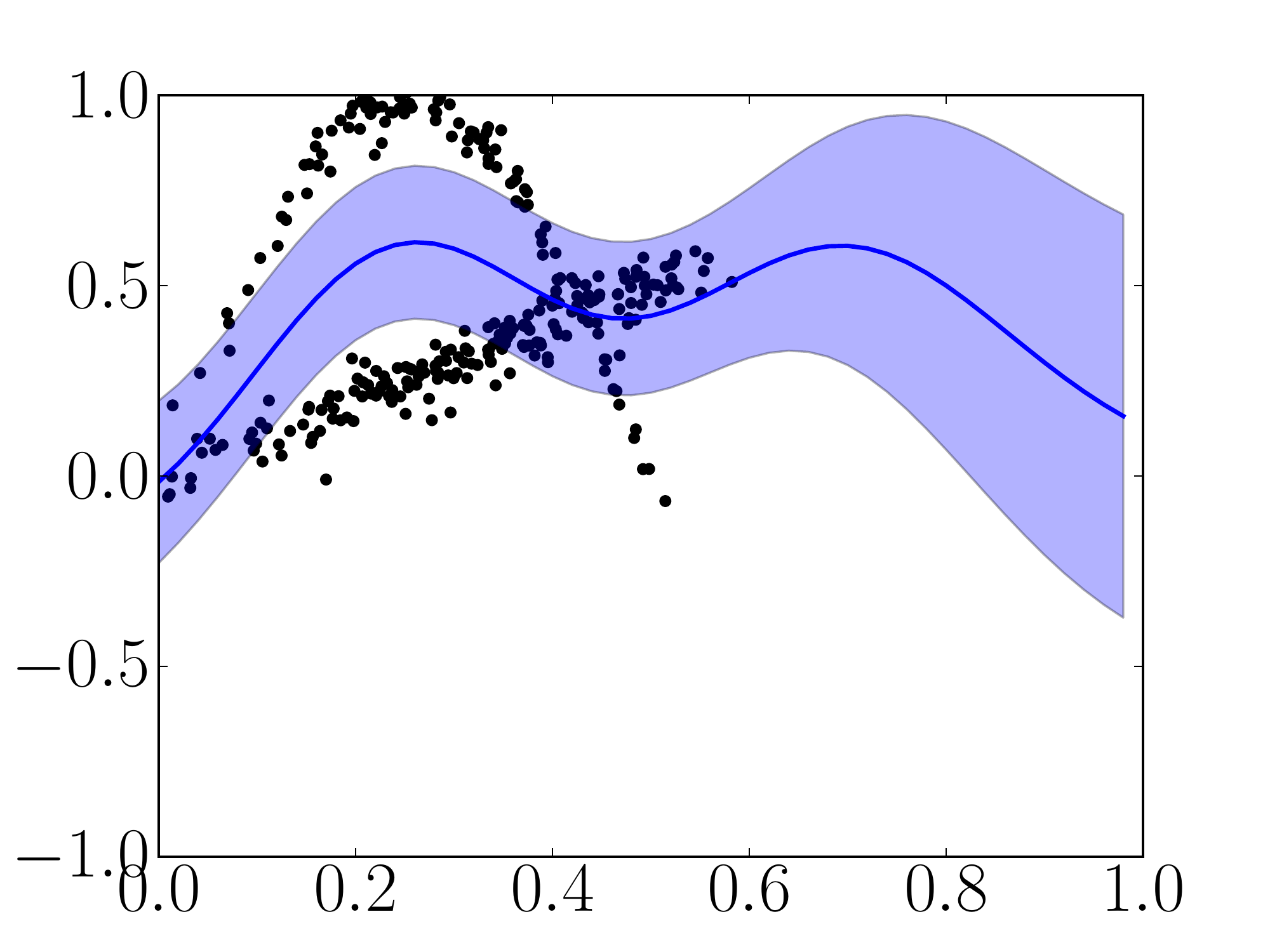}
    \end{subfigure}
\begin{subfigure}[b]{0.16\columnwidth}
      \centering
      \includegraphics[width=\columnwidth]{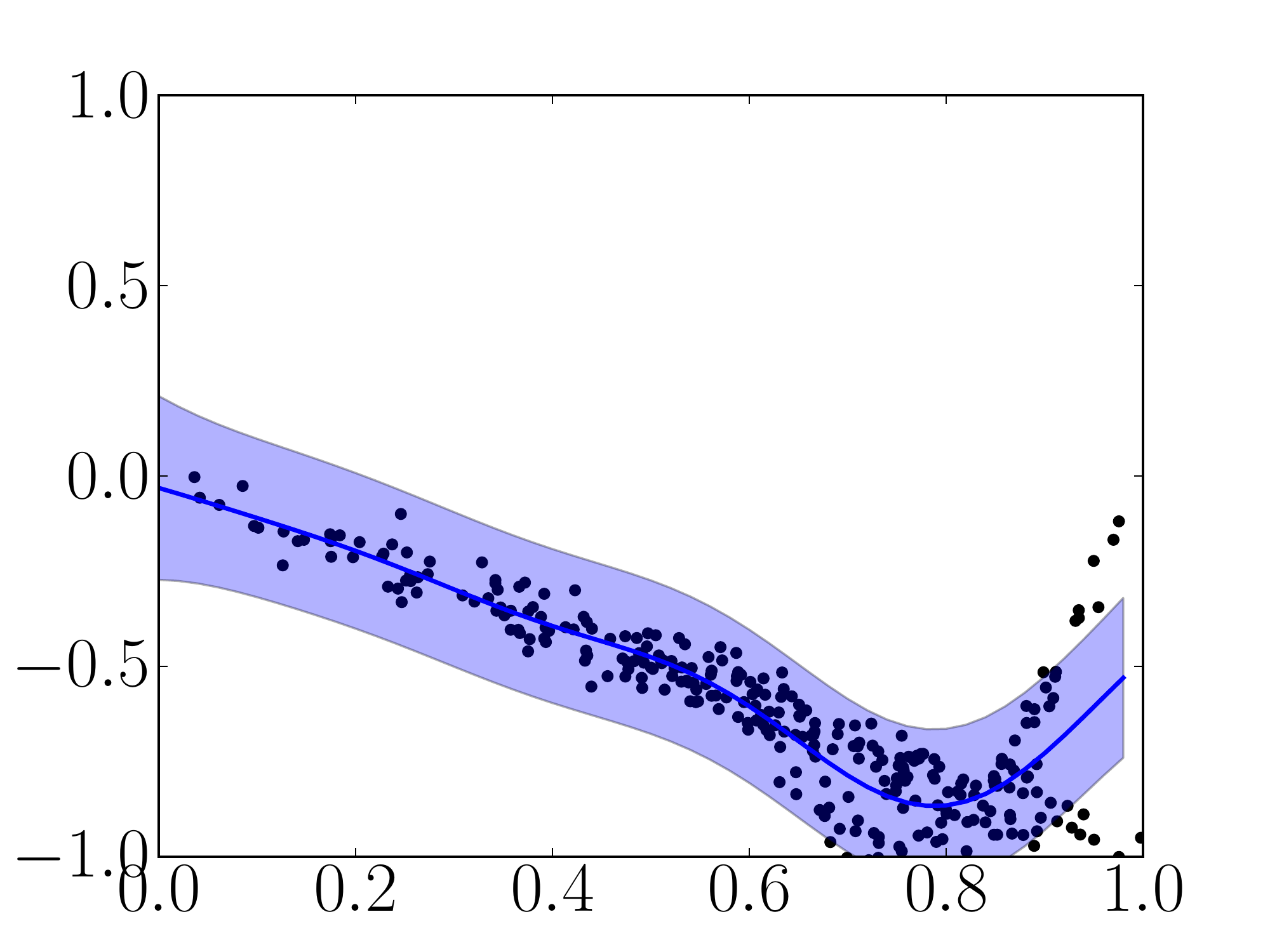}
    \end{subfigure}
\begin{subfigure}[b]{0.16\columnwidth}
      \centering
      \includegraphics[width=\columnwidth]{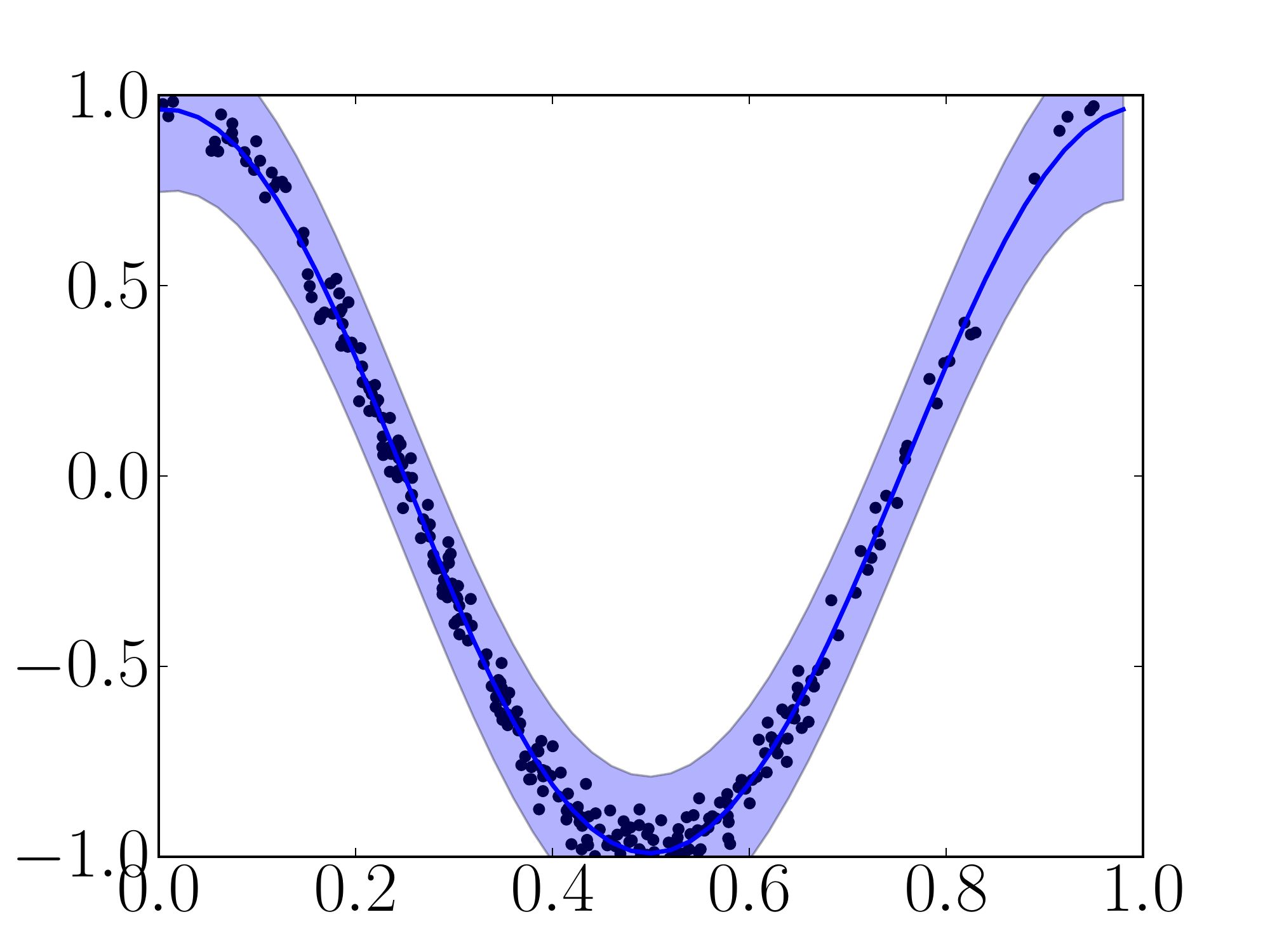}
    \end{subfigure}
    \caption{Typical set of latent functions discovered by D-Means}\label{fig:dmgps}
\end{figure}
\begin{figure}[H]
  \centering
  \captionsetup{font=scriptsize}
  \begin{subfigure}[b]{0.23\columnwidth}
      \centering
      \includegraphics[width=\columnwidth]{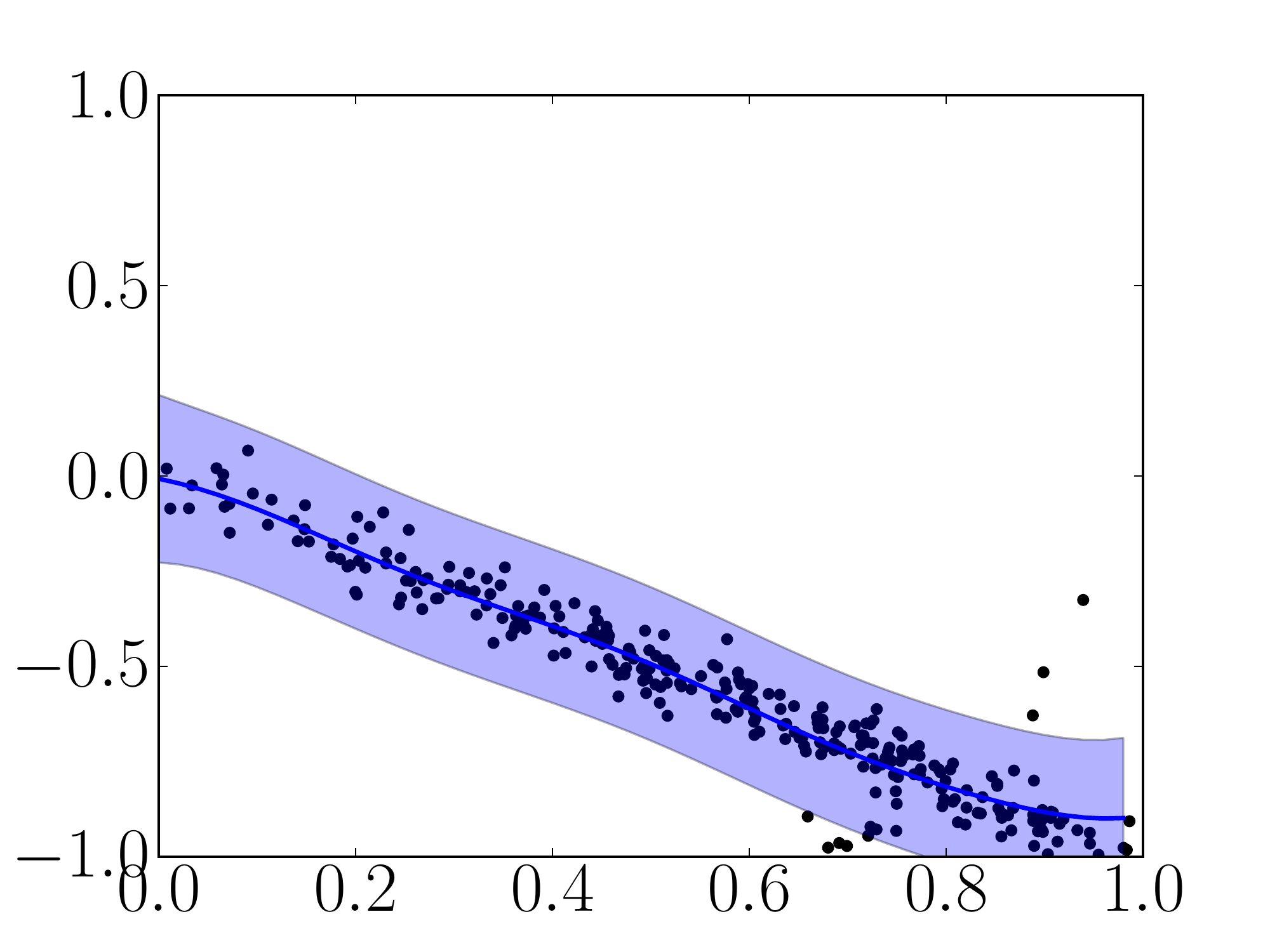}
    \end{subfigure}
\begin{subfigure}[b]{0.23\columnwidth}
      \centering
      \includegraphics[width=\columnwidth]{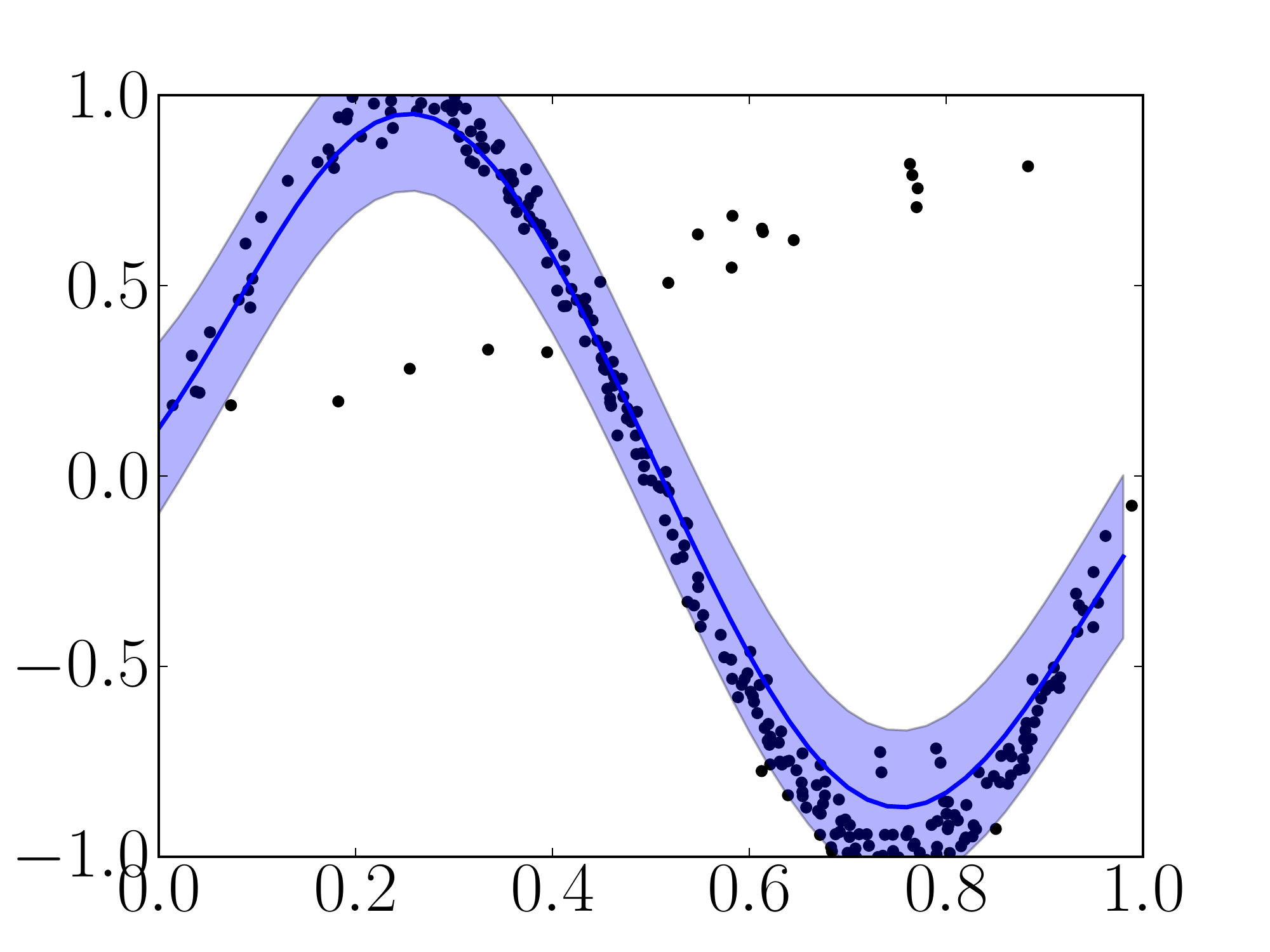}
    \end{subfigure}
\begin{subfigure}[b]{0.23\columnwidth}
      \centering
      \includegraphics[width=\columnwidth]{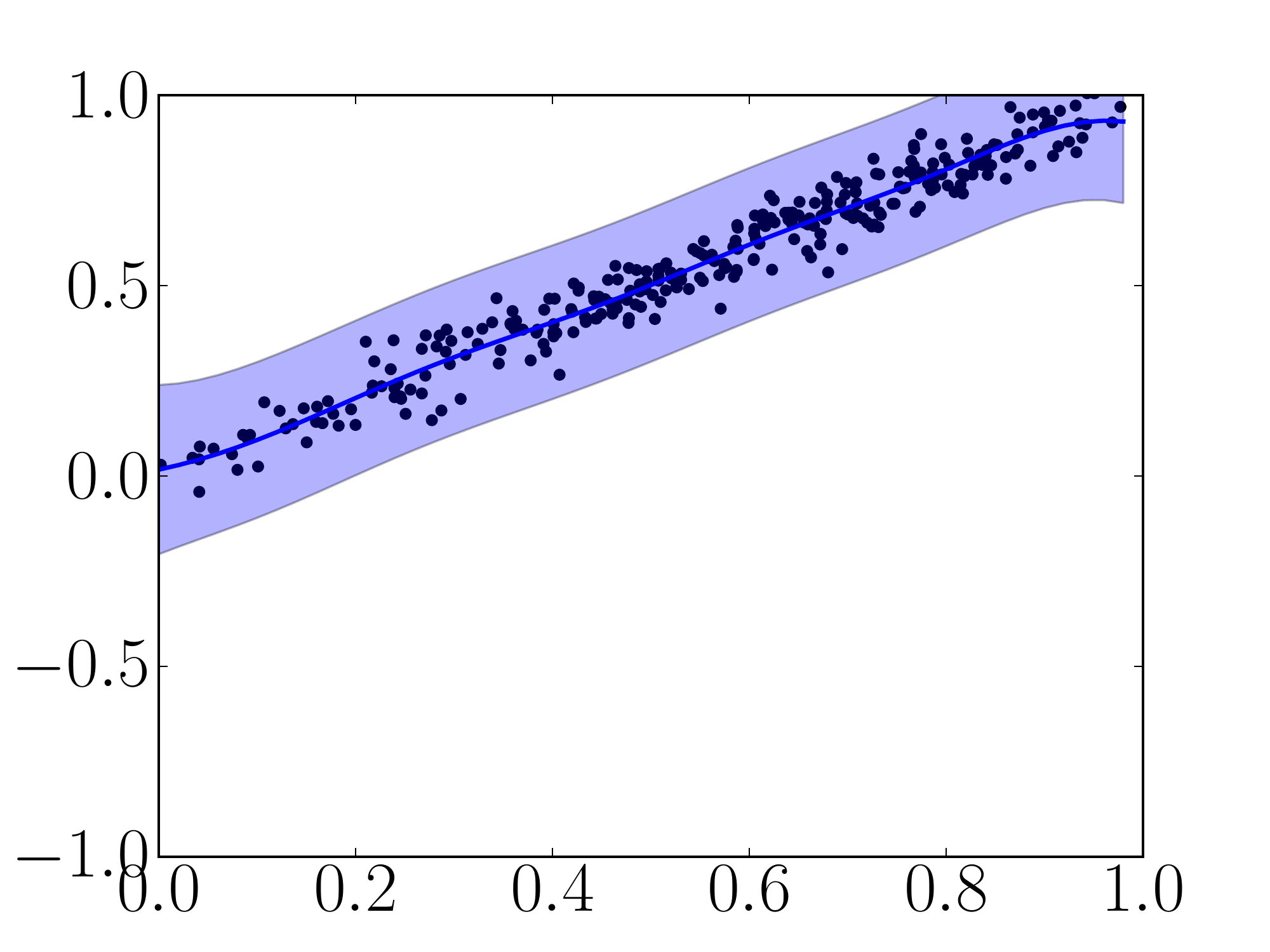}
    \end{subfigure}
\begin{subfigure}[b]{0.23\columnwidth}
      \centering
      \includegraphics[width=\columnwidth]{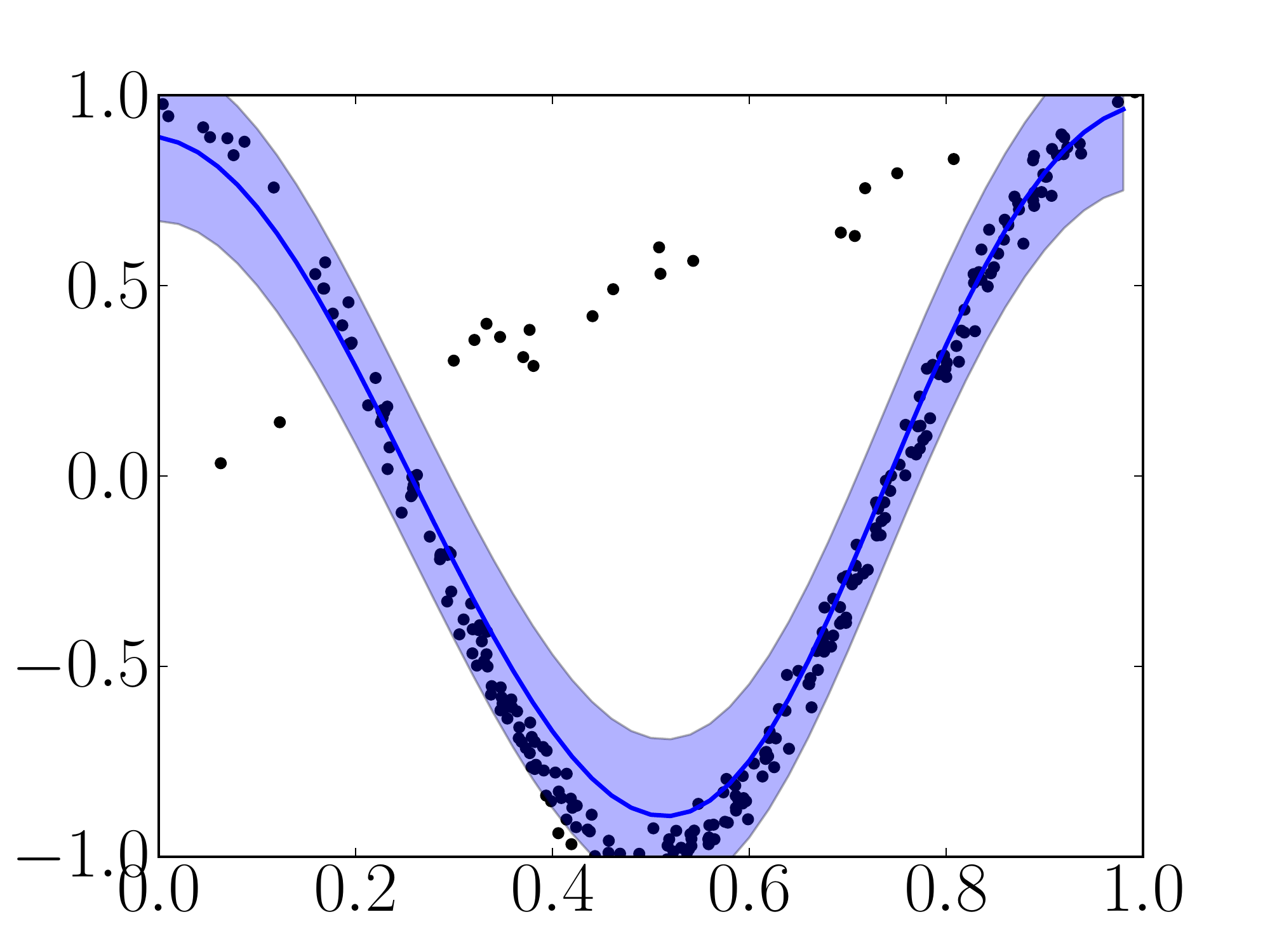}
    \end{subfigure}
    \caption{Typical set of latent functions discovered by SD-Means}\label{fig:sdmgps}
\end{figure}
\begin{figure}[H]
  \captionsetup{font=scriptsize}
  \centering
  \begin{subfigure}[b]{0.3\columnwidth}
      \centering
      \includegraphics[width=\columnwidth]{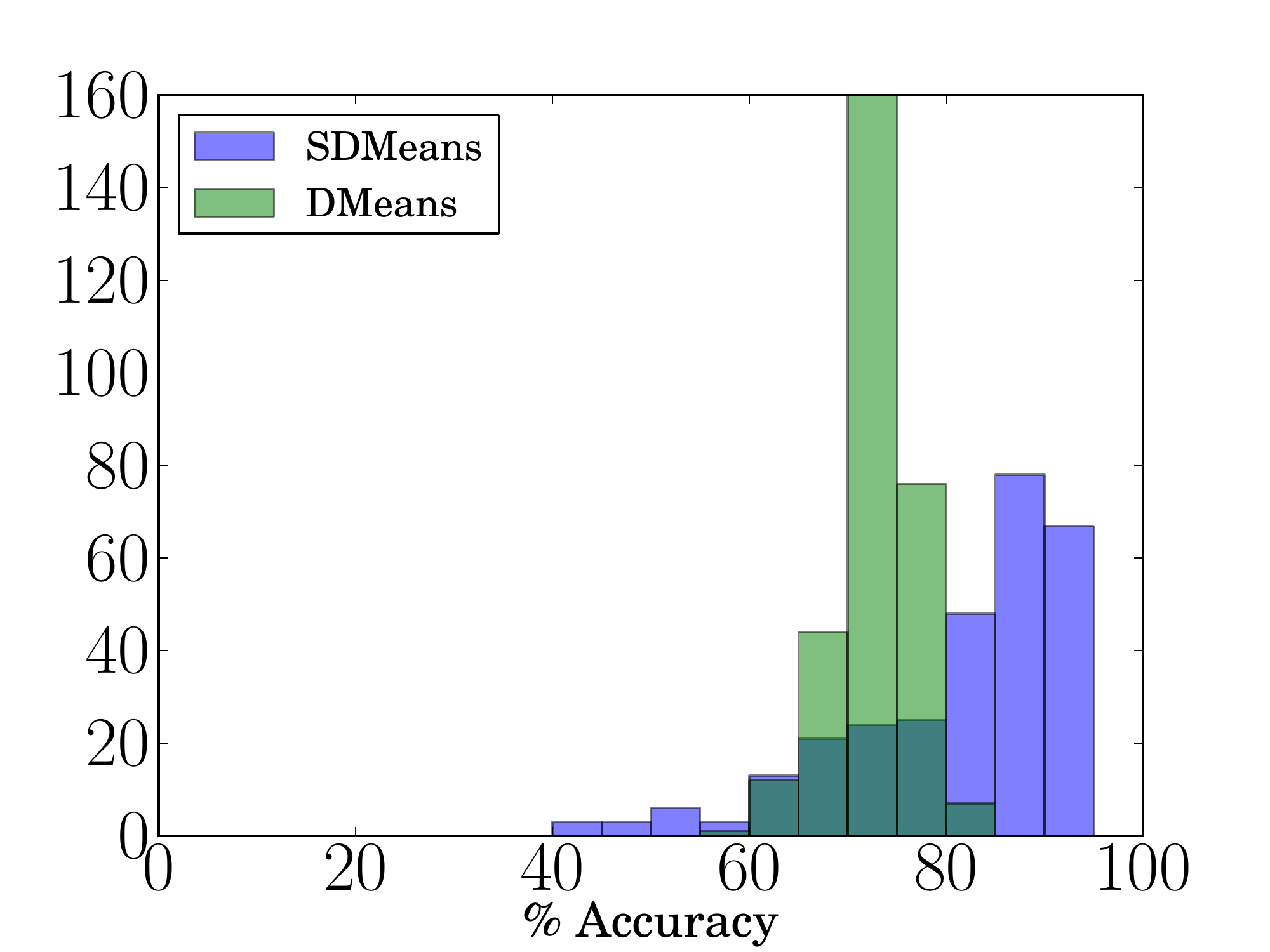}
      \caption{}\label{fig:gpacc}
    \end{subfigure}
\begin{subfigure}[b]{0.3\columnwidth}
      \centering
      \includegraphics[width=\columnwidth]{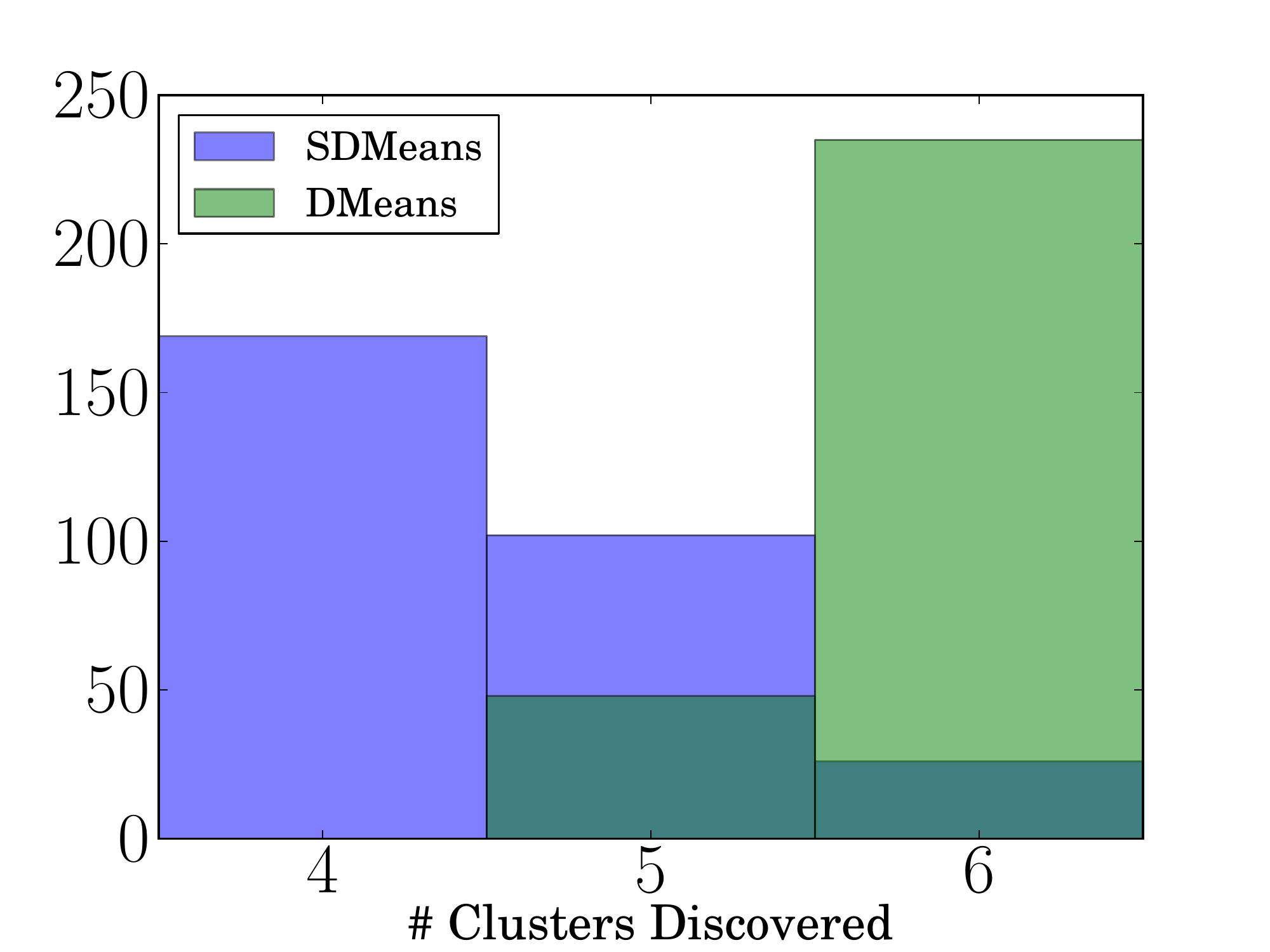}
      \caption{}\label{fig:gpnclus}
    \end{subfigure}
  \begin{subfigure}[b]{0.3\columnwidth}
      \includegraphics[width=\columnwidth]{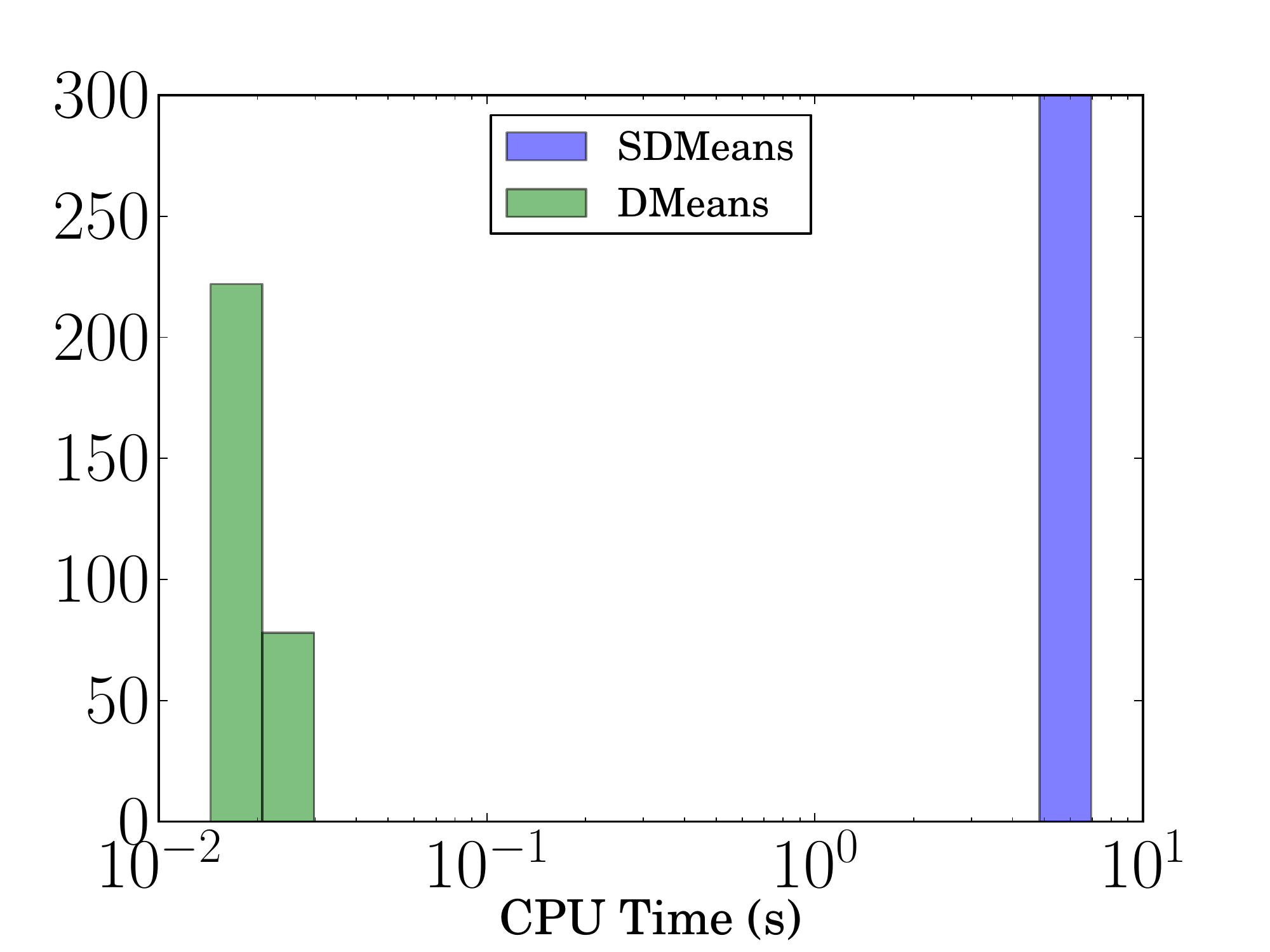}
      \caption{}\label{fig:gptime}
    \end{subfigure}
    \caption{(\ref{fig:gpacc}): Histogram of label accuracy for D-Means and SD-Means. (\ref{fig:gpnclus}):
  Histogram of the number of clusters learned for D-Means and SD-Means.
  (\ref{fig:gptime}): Histogram of computational time for D-Means and SD-Means.}\label{fig:gpaccnclus}
\end{figure}
}

Clustering Gaussian processes (GPs) is problematic for many inference algorithms 
for a number of reasons. Sampling algorithms that update cluster parameters (such as
DP-GP Gibbs sampling~\citep{Joseph11_AR}) require
the inversion of kernel matrices for each sample, which is computationally intractable for large datasets.
Many other algorithms, such as D-Means, require discretization of the GP mean
function in order to compute distances, which incurs an exponential growth in
computational complexity with the input dimension of the GP, 
increased susceptibility to local optima, and incapability
to capture uncertainty in the areas of the GP with few measurements.
SD-Means, on the other hand, can capture the GP uncertainty through
the kernel function, and since cluster parameters are not instantiated
while clustering each batch of data it does not incur the cost of repeated GP
regression.

This experiment involved clustering noisy sets of observations
from an unknown number $K$ of latent functions $f_k(\cdot)$. The uncertainty
in the functions $f_k(\cdot)$ was captured by modelling them as GPs. 
The data were generated from $K=4$ functions: $f_1(x) = x$, $f_2(x) = -x$, $f_3(x) =
\sin(x)$, and $f_4(x) = \cos (x)$. For each data point to be clustered,
a random label $k \in \{1, 2, 3, 4\}$ was sampled uniformly, along
with a random interval in $[0, 1]$. Then 30 $x, y$ pairs
were generated within the interval with uniformly random
$x$ and $y = f_k(x) + \epsilon$, $\epsilon \sim \mathcal{N}(0, 0.2^2)$.
These $x, y$ pairs were used to train a Gaussian process,
which formed a single datapoint in the clustering procedure.
Measurements were made in a random interval to ensure that large
areas of the domain of each GP had a high uncertainty, in order to demonstrate
the robustness of SD-Means to GP uncertainty.
Example GPs generated from this procedure, with $1\sigma$ measurement
confidence bounds, are shown in Figure \ref{fig:datagps}.
D-Means and SD-Means were used to cluster a dataset of 1000 such GPs, broken
into batches of 100 GPs. For D-Means, the GP mean function was discretized
on $[0, 1]$ with a grid spacing of $0.02$, and the discretized mean functions
were clustered. For SD-Means, the kernel function between two GPs $i$ and $j$ was
the probability that their respective kernel points were generated from the same latent
GP (with a prior probability of 0.5),
\begin{align}
  \krn{(X_i, Y_i)}{(X_j, Y_j)} &= \frac{\mathrm{GP}(Y_i, Y_j | X_i,
X_j)}{\mathrm{GP}(Y_i, Y_j |
X_i, X_j) + \mathrm{GP}(Y_i|X_i)\mathrm{GP}(Y_j|X_j)}
\end{align}
where $X_i, Y_i$ are the domain and range of the kernel points for GP $i$, 
and $\mathrm{GP}\left(\cdot | \cdot\right)$ is the Gaussian process marginal likelihood
function. 

The results from this exercise are shown in Figures \ref{fig:dmgps} -
\ref{fig:gpaccnclus}. Figures \ref{fig:dmgps} and \ref{fig:sdmgps} demonstrate
that SD-Means tends to discover the 4 latent functions used to generate the
data, while D-Means often introduces erroneous clusters. Figure \ref{fig:gpaccnclus}
corroborates the qualitative results with a quantitative comparison over 300
trials, demonstrating that while D-Means is roughly 2 orders of magnitude
faster, SD-Means generally outperforms D-Means in label accuracy and the selection of the number of
clusters. This is primarily due to the fact that D-Means clusters data using
only the GP means; as the GP mean function can be a 
poor characterization of the latent function (shown in Figure
\ref{fig:datagps}), D-Means can perform poorly. SD-Means, on the other hand,
uses a kernel function that accounts for regions of high uncertainty in the
GPs, and thus is generally more robust to such uncertainty.

\subsection{Aircraft ADSB Trajectories}
\begin{figure}[t]
\captionsetup{font=scriptsize}
\begin{center}
  \begin{subfigure}[c]{.4\textwidth}
  \includegraphics[width=\textwidth, trim=60 210 80 60,
  clip]{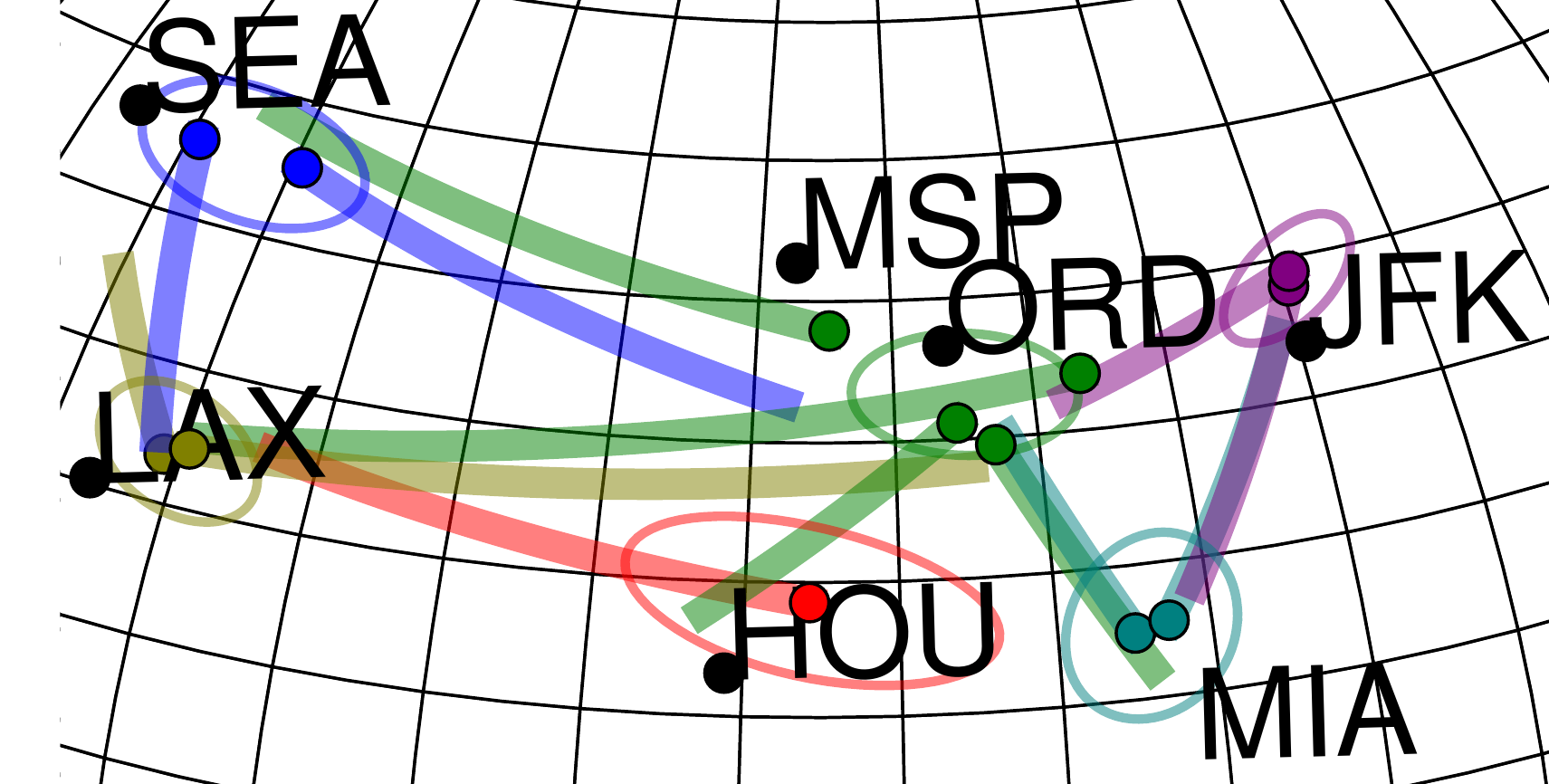}
  \caption{}\label{fig:adsbflows}
\end{subfigure}
  \begin{subfigure}[c]{.9\textwidth}
    \includegraphics[width=\textwidth]{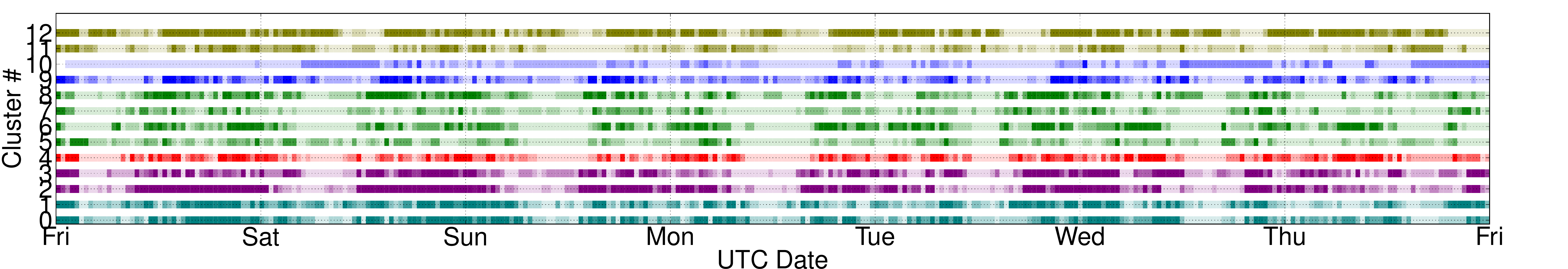}
    \caption{}\label{fig:adsbcounttrack}
  \end{subfigure}
  \caption{Results of the GP aircraft trajectory clustering.
    (\ref{fig:adsbflows}): A map
    (labeled with major US city airports) showing the overall aircraft 
    flows for 12 trajectories, with colors and 1$\sigma$ confidence ellipses corresponding to takeoff
    region (multiple clusters per takeoff region), colored dots indicating
    mean takeoff position for each cluster, and lines indicating the mean
    trajectory for each cluster.
    (\ref{fig:adsbcounttrack}): A track of plane counts for the 12 clusters during the week, with color intensity proportional to
  the number of takeoffs at each time.}\label{fig:trajresults}
  \end{center}
\end{figure}



 In this experiment, the D-Means algorithm was 
used to discover the typical spatial and temporal patterns in the motions of
commercial aircraft. The experiment was designed to demonstrate the capability of D-Means
to find spatial and temporal patterns in a streaming source of real world
data over a long duration. 

The data was collected and processed using the following procedure.
Automatic dependent surveillance-broadcast (ADS-B) data,
including plane identification, timestamp, latitude, longitude, heading and speed, was
collected from all transmitting planes across the United States
during the week from 2013-3-22 1:30:0 to 2013-3-28 12:0:0 UTC. 
Then, individual ADS-B messages were connected together based on their plane
identification and timestamp to form trajectories, and erroneous trajectories
were filtered based on reasonable spatial/temporal bounds, yielding 17,895 unique trajectories.
Then, each trajectory was smoothed by training a Gaussian process, where the latitude and
longitude of points along the trajectory were the inputs, and the North 
and East components of plane velocity at those points were
the outputs. Next, the mean latitudinal and longitudinal velocities from the
Gaussian process were queried
for each point on a regular lattice across the United States (10 latitudes and 20
longitudes), and used to create a 400-dimensional feature vector for each
trajectory. Of the resulting 17,895 feature vectors, 600 were hand-labeled,
including a confidence weight between 0 and 1. The feature vectors 
were clustered in half-hour batch windows using D-Means (3 restarts), MCGMM Gibbs sampling (50 samples),
and the DP-Means algorithm~\citep{Kulis12_ICML} (3 restarts, run on the entire dataset in a single
batch). 

\begin{wraptable}[7]{r}{.4\textwidth}
  \centering \vspace*{-.2in}
  \captionsetup{font=scriptsize}
  \caption{Mean computational time \& accuracy on hand-labeled aircraft
  trajectory data}\label{tab:trajresults}
  {\small
  \begin{tabular}{l|c|l}
    \textbf{Alg.} & \textbf{\% Acc.} & \textbf{Time (s)}\\
    \hline
\rule{0pt}{10pt}D-Means & $55.9$ & $2.7\times 10^2$\\ 
    DP-Means &  $55.6$& $3.1\times 10^3$\\ 
    Gibbs & $36.9$ & $1.4\times 10^4$ 
  \end{tabular}
}
\end{wraptable}
The results of this exercise
are provided in Figure \ref{fig:trajresults} and Table \ref{tab:trajresults}. 
Figure \ref{fig:trajresults} shows the spatial and temporal properties 
of the 12 most popular clusters discovered by D-Means, demonstrating
that the algorithm successfully identified major flows of commercial
aircraft across the United States. Table \ref{tab:trajresults} corroborates
these qualitative results with a quantitative comparison of
the computation time and accuracy for the three algorithms tested over 20 trials.
The confidence-weighted accuracy was computed by taking the ratio between the sum of the weights for
correctly labeled points and the sum of all weights. The MCGMM Gibbs sampling
algorithm was handicapped as described in the synthetic experiment section. Of
the three algorithms, D-Means provided the highest labeling accuracy, while requiring
the least computational time by over an order of magnitude.

%
%

\subsection{Video Color Quantization}

Video is an ubiquitous form of naturally batch-sequential data --
the sequence of image frames in a video can be considered a sequence of batches
of data. In this experiment, the colors of a video were quantized via clustering.
Quantization, in which each image pixel is assigned to one of a small palette of
colors, can be used both as a component of image compression algorithms and as a visual effect. 
To find an appropriate palette for a given image,
a clustering algorithm may be applied to the set of colors from all pixels in 
the image, and the cluster centers are extracted to form the palette. 

Video frame sequences typically undergo smooth evolution interspersed with
discontinuous jumps; D-Means is well-suited to clustering such data, with
motion processes to capture the smooth evolution, and birth/death processes
to capture the discontinuities. As a color quantization algorithm, D-Means
yields a smoothly varying palette of colors, rather than a palette for each
frame individually. The parameters used were $\lambda = 800$, $T_Q = 15$,
$K_\tau = 1.1$. For comparison, two other algorithms were
run on each frame of the same video individually: DP-Means
with $\lambda = 800$ and K-Means with $K=20$.
The video selected for the experiment was Sintel\footnote{The Durian Open Movie Project, Creative Commons Attribution 3.0,
\url{https://durian.blender.org}}, consisting of 17,904 frames at $1280\times 544$ resolution. 

The results of the experiment are shown in Figure \ref{fig:video} with
example quantized frames shown in Figure \ref{fig:vid-variety}. The major
advantage of using D-Means over DP-Means or K-Means is that color flicker in the
quantization is significantly reduced, as evidenced by
Figures \ref{fig:vid-numclus}-\ref{fig:vid-framediffs}. This is due to the
cluster motion modeling capability of D-Means -- by enforcing that cluster
centers vary smoothly over time, the quantization in sequential frames is forced
to be similar, yielding a temporally smooth quantization. In contrast, the 
cluster centers for K-Means or DP-Means can
differ significantly from frame to frame, causing noticeable flicker to occur,
thus detracting from the quality of the quantization. 

\begin{figure}[h!]
  \centering
  \captionsetup{font=scriptsize}
\begin{subfigure}[b]{\columnwidth}
      \centering
      \includegraphics[width=\columnwidth]{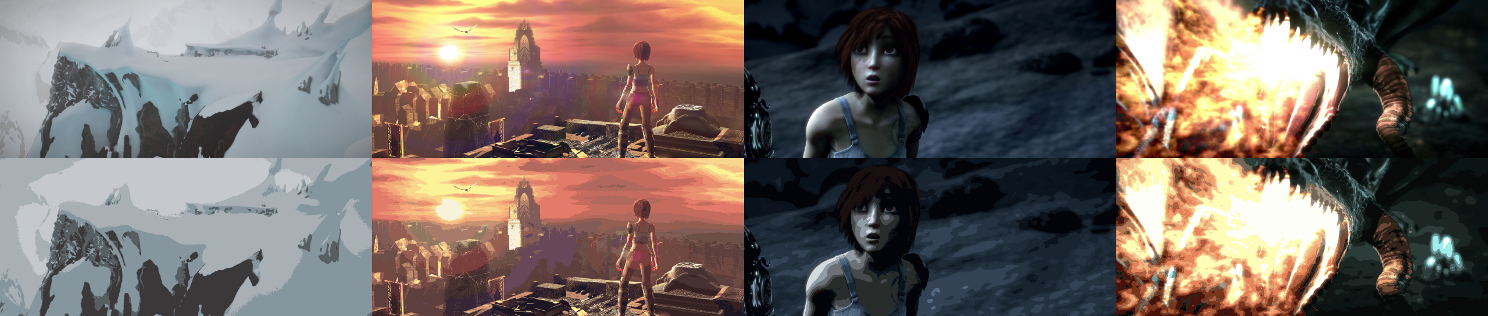}
  \captionsetup{font=scriptsize}
      \caption{}\label{fig:vid-variety}
    \end{subfigure}


\begin{subfigure}[b]{\columnwidth}
  \input{vidclusfig-arxiv}
  \vspace{-.7cm}
  \captionsetup{font=scriptsize}
      \caption{}\label{fig:vid-numclus}
    \end{subfigure}

    \begin{subfigure}[b]{\columnwidth}
      \centering
      \includegraphics[width=\columnwidth]{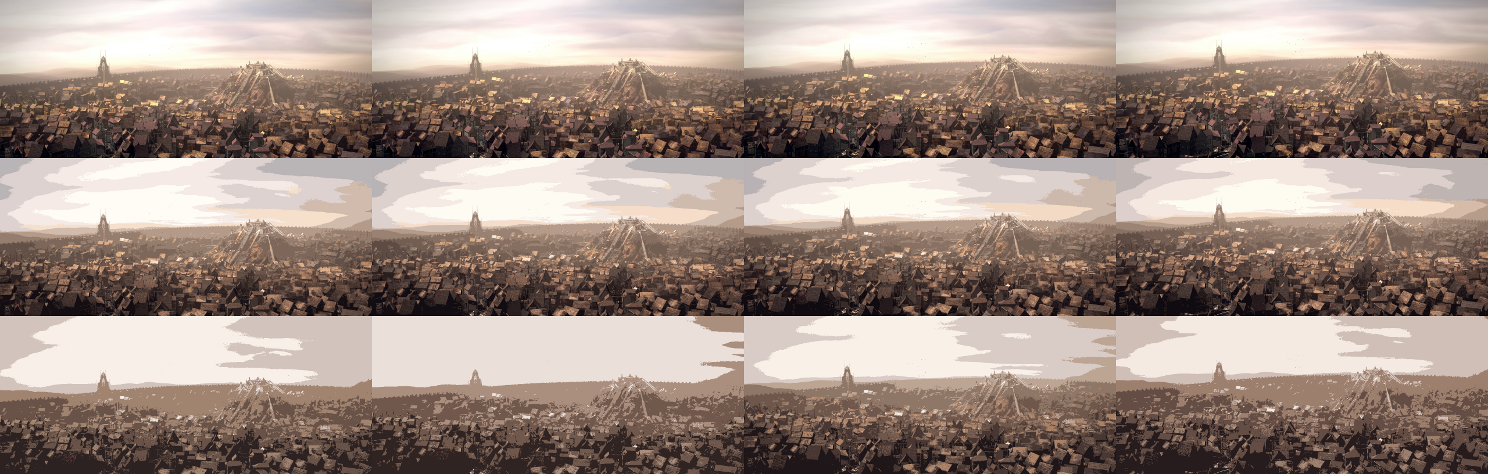}
  \captionsetup{font=scriptsize}
      \caption{}\label{fig:vid-flicker}
    \end{subfigure}

\begin{subfigure}[b]{\columnwidth}
      \centering
      \includegraphics[width=\columnwidth]{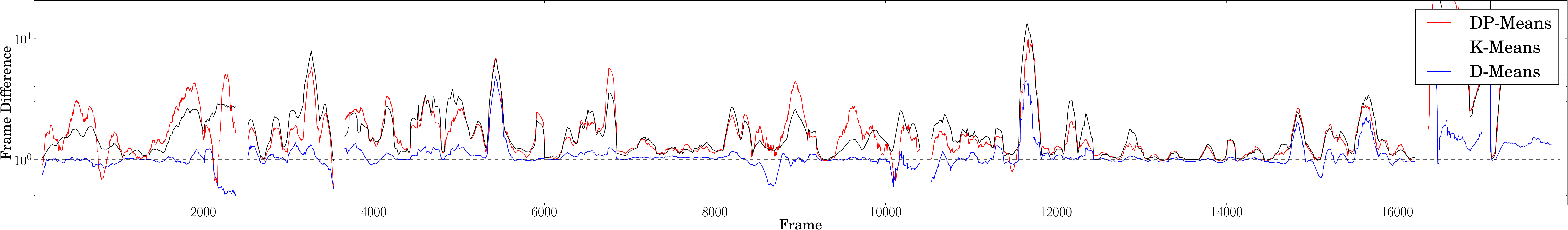}
  \captionsetup{font=scriptsize}
      \caption{}\label{fig:vid-framediffs}
    \end{subfigure}
    \caption{Video color quantization results. (\ref{fig:vid-variety}): Four frames from the original video (top row) and quantized video
    (bottom row) with D-Means.
    (\ref{fig:vid-numclus}): The number of clusters over the whole video (top) 
    and a zoomed window on frames 3550-4350 (bottom). Note 
    the smoothness of the number of clusters over time using D-Means with respect to
    DP-Means. (\ref{fig:vid-flicker}): A sequence of 4 frames from the original
    video (top row), quantized video with D-Means (middle row), and quantized
    video with DP-Means (bottom row). Note the flicker in the quantization,
    particularly in the upper region of the frames, when
    using DP-Means in contrast to using D-Means.
    (\ref{fig:vid-framediffs}): The difference between each frame and the
  previous frame for quantized video, normalized by the same quantity for the
original video. D-Means tracks the original video (dashed horizontal black line) more closely than both
K-Means and DP-Means. Missing values are when the original video had no
frame difference (normalization by 0).}\label{fig:video}
\end{figure}

\subsection{Human Segmentation in Point Clouds}
\begin{figure}[t!]
\captionsetup{font=scriptsize}
\centering
\begin{center}
  \begin{subfigure}[t]{.24\linewidth}
    \raisebox{.1cm}{\includegraphics[width=\textwidth]{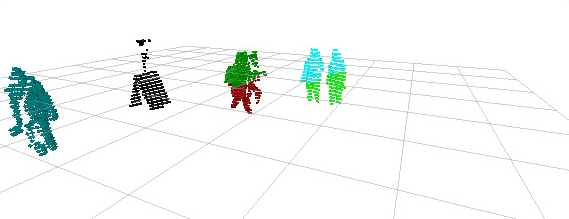}}
  \end{subfigure}
  \begin{subfigure}[t]{.24\linewidth}
    \raisebox{.1cm}{\includegraphics[width=\textwidth]{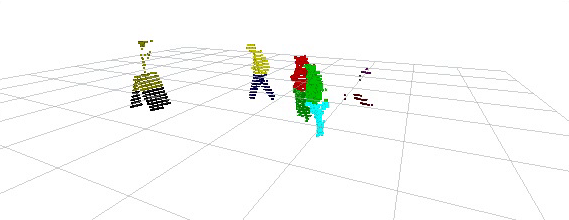}}
  \end{subfigure}
\begin{subfigure}[t]{.24\linewidth}
    \raisebox{.1cm}{\includegraphics[width=\textwidth]{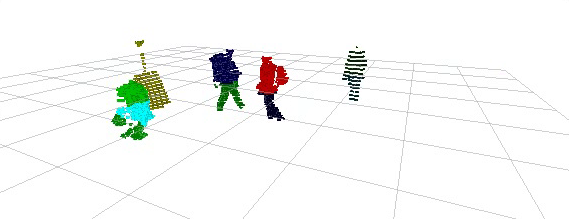}}
  \end{subfigure}
\begin{subfigure}[t]{.24\linewidth}
    \raisebox{.1cm}{\includegraphics[width=\textwidth]{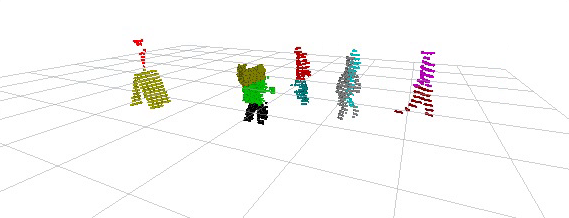}}
  \end{subfigure}
  \begin{subfigure}[t]{.24\linewidth}
    \raisebox{.1cm}{\includegraphics[width=\textwidth]{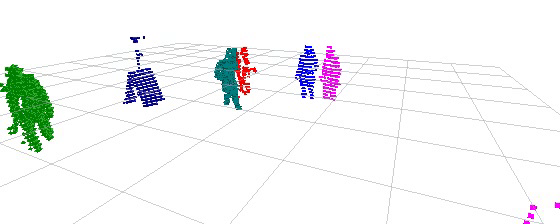}}
  \end{subfigure}
  \begin{subfigure}[t]{.24\linewidth}
    \raisebox{.1cm}{\includegraphics[width=\textwidth]{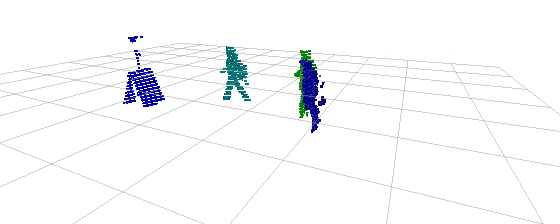}}
  \end{subfigure}
\begin{subfigure}[t]{.24\linewidth}
    \raisebox{.1cm}{\includegraphics[width=\textwidth]{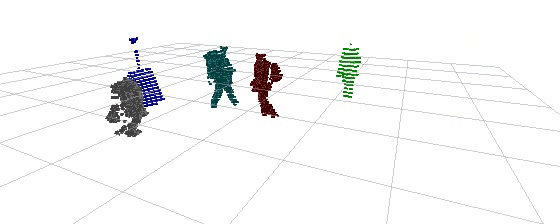}}
  \end{subfigure}
\begin{subfigure}[t]{.24\linewidth}
    \raisebox{.1cm}{\includegraphics[width=\textwidth]{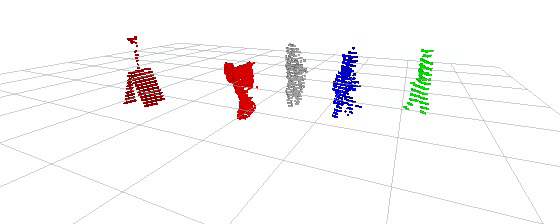}}
  \end{subfigure}
    \centering\caption{Comparison of D-Means (top row) and SD-Means (bottom row) on four
    frames of the point cloud dataset. Color indicates cluster assignment. Note
  that D-Means is forced to either oversegment humans or erroneously group them
together, while SD-Means correctly clusters them.}\label{fig:ptcld}
\end{center}
\end{figure}

\begin{figure}
\captionsetup{font=scriptsize}
\centering
\begin{center}
  \begin{subfigure}[t]{.24\linewidth}
    \raisebox{.1cm}{\includegraphics[width=\textwidth]{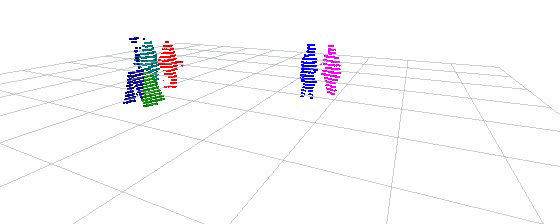}}
  \end{subfigure}
  \begin{subfigure}[t]{.24\linewidth}
    \raisebox{.1cm}{\includegraphics[width=\textwidth]{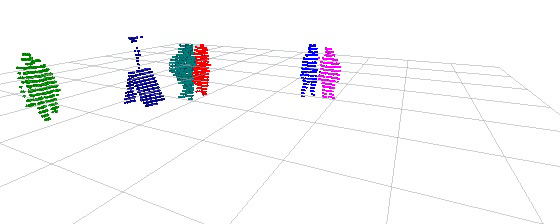}}
  \end{subfigure}
\begin{subfigure}[t]{.24\linewidth}
    \raisebox{.1cm}{\includegraphics[width=\textwidth]{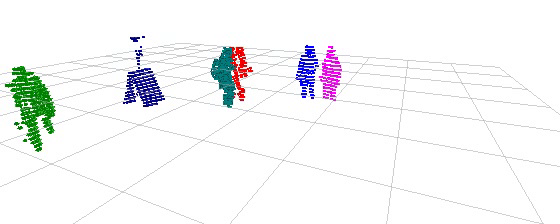}}
  \end{subfigure}
\begin{subfigure}[t]{.24\linewidth}
    \raisebox{.1cm}{\includegraphics[width=\textwidth]{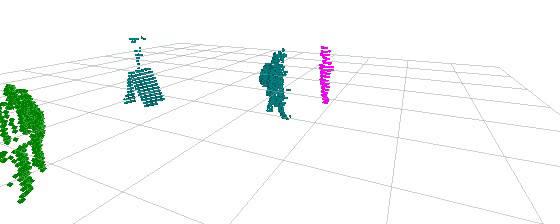}}
  \end{subfigure}
    \centering \caption{Cluster assignments from SD-Means for a sequence of four
      frames from
    the point cloud dataset. Color indicates cluster assignment. Note the
  temporal consistency of the clustering, and the flexibility to temporarily
remove clusters when humans are occluded.}\label{fig:ptcld2}
\end{center}
\end{figure}

Batch-sequential data is also prevalent in the field of mobile robotics, where
point cloud sequences provided by LiDAR
and RGB-D sensors are commonly used for perception.
In this experiment, D-Means and SD-Means were used to cluster 120 seconds of
LiDAR data, at 30 frames per second, of people walking in a foyer.
The experiment was designed to demonstrate the capability of SD-Means
to simultaneously cluster nonspherical shapes and track them over time. 
For comparison of the two algorithms, the point clouds were hand-labeled
with each human as a single cluster.

\begin{wraptable}[7]{r}{.4\textwidth}
  \centering 
  \captionsetup{font=scriptsize}
  \caption{Computational time \& accuracy on point cloud dataset.}\label{tab:ptcldresults}
  {\small
  \begin{tabular}{l|c|l}
    \textbf{Alg.} & \textbf{\% Acc.} & \textbf{Time (s)}\\
    \hline
\rule{0pt}{10pt}SD-Means & $86.8$ & $816.7$\\ 
    D-Means &  $53.6$& $38.1$\\
  \end{tabular}
}
\end{wraptable}
The results are shown in Figures \ref{fig:ptcld} - \ref{fig:ptcld2}
and Table \ref{tab:ptcldresults}. Figure \ref{fig:ptcld} shows 
that D-Means cannot capture the complex cluster structure present in the
dataset -- due to the spherical cluster assumption made by the algorithm, changing 
the parameter $\lambda$ causes D-Means to either over- or undersegment 
the humans. In contrast, SD-Means captures the structure and segments the humans
accordingly. Figure \ref{fig:ptcld2} demonstrates the ability of SD-Means to
correctly capture temporal evolution of the clusters in the dataset, due to its
derivation from the MCGMM. Table \ref{tab:ptcldresults} corroborates these findings numerically,
by showing that SD-Means provides much higher point labeling accuracy
on the dataset than D-Means (using 
the same label accuracy computation as outlined in previous experiments).

\section{Conclusions}\label{sec:conclusions}
This work presented the first small-variance analysis for Bayesian nonparametric
models that capture batch-sequential data containing temporally evolving clusters. 
While previous small-variance analyses have exclusively focused on models
trained on a single, static batch of data, this work provides evidence that the
small-variance methodology yields useful results for evolving models.
In particular, the present work developed two new clustering algorithms: The first, D-Means, was derived from a 
small-variance asymptotic analysis of the low-variance MAP filtering problem for
the MCGMM; and the second, SD-Means, was derived from a
kernelization and spectral relaxation of
the D-Means clustering problem. 
%
%
Empirical results from both synthetic and real data 
experiments demonstrated the performance of the algorithms compared
with contemporary probabilistic and hard clustering 
algorithms.

\section{Acknowledgements}
We would like to acknowledge support for this project
from the National Science Foundation (NSF CAREER Award 1559558) and
the Office of Naval Research (ONR MURI grant N000141110688). We would
also like to thank Miao Liu for discussions and help with
algorithmic implementations.

\newpage

\appendix
\section{The Equivalence Between Dynamic Means and Small-Variance
MCGMM Gibbs Sampling}\label{app:gibbsderiv}

\subsection{Setting Labels Given Parameters}\label{subsec:samplinglabelsgivenparams}
In the label sampling step, a datapoint $y_{it}$ can either create a new
cluster, join a current cluster, or revive an old, transitioned cluster. 
Thus, the label assignment probabilities are
\begin{align}
  &p(z_{it} = k | \dots) \propto \\
  &\left\{\begin{array}{c c}
\alpha(2 \pi (\sigma+\rho))^{-d/2} \exp{\left({-\frac{|| y_{it}
  - \phi
||^2}{2 (\sigma+\rho)}}\right)} & k = K_t+1 \\
   (c_{kt} + n_{kt})(2 \pi \sigma)^{-d/2} \exp{\left({-\frac{||
   y_{it} - \theta_{kt}
 ||^2}{2 \sigma}}\right)} & n_{kt} > 0\\
 q^\dtk c_{kt} (2 \pi (\sigma+\xi\dtk + \rho_{kt}))^{-d/2} \exp{\left({-\frac{|| y_{it} -
 \phi_{kt}
||^2}{2 (\sigma+\xi\dtk + \rho_{kt})}}\right)} & n_{kt} = 0 , \,
  \end{array}\right. 
\end{align}
and the parameter scaling in
(\ref{eq:smallvariancescaling}) yields
the following deterministic assignments as $\sigma\rightarrow 0$:
\begin{align}
  z_{it} = \argmin_{k}
          \left\{\begin{array}{l l}
          ||y_{it} - \theta_{kt}||^2 \quad &\text{if $\theta_k$ instantiated}\\
          Q\dtk + \frac{\gamma_{kt}}{\gamma_{kt}+1}|| y_{it} - \phi_{kt}||^2 \quad &\text{if
            $\theta_k$ old, uninstantiated}\\
          \lambda \quad &\text{if $\theta_k$ new} 
        \end{array}\right. . 
\end{align}
which is exactly the label assignment step from Section \ref{sec:dynmeans}.

\subsection{Setting Parameters Given Labels}\label{subsec:samplingparamsgivenlabels} 
In the parameter sampling step, the parameters are sampled using the
distribution 
\begin{align}
  p(\theta_{kt} | \{y_{it} : z_{it} = k\}) \propto p(\{y_{it} : z_{it}
= k\} | \theta_{kt}) p(\theta_{kt}).
\end{align}
Suppose there are $\dtk$ 
time steps where cluster $k$ was not observed, but
there are now $n_{kt}$ data points assigned 
to it in the current time step. Thus,
\begin{align}
p(\theta_{kt}) = \int_{\theta} T(\theta_{kt}
|\theta)p(\theta)\,
\mathrm{d}\theta, \, \, \theta \sim \mathcal{N}(\phi_{kt}, \rho_{kt}).
\end{align}
where $\phi_{kt}$ and $\rho_{kt}$ capture the uncertainty in cluster $k$
$\dtk$ timesteps ago when it was last observed.
Again using conjugacy of normal likelihoods and priors,
\begin{align}
\begin{aligned}
  \theta_{kt} | \{y_{it}: z_{it} = k\} &\sim
  \mathcal{N}\left( \theta_{\text{post}},
  \sigma_{\text{post}}\right)\\ 
  \theta_{\text{post}} =  \sigma_{\text{post}}\left(\frac{\phi_{kt}}{\xi \dtk +
    \rho_{kt}} + 
    \frac{\sum_{i:z_{it}=k}
y_{it}}{\sigma}\right)&, \,
\sigma_{\text{post}} =
\left(\frac{1}{\xi \dtk + \rho_{kt}} + \frac{n_{kt}}{\sigma}\right)^{-1}.
\end{aligned}\label{eq:transpost}
\end{align}
Similarly to the label assignment step, let $\xi = \tau \sigma$. Then as long as
$\rho_{kt} = \sigma/w_{kt}$, $w_{kt} > 0$ 
(which holds if equation (\ref{eq:transpost}) is used
to recursively keep track of the parameter posterior), taking the asymptotic limit 
of this as $\sigma\rightarrow 0$ yields
\begin{align}
  \theta_{kt} &= \frac{\phi_{kt} (w_{kt}^{-1}+\dtk \tau)^{-1} +
  \sum_{i\in\indices_{kt}}y_{it} } { (w_{kt}^{-1}+\dtk \tau)^{-1} +  n_{kt}} = 
  \frac{\gamma_{kt}\phi_{kt} + \sum_{i\in\indices_{kt}}y_{it}}
    {\gamma_{kt} + n_{kt}}
\end{align}

\section{Proof of Stability of the Old Cluster Center Approximation}\label{app:phiapproxproof}
\begin{proofof}{Theorem \ref{thm:phierrorbound}}
  The error of the approximation $\epsilon_{t+1}$ at time $t+1$ can be bounded
  given an error bound for the optimization $\epsilon$ and the error 
  bound at time $t$:
\begin{align}
  \begin{aligned}
  \epsilon_{t+1}&=\|\phi_{k(t+1)}-\phi^\star_{k(t+1)}\|_2
  = \|\phi_{k(t+1)} - \bar{\phi}_{k(t+1)}
  +\bar{\phi}_{k(t+1)}-\phi^\star_{k(t+1)}\|_2.
\end{aligned}
\end{align}
Using the fact that $\bar{\phi}_{k(t+1)}$ is the dense approximation of $\phi^\star_{k(t+1)}$,
i.e.~including all data assigned to cluster $k$ at timestep $t$, 
this is equal to
\begin{align}
  \begin{aligned}
&= \|\phi_{k(t+1)}-\bar{\phi}_{k(t+1)}
+\left(\frac{\gamma_{kt}(\phi_{kt}-\phi^\star_{kt})}{\gamma_{kt}+n_{kt}}\right)\|_2\\
&\leq \|\phi_{k(t+1)}-\bar{\phi}_{k(t+1)}\|_2
+\frac{\gamma_{kt}}{\gamma_{kt}+n_{kt}}\|\phi_{kt}-\phi^\star_{kt}\|_2\\
&\leq \epsilon+\frac{\gamma_{kt}}{\gamma_{kt}+n_{kt}}\epsilon_t.
\end{aligned}
\end{align}
Now using the facts that $\gamma_{kt} = \left(w_{kt}^{-1} + \tau
\dtk\right)^{-1}$, $w_{kt} > 0$, $\dtk \geq 1$, and $n_{kt} \geq 1$,
\begin{align}
 \epsilon_{t+1}&\leq \epsilon
  +\frac{1}{1+\tau}\epsilon_t.
\end{align}
Expanding the recursion and using the geometric series formula,
\begin{align}
  \begin{aligned}
  \epsilon_t &\leq \sum_{i=0}^{t-1}\left(\frac{1}{1+\tau}\right)^{i}\epsilon =
\epsilon\frac{1-\left(\frac{1}{1+\tau}\right)^{t}}{1-\left(\frac{1}{1+\tau}\right)}\leq
    \epsilon\left(1+\frac{1}{\tau}\right).
  \end{aligned}
\end{align}
\end{proofof}

\section{Proof of Integrality/Optimality of the Cluster Matching}\label{app:clustermatchingproof}
\begin{lemma} [Ghouila-Houri 1962]\label{lem:totuni} Consider a matrix $A$ with $n$ rows of the form
  $r \in \mathbb{R}^{1\times n}$. 
  $A$ is totally unimodular
  iff for every subset $R$ of the rows, there is a partition of $R$,
  $R_+\cup R_- = R$, $R_+ \cap R_- = \emptyset$, such that
  \begin{align}
    \left[ \sum_{r \in R_+} r\right] - \left[\sum_{r \in R_-}r\right] \in
    \{0, 1, -1\}^{1\times n}.
  \end{align}
\end{lemma}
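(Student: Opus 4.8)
The plan is to prove both directions of the stated equivalence, reading ``totally unimodular'' (TU) in its defining sense: every square submatrix of $A$ has determinant in $\{0,1,-1\}$. Throughout I write $\mathbf{1}$ for the all-ones vector and, for a set of rows $R$, let $B = A_R$ denote the submatrix of $A$ formed by those rows and $d = B^T\mathbf{1}$ its vector of column sums. A partition $R = R_+ \cup R_-$ with the stated property is encoded by a sign vector $\epsilon \in \{-1,1\}^{|R|}$ via $\epsilon_i = +1 \iff i \in R_+$, and the target condition becomes $\epsilon^T B \in \{-1,0,1\}^{1\times n}$.

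For the forward direction (TU $\Rightarrow$ splitting), the idea is to recast the existence of $\epsilon$ as the existence of an integral point in a polytope and then invoke integrality of TU systems. Writing $\epsilon = 2x - \mathbf{1}$ with $x \in \{0,1\}^{|R|}$, a short case split on the parity of each $d_j$ shows $\epsilon^T B \in \{-1,0,1\}^{1\times n}$ is equivalent to $\lfloor d/2 \rfloor \le B^T x \le \lceil d/2 \rceil$. First I would consider
\[
  P = \{x : \lfloor d/2 \rfloor \le B^T x \le \lceil d/2 \rceil,\ \mathbf{0} \le x \le \mathbf{1}\},
\]
observe it is nonempty (take $x = \tfrac12\mathbf{1}$, giving $B^T x = \tfrac12 d$) and bounded. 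Its constraint matrix is built from $\pm B^T$ and $\pm I$; since $B^T$ is a submatrix of $A^T$, since $A$ TU $\iff A^T$ TU, and since adjoining identity rows preserves total unimodularity, this constraint matrix is TU. With an integral right-hand side, the Hoffman--Kruskal argument (Cramer's rule) makes every vertex of $P$ integral, so any vertex yields $x \in \{0,1\}^{|R|}$ and hence the desired partition.

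For the backward direction (splitting $\Rightarrow$ TU), I would argue by a minimal counterexample / induction on submatrix order. The single-row case of the hypothesis forces every entry of $A$ into $\{-1,0,1\}$, settling $1\times 1$ minors. Suppose, for contradiction, $C$ is a smallest square submatrix with $\delta := |\det C| \ge 2$, of order $k$ and using column set $S$; by minimality all $(k-1)$-minors lie in $\{0,\pm1\}$, so the adjugate $\operatorname{adj}(C)$ has entries in $\{0,\pm1\}$. Taking $v^T$ to be the first row of $\operatorname{adj}(C)$ gives $v \in \{0,\pm1\}^k$, $v \neq 0$, and $v^T C = \pm\delta\, e_1^T$ (WLOG $= \delta e_1^T$). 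Now apply the splitting hypothesis to the rows of $A$ indexing $\operatorname{supp}(v)$, restrict the resulting $\{-1,0,1\}$-vector to the columns $S$, and call the result $w = \epsilon^T C$ with $\epsilon$ supported exactly on $\operatorname{supp}(v)$.

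I expect the main obstacle to be the parity comparison that closes this case, since that is where the splitting hypothesis does its essential work. Because $\epsilon_i, v_i \equiv 1 \pmod 2$ on $\operatorname{supp}(v)$, one gets $w_j \equiv (v^T C)_j \pmod 2$ for every column $j$; as $v^T C = \delta e_1^T$, this forces $w_j = 0$ for all $j \neq 1$ and $w_1 \equiv \delta \pmod 2$, so $w = w_1 e_1^T$ with $w_1 \in \{-1,0,1\}$. If $\delta$ is even then $w_1 = 0$, so $\epsilon^T C = 0$ with $\epsilon \neq 0$, contradicting $\det C \neq 0$. If $\delta$ is odd (hence $\ge 3$) then $w_1 = \pm 1$, and eliminating $e_1^T$ between $v^T C = \delta e_1^T$ and $\epsilon^T C = w_1 e_1^T$ yields $(v \mp \delta\epsilon)^T C = 0$, whence $v = \pm\delta\epsilon$ by nonsingularity of $C$; but then $|v_i| = \delta \ge 3$ on $\operatorname{supp}(v)$, impossible for $v \in \{0,\pm1\}^k$. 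Either branch contradicts $\delta \ge 2$, so no such $C$ exists and $A$ is TU. The delicate points to get right are guaranteeing that $v$ is genuinely a $\{0,\pm1\}$ vector from the minimality hypothesis and keeping the support of $\epsilon$ aligned with that of $v$; the forward direction is comparatively routine once the reduction to an integral polytope is in place.
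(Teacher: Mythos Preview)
Your proof is correct in both directions and follows the standard textbook argument. The paper itself does not prove this lemma at all: its entire proof is the single sentence ``This result is proven in~\citep{Schrijver03},'' treating Ghouila--Houri's characterization as a known black box to be invoked in the proof of Lemma~\ref{lem:totallyunimodular}. Your forward direction via the TU polytope
\[
P = \bigl\{x : \lfloor d/2\rfloor \le B^T x \le \lceil d/2\rceil,\ 0 \le x \le 1\bigr\}
\]
and Hoffman--Kruskal integrality, and your backward direction via a smallest bad minor $C$, an adjugate row $v \in \{0,\pm1\}^k$ with $v^T C = \delta e_1^T$, and the mod-$2$ comparison with the splitting vector $\epsilon$ supported on $\operatorname{supp}(v)$, are precisely the classical route found in Schrijver's text. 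The points you flag as delicate---that minimality forces all $(k-1)$-minors into $\{0,\pm1\}$ so $v$ is genuinely a $\{0,\pm1\}$ vector, and that $\epsilon$ and $v$ agree on support so the parity transfer $w \equiv v^T C \pmod 2$ goes through---are handled correctly, as is the final case split on the parity of $\delta$.
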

\begin{proof}
  This result is proven in~\citep{Schrijver03}. 
\end{proof}
\begin{proofof}{Lemma \ref{lem:totallyunimodular}}
The constraints of the linear program (\ref{eq:clustermatching})
are as follows:
\begin{align}
  \begin{aligned}
    &\sum_{k=1}^{K_{t-1}} c_{lk} \leq 1, \, \, 1 \leq j \leq  \left|\clusa_t\right|\\
     &\sum_{l=1}^{\left|\clusa_t\right|} c_{lk} \leq 1, \, \, 1 \leq k \leq K_{t-1}\\
     &c \geq 0, \, \,  c \in \mathbb{R}^{\left|\clusa_{t}\right|\times K_{t-1}}.
  \end{aligned}
\end{align}
To show that this set of constraints has a totally unimodular
matrix, we use Lemma \ref{lem:totuni}. Given any subset $R$ of
the constraints, let $R'_+$ consist of the coefficient vectors from all constraints of the form
$\sum_{l=1}^{\left|\clusa_t\right|} c_{lk} \leq 1$, and
let $R'_-$ consist of the coefficient vectors from all constraints of the form $\sum_{k=1}^{K_{t-1}} c_{lk} \leq 1$. Then since
every variable $c_{lk}$ has a $1$ coefficient in exactly one constraint
of the former type and a $1$ coefficient in exactly one constraint of the latter
type, 
\begin{align}
    \left[ \sum_{r \in R'_+} r\right] - \left[\sum_{r \in R'_-}r\right] \equiv
    \tilde{r} \in \{0, 1, -1\}^{1\times N}.
\end{align}
Then, let $R''_+$ consist of the rows in $R$ corresponding to the individual
nonnegativity constraints $c_{lk} \geq 0$ in which $\tilde{r}$ has a 
$-1$ or $0$ entry in the column for $c_{lk}$, and let $R''_-$ be the remaining
nonnegativity constraint rows. Finally, let $R_+ = R'_+\cup R''_+$, and let
$R_- = R'_-\cup R''_-$. With this setting of $R_+/R_-$, the condition 
in Lemma \ref{lem:totuni} holds, and therefore the constraint matrix is totally unimodular.
\end{proofof}
\begin{proofof}{Theorem \ref{thm:clustermatching}} By adding slack variables
  $s_l = 1-\sum_{k=1}^{K_{t-1}} c_{lk}$, the optimization
  (\ref{eq:clustermatching}) can be rewritten in the following form:
   \begin{align}
     \begin{aligned}\label{eq:cmatch2}
    c^\star = \argmin_{c, s} \quad & \sum_{l=1}^{\left|\clusa_t\right|}
    \left(s_l m_l 
    + \sum_{k=1}^{K_{t-1}} c_{lk} m_{lk}\right)\\
     \mathrm{s.t.}\quad & \sum_{k=1}^{K_{t-1}} c_{lk} + s_l= 1, \, \, 1 \leq l \leq
     \left|\clusa_t\right|\\
     &\sum_{k=1}^{\left|\clusa_t\right|} c_{lk} \leq 1, \, \, 1 \leq k \leq
     K_{t-1}\\
     &c, s\geq 0, \quad c \in \mathbb{R}^{\left|\clusa_{t}\right|\times K_{t-1}},
     \quad s \in
     \mathbb{R}^{\left|\clusa_t\right|}
  \end{aligned}
  \end{align}
  where $m_l$ is the cost of making temporary cluster $l$ a new cluster, and
  $m_{lk}$ is the cost of reinstantiating
  old cluster $k$ with temporary cluster $l$,
    \begin{align}
      \begin{aligned}
      m_l &= \lambda - \frac{1}{n_l}\sum_{i,
      j\in\indices_r}\krnmat{Y}{Y}_{ij}\\
m_{lk} &= Q\dtk +
  \frac{\gamma_{kt}n_{l}\krnmat{\Phi}{\Phi}_{kk}}{\gamma_{kt}+n_{l}}
    -\!\sum_{i\in\indices_{l}}\!\frac{2\gamma_{kt}\krnmat{Y}{\Phi}_{ik}+\sum_{j\in\indices_{l}}\krnmat{Y}{Y}_{ij}}{\gamma_{kt}+n_{l}}.
      \end{aligned}
    \end{align}
However, by Lemma \ref{lem:totallyunimodular}, the optimization
(\ref{eq:clustermatching}) has a totally unimodular constraint matrix, and as such it
produces an integer solution.  Thus, the constraints 
$c, s\geq 0, \, c \in \mathbb{R}^{\left|\clusa_{t}\right|\times K_{t-1}}, 
\, s \in \mathbb{R}^{\left|\clusa_t\right|}$ in (\ref{eq:cmatch2}) can be replaced with integer constraints
$c \in \{0, 1\}^{\left|\clusa_{t}\right|\times K_{t-1}}, \, s \in \{0, 1\}^{\left|\clusa_t\right|}$.
By inspection of $m_l$ and $m_{lk}$, the resulting integer optimization directly minimizes (\ref{eq:krnmin}) with
respect to matchings between temporary clusters $l=1, \dots, \left|\clusa_t\right|$
and old/new clusters in the current timestep $t$.
\end{proofof}

\vskip 0.2in
\bibliographystyle{plainnat}
\bibliography{main}

\end{document}